\documentclass[12pt]{article}






\usepackage{hyperref}       
\usepackage{url}            
\usepackage{booktabs}       
\usepackage{amsfonts}       
\usepackage{nicefrac}       
\usepackage{microtype}      
\usepackage{xcolor}         
\usepackage{algorithmic}
\usepackage[ruled,vlined]{algorithm2e}
\usepackage{amsthm}
\usepackage{bbm}
\usepackage{graphicx}
\usepackage{subfigure}
\usepackage{float}

\usepackage{bbm}
\usepackage[margin=1in]{geometry}
\usepackage{amsmath}
\usepackage{thmtools}
\usepackage{lipsum}

\DeclareMathOperator{\1}{\mathbbm{1}}
\DeclareMathOperator{\Ex}{\mathbb{E}}
\DeclareMathOperator{\Prob}{\mathbb{P}}
\DeclareMathOperator{\F}{\mathcal{F}}
\DeclareMathOperator{\Hp}{\mathcal{H}}

\DeclareMathOperator*{\argmax}{arg\,max}

\newtheorem{theorem}{Theorem}
\newtheorem{lemma}[theorem]{Lemma}
\newtheorem{proposition}[theorem]{Proposition}

\theoremstyle{remark}

\declaretheoremstyle[
spaceabove=\topsep, spacebelow=\topsep,
headfont=\normalfont\bfseries,
notefont=\bfseries, notebraces={}{},
bodyfont=\normalfont\itshape,
postheadspace=0.5em,
name={\ignorespaces},
numbered=no,
headpunct=.]
{mystyle}
\declaretheorem[style=mystyle]{named}
\usepackage[numbers]{natbib}
\title{Batched Thompson Sampling}

%

\author{%
	Cem Kalkanl\text{\i} and Ayfer \"{O}zg\"{u}r \\
	Department of Electrical Engineering\\
	Stanford University\\
	\texttt{\{cemk, aozgur\}@stanford.edu} \\
}
\bibliographystyle{IEEEtranN}
\date{}
\begin{document}
	
	\maketitle
	
	\begin{abstract}
		We introduce a novel anytime Batched Thompson sampling policy for multi-armed bandits where the agent observes the rewards of her actions and adjusts her policy only at the end of a small number of batches. We show that this policy simultaneously achieves a problem dependent regret of order $O(\log(T))$ and a minimax regret of order $O(\sqrt{T\log(T)})$  while the number of batches can be bounded by $O(\log(T))$ independent of the problem instance over a time horizon $T$. We also show that in expectation the number of batches used by our policy can be bounded by an instance dependent bound of order $O(\log\log(T))$.  These results indicate that Thompson sampling maintains the same performance in this batched setting as in the case when instantaneous feedback is available after each action, while requiring minimal feedback. These results also indicate that Thompson sampling performs competitively with recently proposed algorithms tailored for the batched setting. These algorithms
		optimize the batch structure for a given  time horizon $T$ and prioritize exploration in the beginning of the experiment to eliminate suboptimal actions. We show that Thompson sampling combined with an adaptive batching strategy can achieve a similar performance without knowing the time horizon $T$ of the problem and without having to carefully optimize the batch structure to achieve a target regret bound (i.e. problem dependent vs minimax regret) for a given $T$.

	\end{abstract}
	\section{Introduction}
	
		The multi-armed bandit problem models the scenario where an agent plays repeated actions and observes rewards associated with her actions. The agent aims to accumulate as much reward as possible, and consequently, she has to balance between playing arms that generated high rewards in the past, i.e. exploitation, and selecting under-explored arms that could potentially return better rewards, i.e. exploration. In the ideal scenario, the agent can adjust her policy once she receives feedback, e.g a reward, before the next action instance; however, many real-world applications limit the number of times the agent can interact with the system. For example, in medical applications \cite{thompson1933likelihood}, many patients are treated simultaneously in each trial, and the experimenter has to wait for the outcome of one before planing the next trial. In online marketing \cite{schwartz2017customer}, there may be millions of responses per second, and as a result, it is not feasible for the advertiser to update her algorithm every time she receives feedback. 
	
	Recently, \citet{perchet2016batched} proposed to model this problem as the batched multi-armed bandits. Here, the experiment of duration $T$ is split into a small number of batches and the agent does not receive any feedback regarding the rewards of its actions until the end of each batch. For the two-armed bandit problem, they proposed a general class of batched algorithms called explore-then-commit (ETC) policies, where the agent plays both arms the same number of times until the terminal batch and commits to the better performing arm in the last round unless the sample mean of one arm sufficiently dominates the other in earlier batches. They show that this algorithm achieves the optimal problem-dependent regret $O(\log(T))$ and the optimal minimax regret $O(\sqrt{T})$ matching the performance in the classical case where the agent receives instantaneous feedback about her actions, by using only $\Omega(\log(T/\log(T)))$ and $\Omega(\log\log(T))$ batches respectively. Their algorithm takes the time horizon $T$ and divides it into a fixed number of batches before the experiment where the specific batch structure is tuned to the target performance criteria, i.e. problem-dependent or minimax performance. \citet{gao2019batched} later generalized this result to the setting where the agent had more than two arms to choose from and she could adaptively adjust the batch sizes based on the past data. Their algorithm, called BaSE, is similar to the ETC algorithm in that in each batch the agent plays each of the actions in a set of remaining actions in a round robin fashion, and eliminates the underperforming arms at the end of each batch. They showed that this algorithm required $O(\log(T))$ and $O(\log\log(T))$ number of batches to achieve optimal problem-dependent and minimax regret with batching strategies  specific to each objective. More recently several other batched algorithms appeared in the context of asymptotic optimality \cite{jin2020double}, stochastic and adversarial bandits \cite{esfandiari2019regret}, and linear contextual bandits \cite{han2020sequential,ruan2020linear,ren2020dynamic,ren2020batched}, where the authors provided optimal algorithms in their respective settings. We note that there are also some earlier algorithms developed in the context of the classical bandit setting or bandits with switching cost that even-though not specially developed for the batched setting can be applied in a batched fashion \cite{auer2002finite,cesa2013online}.
	
	In this paper, we aim to study whether the Thompson sampling, an algorithm that has been developed in 1933 \cite{thompson1933likelihood} and successfully applied to a broad range of online optimization problems \cite{chapelle2011empirical,schwartz2017customer} to date can achieve a similar performance in the batched setting. In the Thompson sampling algorithm, the agent chooses an action randomly according to its likelihood of being optimal,	and after receiving feedback, i.e. observing rewards, updates its beliefs about the optimal action. The performance of Thompson sampling has been thoroughly analyzed  in the literature \cite{kaufmann2012thompson,korda2013thompson,abeille2017linear,agrawal2017near, russo2014learning,russo2016information} and is known to achieve the optimal problem-dependent and minimax regret in the classical case. Our goal is to understand whether it can be combined with an adaptive batching strategy and maintain its regret performance when allowed to update its beliefs only at the end of a small number of batches.  Note that the earlier algorithms developed specifically for the batched setting \cite{perchet2016batched,gao2019batched} heavily prioritize exploration in the initial batches to eliminate the possibility that a suboptimal arm is played in the final exponentially larger batches, while Thompson sampling inherently balances between exploration and exploitation by randomly sampling actions according to their probability of being optimal. 
	
Our main contribution in this paper is to show that Thompson sampling, combined with a novel adaptive batching scheme, achieves the optimal problem-dependent performance $O(\log(T))$ and at the same time a minimax regret of order $O(\sqrt{T\log(T)})$ by using $O(\log(T))$ batches independent of the problem instance. This performance is achieved simultaneously by a single strategy without the need to tune it for the target criteria, i.e. problem-dependent or minimax regret.  Our strategy is  an anytime strategy; it operates without the knowledge of time horizon $T$, unlike most of the previously mentioned batched algorithms where $T$ is used both in action selection and the optimization of the batch structure.  We note that the policies designed for minimizing the problem-dependent regret can indeed be turned into anytime algorithms while retaining their $O(\log(T))$ regret and batch complexities with the help of the so called doubling trick in \cite{besson2018doubling}, but the same extension does not hold for minimax policies. This is because even with the best known doubling schemes (exponential or geometric), the minimax policies  either suffer a regret significantly larger  than $\sqrt{T}$ or have their batch complexity increase to $\Omega(\log(T))$ (exponential doubling leads the first conclusion and geometric doubling leads to the second). This implies that our anytime Thompson sampling strategy matches the batch size of $O(\log(T))$ needed for these anytime extensions. We also develop a problem dependent bound on the expected number of batches used by our strategy which is $O(\log\log(T))$. This shows that while our strategy uses $O(\log(T))$ batches in the worst case, similar to previous algorithms, in a given instance of the problem with fixed reward distributions we only need  $O(\log\log(T))$ batches on average. To the best of our knowledge, previous algorithms lack such a refined guarantee on the batch complexity.

		\section{Problem Formulation}\label{prelim}
		\subsection{Notations}
		We denote the natural logarithm as $\log(\cdot)$ while a logarithmic function of base $a>1$ is $\log_a(\cdot)$. For the non-negative sequences of $\{a_n\}$ and $\{b_n\}$, $a_n=O(b_n)$ if and only if $\limsup_{n\rightarrow\infty}\frac{a_n}{b_n}<\infty$ and $a_n=\Omega(b_n)$ if and only if $b_n=O(a_n)$. We also denote by $Q(\cdot)$ the probability of a standard normal random variable $X$ being bigger than a certain threshold $x$, i.e. $Q(x)=\Prob(X\geq x)$ for any $x\in\mathbb{R}$. Finally $\1(\cdot)$ is defined as the indicator function.
		\subsection{Batched Multi-Armed Bandit}\label{setting}
		We consider the batched multi-armed bandit setting. Here there are $K$ arms, where each consecutive pull of the $i^{th}$ arm produces bounded i.i.d. rewards $\{Y_{i,t}\}_{t=1}^\infty$ such that
		\begin{align*}
			Y_{i,1}&\in [0,1],\\
			\Ex[Y_{i,1}]&=\mu_i\in\mathbb{R}.\\
		\end{align*}
		These mean rewards $\{\mu_i\}_{i=1}^K$ are assumed to be deterministic parameters unknown to the agent, whose goal is to accumulate as much reward as possible by repeatedly pulling these arms. Therefore, at each time instance $t$, the agent plays an arm $A_t\in\{1,2,...,K\}$ and receives the reward $Y_{A_t,t}$. Since she can only act causally and does not know $\{\mu_i\}_{i=1}^K$, she can only use the past observations, $\Hp_t=\{A_1,Y_{A_t,1},...,A_t,Y_{A_t,t}\}$ where $\Hp_{0}=\emptyset$, to select the next action $A_{t+1}$.
		
		In this paper, we study the batched version of this multi-armed bandit problem, where the agent has to play these arms in batches and can only incorporate the feedback from the system, i.e. her rewards, into her algorithm at the end of a batch. In other words, there are batch end points $0=T_0<T_1<...$, and the actions the agent plays in the $j^{th}$ batch  $[T_{j-1}+1,T_j]$ as well as the size of the batch itself, i.e. $T_j-T_{j-1}$, can depend only on the information present in $\Hp_{T_{j-1}}$ for any $j\in\mathbb{Z}^+$ and some external randomness that is independent of the system. Note that in this setting, the agent is allowed to adaptively choose the batch sizes depending on the past observations.

		Finally, we let $\mu_1>\mu_i$ for any $i\geq 2$. Given that the agent aims to maximize her cumulative reward, she would only play the first arm if she knew the hidden system parameters $\{\mu_i\}_{i=1}^K$. This observation naturally leads to the cumulative regret term, $R(T)$:
		\begin{equation}
			\Ex[R(T)]=\sum_{i=2}^{K}\Delta_i\Ex[N_i(T)]\label{r9}
		\end{equation}
		where $\mu_1-\mu_i=\Delta_i$ and
		\begin{equation*}
			N_{i}(T)=\sum_{t=1}^{T}\mathbbm{1}(A_t=i)
		\end{equation*}
		for $i\in\{1,2,...,K\}$.

		\section{Batched Thompson Sampling}\label{thompsonsec}
		
		In this section, we describe our Batched Thompson sampling strategy for the batched multi-armed bandit setting described in the previous section. This policy uses Gaussian priors in the spirit of \cite{agrawal2017near} and each arm is sampled randomly according to its likelihood of being optimal under this prior and the observations from previous batches. We combine this strategy with an batching mechanism which relies on the notion of \textit{cycles}. A cycle is defined as follows. The first cycle starts in the beginning of the experiment and ends when the agent selects an action different from the previous actions, i.e. it corresponds to the shortest time interval starting from the beginning of the experiment where two different actions are selected. Then the $j^{th}$ cycle for $j>1$ is defined recursively as the time interval from the end of the $j-1^{th}$ cycle to the first time step where the agent selects an action different from the first action in the cycle. In other words, in each cycle the agent plays exactly two different actions. Consider the following example. Assume that the first seven actions played by the agent are as follows: $A_1=1$, $A_2=1$, $A_3=2$, $A_4=1$, $A_5=3$, $A_6=2$, $A_7=2$. Then the first cycle is $[1,3]$ because only at the third time step the agent selected an arm different from the earlier actions. Similarly, the second cycle is $[4,5]$ where the agent played the first and the third actions. The third cycle that started on $t=6$ has not ended yet because only a single action has been  played so far. We use the concept of a cycle to adaptively decide on the batch size. At the beginning of the $j^{th}$ batch, the Thompson sampling agent checks the number of cycles in which each action $i$ has been played since the beginning of the experiment, denoted $M_i(T_{j-1})$, and sets upper limits $U_{i,j}= \max\{1,\lceil\alpha\times M_i(T_{j-1}) \rceil \}$ for the cycle count of each action. Here $\alpha>1$ is a batch growth factor to be chosen. Throughout the $j^{th}$ batch, the agent employs Thompson sampling, and at the end of each cycle checks whether or not the number of cycles in which a certain arm has been selected since the beginning of the experiment has reached its upper limit $U_{i,j}$ set for the current batch.  The batch ends if there is one such action hitting its upper limit. After the $j^{th}$ batch, the agent observes the rewards of its actions and repeats the same process in the next batch. 
		
		We introduce the following notation to denote the beginning and end of the $k^{th}$ cycle, $C_{b,k}$ and $C_{e,k}$  respectively:
		\begin{equation*}
			C_{b,k} =
			\left\{
			\begin{array}{ll}
				1  & \mbox{if } k=1 \\
				C_{e,k-1}+1 & \mbox{if } k>1
			\end{array}
			\right.
		\end{equation*}
		and
		\begin{equation*}
			C_{e,k}=\min\{t\in \mathbb{Z}^+|A_t\neq A_{t-1}\text{ and }t>C_{b,k}\}
		\end{equation*}
		for any positive integer $k$. As can be seen from these definitions, the interval $[C_{b,k},C_{e,k}]$ describes the $k^{th}$ cycle. We also define $M_i(T)$ and $S_i(T)$ as follows
		\begin{equation*}
			M_i(T)=\sum_{t=1}^T \mathbbm{1}(A_t=i, t=C_{b,k} \text{ or } C_{e,k}\text{ for some }k)
		\end{equation*}
		and 
		\begin{equation*}
			S_i(T)=\sum_{t=1}^T \mathbbm{1}(A_t=i, t=C_{b,k} \text{ or } C_{e,k}\text{ for some }k)Y_{i,t}.
		\end{equation*}
		Here $M_i(T)$ denotes the number of cycles in which the $i^{th}$ action has been selected, while $S_i(T)$ is the sum of rewards the agent received from playing the $i^{th}$ action at either the beginning or the end of a cycle over the duration $T$ of the experiment. 
		Note that whether the condition $\{t=C_{b,k} \text{ or } C_{e,k}\text{ for some }k\}$ is satisfied or not can be verified by checking the actions taken until the time step $t$, i.e. $\{A_j\}_{j=1}^t$. We also define $b(t)=\max\{j\in\mathbb{Z}_{\geq 0}|t-1\geq T_j\}$ as the index of the last batch, and 
		$B(t)=\min\{j\in\mathbb{Z}^+|t\leq T_j\}$ as the batch index of the $t^{th}$ time step.
		We provide a pseudo-code for our Batched Thompson Sampling policy in Algorithm \ref{algo}.

		\begin{algorithm}[H]\label{algo}
			\SetAlgoLined
			\textbf{Input:} Batch growth factor $\alpha>1$, Gaussian variance $\sigma^2$\\
			\textbf{Initialization:} $t=1$, $M_i(0)=0$, $S_i(0)=0$, $U_{i,1}=1$, $j=1$, $T_0=0$.\\
			\While{Experiment Run}{
				\textbf{Sample for Each Arm:} $\theta_i(t)\sim\mathcal{N}(\frac{S_i(T_{j-1})}{1+M_i(T_{j-1})},\frac{\sigma^2}{1+M_i(T_{j-1})})$\\
				\textbf{Play an Arm:} $A_t=\argmax_{i} \theta_i(t)$\\
				\textbf{Update the Pull Count:} $M_{i}(t)\gets M_i(t-1)+\mathbbm{1}(A_t=i, t=C_{b,k} \text{ or } C_{e,k}\text{ for some }k)$
				
				\If{$t=C_{e,k} \text{ for some k } \textbf{\&}\text{ } M_i(t)=U_{i,j} \text{ for some i}$}{
					\textbf{End the Current Batch:} $T_j=t$\\
					\textbf{Receive the Rewards:} $\{Y_{A_t,t}\}_{t=T_{j-1}+1}^{T_j}$\\
					\textbf{Update the Cumulative Rewards for Each Arm:}\\
					$S_i(T_j)=S_i(T_{j-1})+\sum_{l=T_{j-1}+1}^{T_j} \mathbbm{1}(A_l=i, l=C_{b,k} \text{ or } C_{e,k}\text{ for some }k)Y_{i,l}$\\
					\textbf{Update the Upper Limites for the Cycle Counts: }$U_{i,j+1}= \max\{1,\lceil\alpha\times M_i(T_j)\rceil \}$\\	
					\textbf{Update the Batch Index:} $j\gets j+1$\\
				}
				\textbf{Update the Time Index:} $t\gets t+1$\\
			}
			\caption{Batched Thompson Sampling}
		\end{algorithm}
		Note that in  Algorithm \ref{algo}, the posterior distribution from which each arm is selected depends only on $\{M_i(T_{b(t)}),S_i(T_{b(t)})\}_{i=1}^K$, that is we only use the rewards from the first and last action selected in each cycle and ignore the rest of the observed rewards. This is to simplify the analysis in the following sections. However, we can also apply the algorithm by incorporating all the observed rewards, which in general can yield better performance while still maintaining the same batch structure. In that case, $\theta_i(t)$ for any $t$ in the $j^{th}$ batch is drawn instead as 
		\begin{equation*}
			\theta_i(t)\sim\mathcal{N}\Big(\frac{\sum_{t=1}^{T_{j-1}}\1(A_t=i)Y_{i,t}}{1+N_i(T_{j-1})},\frac{\sigma^2}{1+N_i(T_{j-1})}\Big).
		\end{equation*}

	\section{Main Results}
	In this section, we state the main results of our paper. We start with the regret performance of Batched Thompson sampling.

	\begin{theorem}\label{thm1}
		Consider the batched multi-armed bandit setup described in Section \ref{prelim}. If $T\geq 2$ and the batch growth factor $\alpha$ satisfies $\frac{5\sigma^2}{4}\geq\alpha$, then Batched Thompson sampling obeys the following inequalities 
		\begin{equation}
			\Ex[N_i(T)]\leq C_1\alpha\sigma^2\frac{\log(T)}{\Delta_i^2}\label{r3}
		\end{equation}
		for any $i\geq 2$, which lead to 
		\begin{equation}
			\Ex[R(T)]\leq C_1\alpha\sigma^2\sum_{i=2}^K\frac{\log(T)}{\Delta_i}\label{r4},
		\end{equation}
		and
		\begin{equation}
				\Ex[R(T)]\leq C_2\sigma\sqrt{\alpha KT\log(T)}\label{r6}
		\end{equation}
		where $C_1, C_2\geq 1$ are absolute constants independent of the system parameters.
	\end{theorem}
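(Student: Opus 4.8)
The plan is to prove the per-arm bound \eqref{r3} first, since \eqref{r4} is then immediate from the regret identity \eqref{r9} and \eqref{r6} follows from \eqref{r3} by a standard balancing argument. To bound $\Ex[N_i(T)]$ for a fixed suboptimal arm $i$, I would separate the total pull count into the cycle-boundary pulls, which are exactly what $M_i(T)$ counts, and the interior pulls that occur in the middle of a run. The point of working with cycles is that within a single batch the sampling laws $\theta_{i'}\sim\mathcal{N}\big(\tfrac{S_{i'}(T_{j-1})}{1+M_{i'}(T_{j-1})},\tfrac{\sigma^2}{1+M_{i'}(T_{j-1})}\big)$ are frozen, so the actions $A_t$ are i.i.d.\ arg-max draws; conditioned on a cycle starting with arm $i$, the run of arm $i$ then has a geometric length with mean $1/(1-p_{i,j})$, where $p_{i,j}$ is the probability of drawing arm $i$ as the arg-max in batch $j$. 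The first step is thus to argue that $p_{i,j}$ stays bounded away from $1$ for a suboptimal arm (it equals $1/K$ at initialization, and can be large only on the low-probability event that arm $i$'s frozen posterior dominates that of arm $1$), so that expected run lengths are $O(1)$ and $\Ex[N_i(T)]=O(\Ex[M_i(T)])$.

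The heart of the argument is to show $\Ex[M_i(T)]\le C\alpha\sigma^2\log(T)/\Delta_i^2$, for which I would adapt the Gaussian Thompson sampling analysis of \citet{agrawal2017near} to the cyclic setting. Fixing thresholds $\mu_i<x_i<y_i<\mu_1$, a cycle can register arm $i$ only if its posterior sample beats those of the other arms, and I would split this into three regimes: (a) the cycle-based empirical mean $S_i(t)/M_i(t)$ exceeds $x_i$; (b) $\theta_i$ exceeds $y_i$ while the empirical mean stays below $x_i$; and (c) arm $1$ is under-sampled so that $\theta_1$ falls below $y_i$. Regime (a) is handled by a concentration (Hoeffding / self-normalized) bound on $S_i$; this is legitimate precisely because $S_i$ aggregates only the first and last reward of each cycle, making it a sum of fresh i.i.d.\ rewards indexed by the count $M_i$, and it contributes $O(1)$ once $M_i$ exceeds a constant multiple of $\sigma^2/\Delta_i^2$. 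Regime (b) is controlled by Gaussian anti-concentration of the posterior of variance $\sigma^2/(1+M_i)$ through the tail function $Q(\cdot)$, and this is what forces $M_i=O(\sigma^2\log(T)/\Delta_i^2)$ before arm $i$ ceases to be sampled. Regime (c) is treated by the usual device of relating the probability that arm $1$ is sampled to the probability that arm $i$ is sampled and summing the resulting telescoping series.

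The step I expect to be the main obstacle, and where the factor $\alpha$ enters, is controlling the interaction between the within-batch frozen posterior and the adaptive cycle caps $U_{i,j}=\max\{1,\lceil\alpha M_i(T_{j-1})\rceil\}$. Because the posterior is not updated until the batch ends, a single batch whose starting posterior has not yet concentrated can keep registering arm-$i$ cycles all the way up to its cap, i.e.\ multiply the running count $M_i$ by $\alpha$ in one shot. I would therefore argue that in the batch where $M_i$ first crosses the concentration threshold $\sigma^2\log(T)/\Delta_i^2$ the count overshoots by at most the factor $\alpha$, while once arm $i$'s posterior has concentrated the expected number of arm-$i$ cycles contributed by all later batches is only $O(1)$ (each subsequent step samples arm $i$ with probability $O(1/T)$); combining these gives the $\alpha$ multiplier in \eqref{r3}. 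The hypothesis $\tfrac{5\sigma^2}{4}\ge\alpha$ is what I expect to use to guarantee that the posterior variance is large enough relative to the growth factor for the anti-concentration estimate in regime (b) and the overshoot bound to close simultaneously.

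It remains to assemble the two regret bounds. Bound \eqref{r4} is immediate, since \eqref{r9} and \eqref{r3} give $\Delta_i\Ex[N_i(T)]\le C_1\alpha\sigma^2\log(T)/\Delta_i$ for each $i\ge2$. For the minimax bound \eqref{r6} I would use $\Ex[N_i(T)]\le\min\{T,\,C_1\alpha\sigma^2\log(T)/\Delta_i^2\}$, split the suboptimal arms at a gap threshold $\Delta$, bound the contribution of arms with $\Delta_i>\Delta$ by \eqref{r3} as $(K-1)C_1\alpha\sigma^2\log(T)/\Delta$ and the contribution of arms with $\Delta_i\le\Delta$ by $\Delta\sum_{i}\Ex[N_i(T)]\le\Delta T$, and then optimize over $\Delta=\sqrt{(K-1)C_1\alpha\sigma^2\log(T)/T}$ to obtain $\Ex[R(T)]=O(\sigma\sqrt{\alpha KT\log(T)})$.
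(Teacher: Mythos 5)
Your outline is essentially the paper's proof: the reduction of \eqref{r4} to \eqref{r9} plus \eqref{r3} and the gap-threshold balancing for \eqref{r6} are verbatim the paper's argument (the paper splits at $\Delta_i\lessgtr\sigma\sqrt{\alpha K\log(T)/T}$ exactly as you do), and your core mechanism for \eqref{r3} --- frozen within-batch posteriors making runs of a fixed arm geometric, an Agrawal--Goyal-style regime split with the ``arm $1$ undersampled'' case handled by relating the probability of sampling arm $i$ to that of sampling arm $1$, and the factor $\alpha$ entering through the cap $U_{i,j}$ letting the cycle count overshoot the concentration threshold by at most $\alpha$ --- is precisely the structure of the paper's decomposition \eqref{p1}--\eqref{p6} and its stopping-time analysis of the runs $[\tau_{b,k},\tau_{e,k}]$.

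Two steps you mark as routine are where the paper's real technical work lives, and one of them would fail as literally phrased. First, ``$p_{i,j}$ stays bounded away from $1$'' is false pathwise: the frozen probability of drawing a suboptimal arm is a random variable that can be arbitrarily close to $1$ when arm $i$'s empirical mean got lucky, so the expected run length is $\Ex[1/(1-p_{i,j})]$ and one must prove integrability of this inverse, not boundedness of $p_{i,j}$. The paper does this via inverse moments of the posterior tail, $\Ex[1/Q^2(\cdot)]\leq C$ at the cycle-boundary stopping times $\hat\tau_{i,j}$ (Proposition \ref{prop2}, and Lemma \ref{lem2} in the $K=2$ warm-up), which in turn rests on a tail bound whose variance proxy is inflated by $\alpha$ because $M_i$ at the last batch endpoint can lag the current cycle index $j$ by a factor $\alpha$ (Lemma \ref{lem3}); this is exactly where the hypothesis $\frac{5\sigma^2}{4}\geq\alpha$ is consumed --- it pushes the tail exponents to $128/125$ and $16/15$ so the inverse-moment integrals converge --- rather than any anti-concentration consideration. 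Second, your Hoeffding step on $S_i$ is not immediate: whether a time is a cycle boundary, and where batches end, are determined by the action sequence, which depends on past rewards, so the registered rewards are sampled at adaptively chosen times; the paper resolves this with the supermartingale of Lemma \ref{lem1} (built in a filtration revealing rewards only at batch endpoints) plus optional stopping, and your parenthetical ``self-normalized'' is the right instinct but needs that construction to be made rigorous. Finally, note that a constant per-cycle bound alone does not suffice for the arm-$1$ side, where up to $T+1$ cycles must be summed; the paper needs the refined estimate $\Ex[(1/Q(\cdot)-1)^2]\leq C/T$ for cycle indices $j>1024\alpha\sigma^2\log(T)/\Delta_i^2$ (Proposition \ref{prop3}), which is the rigorous form of your ``$O(1/T)$ per step after concentration'' remark, proved at the stopping times rather than per time step.
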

	We provide the proof of this theorem for the special case of $K=2$, $\alpha=2$, $\sigma^2=1$ in Section \ref{pthm1}, and defer the proof of the general version to the appendix. 
	
	Theorem \ref{thm1} states that Batched Thompson sampling achieves $O(\log(T)\sum_{i=2}^K \Delta_i^{-1})$ problem-dependent regret and $O(\sqrt{KT\log(T)})$ minimax regret, which match the asymptotic lower bound of $\Omega(\log(T))$ \cite{lai1985asymptotically}  and the minimax lower bound of $\Omega(\sqrt{KT})$ \cite{audibert2009minimax} up to a $\sqrt{\log(T)}$ term respectively.
	
	We next compare our bounds with the results on classical Thompson sampling \cite{agrawal2017near}. As described in the previous section,  we use Gaussian priors for Thompson sampling following the work of \citet{agrawal2017near}. This is one of the two priors considered in that work for Thompson sampling: beta priors and Gaussian priors. For Thompson sampling with Beta priors, \citet{agrawal2017near} provides two different bounds on the expected regret:
	\begin{equation}
		\Ex[R(T)]\leq(1+\epsilon)\sum_{i=2}^{K}\frac{\log(T)}{d(\mu_i,\mu_1)}\Delta_i+O(\frac{K}{\epsilon^2}),
		\label{r7}
	\end{equation}
	for any $\epsilon\in(0,1)$, and
	\begin{equation}
		\Ex[R(T)]\leq O(\sqrt{KT\log(T)}),\label{r8}
	\end{equation}  
	where $d(\mu_i,\mu_1)=\mu_i\log(\frac{\mu_i}{\mu_1})+(1-\mu_i)\log(\frac{1-\mu_i}{1-\mu_1})$. In addition, they show that  with Gaussian priors the expected regret of the classical Thompson sampling is bounded by $O(\sqrt{KT\log(K)})$ if $T\geq K$. Considering that $d(\mu_i,\mu_1)\geq 2\Delta_i^2$ by Pinsker's inequality, \eqref{r7} provides a tighter performance guarantee than \eqref{r4} in terms of the dependence on the reward distributions, but we note that the minimax performance of Batched Thompson sampling, \eqref{r6}, matches the performance of classical Thompson sampling when the agent receives instantaneous feedback about rewards and can update its policy after each action.  These results show that  Batched Thompson sampling, apart from the dependence on the reward distributions in the problem dependent bound, matches the regret performance in the classical case.
	
	We note that the regret bounds in the theorem depend on the batch growth factor $\alpha$. The regret increases linearly as $\alpha$ grows bigger; this is not surprising since bigger batch sizes mean fewer updates for Batched Thompson sampling. 
	
	We now present the batch complexity results for our algorithm. 

	\begin{theorem}\label{thm2}
			Consider the batched multi-armed bandit setup described in Section \ref{prelim}. If $T\geq 2$ and $\frac{5\sigma^2}{4}\geq\alpha$, then the batch complexity $B(T)$ of Batched Thompson sampling satisfies the following: 
		\begin{equation}
			B(T)\leq 1+K+K\log_\alpha(\frac{T}{K})\label{r1},
		\end{equation}
	\begin{equation}
		\Ex[B(T)]\leq1+K+\log_\alpha(1+C\alpha\sigma^2\sum_{i=2}^K\frac{\log(T)}{\Delta_i^2}))+\sum_{i=2}^K\log_\alpha(1+C\alpha\sigma^2\frac{\log(T)}{\Delta_i^2})\label{r2},\text{ and}
	\end{equation}
	\begin{equation}
		\Ex[B(T)]\leq1+2C\alpha\sigma^2\sum_{i=2}^K\frac{\log(T)}{\Delta_i^2}\label{r5},
	\end{equation}
where $C$ is an absolute constant independent of the system parameters.
	\end{theorem}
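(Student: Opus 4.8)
The plan is to reduce everything to a deterministic counting argument on the cycle counts $M_i$, and only at the end take expectations via Theorem~\ref{thm1}. The first observation is structural: since the termination test is performed only at cycle ends, every batch consists of a whole number of cycles and each boundary $T_j$ is a cycle end. Call action $i$ a \emph{trigger} of batch $j$ if $M_i(T_j)=U_{i,j}$, and let $n_i$ be the number of batches triggered by $i$; since every completed batch has at least one trigger, the number of completed batches is at most $\sum_i n_i$, so $B(T)\le 1+\sum_{i=1}^K n_i$ (the $+1$ for the unfinished current batch). At a trigger, the rule $U_{i,j}=\max\{1,\lceil\alpha M_i(T_{j-1})\rceil\}$ forces $M_i$ to jump by a factor of at least $\alpha$, and $M_i\ge 1$ already at the first trigger. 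Writing $v_1<v_2<\cdots<v_{n_i}$ for the successive post-trigger values of $M_i$, I get $v_\ell\ge\alpha^{\ell-1}$, which yields simultaneously $n_i\le 1+\log_\alpha m_i^{\mathrm{final}}$ and, since the $v_\ell$ are strictly increasing integers, the cruder $n_i\le m_i^{\mathrm{final}}$, where $m_i^{\mathrm{final}}=M_i(T_{b(T)})$.

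For the worst-case bound \eqref{r1} I would then note that each cycle has length $\ge 2$ and contributes $+1$ to exactly two distinct $M_i$, so at any boundary $\sum_i M_i(T_j)=2\cdot(\#\text{cycles})\le T_j\le T$. Summing $n_i\le 1+\log_\alpha m_i^{\mathrm{final}}$ over the at most $K$ actions ever played and applying AM--GM / concavity of $\log$ gives $\sum_i\log_\alpha m_i^{\mathrm{final}}\le K\log_\alpha\!\big(\tfrac1K\sum_i m_i^{\mathrm{final}}\big)\le K\log_\alpha(T/K)$, which is \eqref{r1} up to the stated additive $1+K$.

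The crux of the expected bounds is controlling the optimal arm $1$, whose raw play count $N_1(T)$ may be of order $T$. This is where counting cycle endpoints rather than plays is essential: because the two endpoints of any cycle are distinct actions, arm $1$ can occupy an endpoint of a cycle only when some suboptimal arm occupies the other endpoint. Counting endpoints yields the key inequality $M_1(T)\le\sum_{i\ge 2}M_i(T)$. Together with $M_i(T)\le N_i(T)$ and Theorem~\ref{thm1}, this gives $\Ex[M_1(T)]\le\sum_{i\ge 2}\Ex[N_i(T)]\le C\alpha\sigma^2\sum_{i\ge 2}\log(T)/\Delta_i^2$ and $\Ex[M_i(T)]\le C_1\alpha\sigma^2\log(T)/\Delta_i^2$ for $i\ge 2$; the hypothesis $\tfrac{5\sigma^2}{4}\ge\alpha$ enters only through this invocation of Theorem~\ref{thm1}.

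Finishing is then routine. For \eqref{r5} I use $n_i\le m_i^{\mathrm{final}}\le M_i(T)$, so $\Ex[B(T)]\le 1+\Ex[M_1(T)]+\sum_{i\ge 2}\Ex[M_i(T)]$; folding $M_1$ into the suboptimal sum via the lemma and applying Theorem~\ref{thm1} gives $1+2C\alpha\sigma^2\sum_{i\ge 2}\log(T)/\Delta_i^2$. For \eqref{r2} I instead use $n_i\le 1+\log_\alpha(1+M_i(T))$, take expectations, and move the expectation inside the logarithm by Jensen applied to the concave map $x\mapsto\log_\alpha(1+x)$; handling $i=1$ with the lemma bound and $i\ge 2$ with Theorem~\ref{thm1} produces exactly the optimal-arm log term $\log_\alpha(1+C\alpha\sigma^2\sum_{i\ge 2}\log(T)/\Delta_i^2)$ plus the suboptimal sum. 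The hard part is the combinatorial lemma $M_1(T)\le\sum_{i\ge 2}M_i(T)$: it is precisely what makes the optimal arm's batch-triggering logarithmic in a quantity we can bound, and it is the reason the algorithm tracks cycle endpoints at all. The only remaining care is the bookkeeping of the additive constants (the unfinished batch, simultaneous triggers, the $0\to 1$ transition, and the concavity edge cases when $T/K$ is small), all of which are absorbed into the stated $1+K$ terms.
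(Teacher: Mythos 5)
Your proposal is correct and follows essentially the same route as the paper's own proof: you count batch triggers per arm ($B(T)-1\le\sum_i k_i$), exploit the geometric growth of $M_i$ between successive triggers to get $k_i\le 1+\log_\alpha(1+M_i)$ together with the crude bound $k_i\le M_i$, use $\sum_i M_i\le T$ (cycles have length at least two and each is charged to exactly two arms) with concavity of $\log_\alpha$ for \eqref{r1}, and invoke the same key inequality $M_1(T)\le\sum_{i\ge 2}M_i(T)$ plus $M_i(T)\le N_i(T)$, Jensen, and \eqref{r3} of Theorem \ref{thm1} for \eqref{r2} and \eqref{r5}. The bookkeeping of the additive $1+K$ terms and the observation that $\frac{5\sigma^2}{4}\ge\alpha$ enters only through Theorem \ref{thm1} also match the paper.
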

	The proof of this theorem is provided in Section \ref{pthm2}.
	
Theorem \ref{thm2} states three different batch complexity guarantees; the first is a deterministic guarantee on the number of batches while the last two bound the number of batches in expectation. If we consider \eqref{r1}, we see that Batched Thompson sampling uses at most $O(K\log(T))$ many batches regardless of the reward distributions. This result and Theorem \ref{thm1} indicate that Batched Thompson sampling matches the regret performance and the batch complexities of optimal problem-dependent batched algorithms developed in \cite{auer2002finite,gao2019batched,esfandiari2019regret}, which also achieve $O(K\log(T))$ problem-dependent regret with $O(\log(T))$ batch complexity. However, compared with the other optimal algorithms, we show that we can further reduce the batch complexity down to $O(K\log\log(T))$ in \eqref{r2}. This is because our batching strategy uses the information it gathers about the system to adaptively decide on the sizes of the batches while most prior batched algorithms use a static batch structure. We note that Algorithm 1 of \citet{esfandiari2019regret} does use an adaptive batching strategy however that strategy appears to be geared towards  obtaining a tighter regret bound rather than reducing the batch complexity.
	
	We note that the bounds on the number of batches in \eqref{r1} and \eqref{r2} diverge to infinity as $\alpha\downarrow 1$. The bound in \eqref{r5} on the other hand decreases with $\alpha$ and can be relevant when $\alpha$ is chosen very small. We note that  if $\alpha<1+\frac{1}{T}$ for a fixed $T$, then Batched Thompson sampling will only allow one cycle per batch throughout the experiment of duration $T$ and the notion of a batch will coincide with the notion of a cycle. In this extremal case with one cycle per batch, \eqref{r5} shows that Batched Thompson sampling will have $O(\log(T))$ batch complexity, while still satisfying the regret bounds stated in Theorem \ref{thm1}. The $O(K\log\log(T))$ bound on the expected batch complexity in \eqref{r2} is enabled by the fact that for larger $\alpha$ we allow for exponentially more cycles in each batch. As the algorithm proceeds and becomes more confident about the system choosing a suboptimal arm becomes less likely and as a result the cycle durations become inherently larger. At the same time, the algorithm allows for exponentially more cycles in each batch. This double batching strategy in our algorithm, via cycles and batches is key to obtain $O(K\log\log(T))$ guarantee in \eqref{r2} with an anytime strategy. Note that the best previously  available guarantee on batch complexity for an anytime batching strategy is $O(\log(T))$.
	
	\section{Experiments}\label{exp}
	
	In this section, we provide some experimental results on the performance of Batched Thompson sampling where we do not skip samples, i.e. the variant mentioned at the end of Section \ref{thompsonsec}, for different values of $\alpha$, $\{1.00001,1.25,1.5,2\}$, and how they perform against normal Thompson sampling under different reward distributions and action counts when time horizon $T=10^5$ and the sampling variance $\sigma^2=1$. We mainly consider four setups: Bernoulli rewards when $K=2$, Figure \ref{fig} (a); Bernoulli rewards when $K=5$, Figure \ref{fig} (b); Gaussian rewards when $K=2$, Figure \ref{fig} (c); Gaussian rewards when $K=5$, Figure \ref{fig} (d). Finally each figure is the result of an experiment averaged over $10^3$ repeats and the average number of batches used throughout the experiment is rounded up to the nearest integer and reported in the parenthesis to the right of the algorithm names on the figures. The figures show that our batching strategy matches the performance of classical Thompson sampling by using roughly $100$ batches over a time horizon of $T=10^5$.
	
	As can be seen from these figures, Batched Thompson sampling achieves almost the same empirical performance as the normal Thompson sampling when we set $\alpha$ small enough so that there is only one cycle per batch, i.e. $\alpha=1.00001$. We also observe that this Batched Thompson sampling version $\alpha=1.00001$, can have a batch count as small as 15. However when $\alpha$ is very small, the problem independent guarantees in \eqref{r1} and \eqref{r2} become very loose and the number of batches can vary more with the reward distributions. This can be partly observed in the figures: for $\alpha=1.00001$, there is larger variation in the average batch complexity across different reward distributions though in all cases the average numbe of batches remain very small. Increasing $\alpha$ leads  to a more stable batch complexity behavior, at the cost of a small multiplicative regret factor; we observe that the batch count almost remains constant for $\alpha=2$ across different reward distributions.

	\begin{figure}[h]
		\centering
		\subfigure[$Y_{1,t}\sim$Bern(0.75), $Y_{2,t}\sim$Bern(0.25)]{\includegraphics[width=0.47\textwidth]{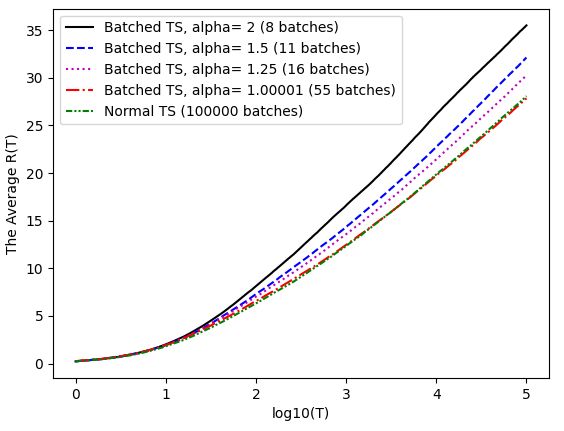}} 
		\subfigure[$Y_{1,t}\sim$Bern(0.75), $Y_{i,t}\sim$Bern(0.25) $2\leq i\leq5$]{\includegraphics[width=0.47\textwidth]{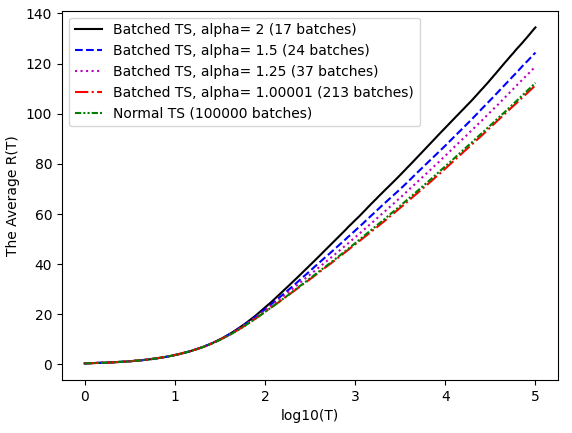}} 
		\subfigure[$Y_{1,t}\sim\mathcal{N}(1,1)$, $Y_{2,t}\sim\mathcal{N}(0,1)$]{\includegraphics[width=0.47\textwidth]{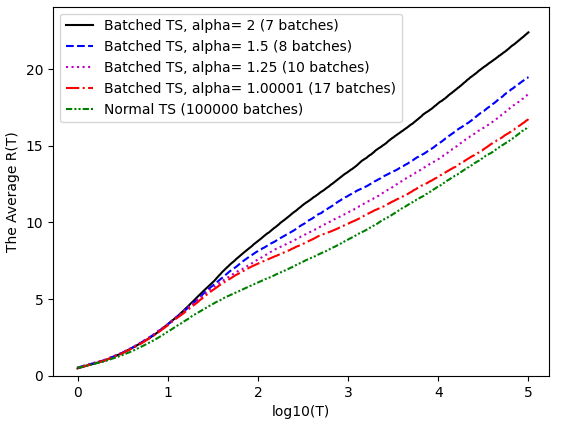}}
		\subfigure[$Y_{1,t}\sim\mathcal{N}(1,1)$, $Y_{i,t}\sim\mathcal{N}(0,1)$ $2\leq i\leq 5$]{\includegraphics[width=0.47\textwidth]{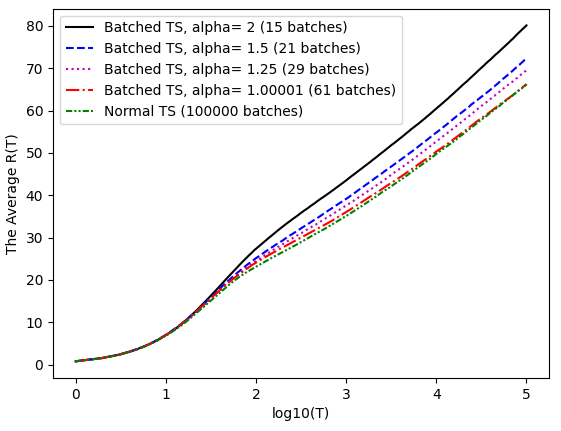}}
		\caption{Empirical Regret Performance of Batched and normal Thompson sampling}
		\label{fig}
	\end{figure}

	\section{Technical Analysis}
	In this section we provide technical proofs for our results. We start with the proof of Theorem \ref{thm1} for a special case and at the end prove Theorem \ref{thm2}.
	\subsection{Proof of Theorem \ref{thm1} when $K=2$, $\alpha=2$, and $\sigma^2=1$}\label{pthm1}

		We first introduce $N_{2,j}(t)$ as the number of times the second arm is pulled if Batched Thompson sampling is employed for $t$ many round with the past knowledge of $\Hp_{T_{j-1}}$. In this case, $N_{2,j}(T_{j}-T_{j-1})=N_2(T_j)-N_2(T_{j-1})$. We know that in the first $T$ rounds, there can be no more than $T$ many batches, and each batch can not last longer than $T$ rounds. As a result, we have the following bound on $N_2(T)$:
		\begin{equation*}
			N_2(T)\leq\sum_{j=1}^{T}\min(N_{2,j}(T_j-T_{j-1}),N_{2,j}(T)).
		\end{equation*}
		Now we first analyze the expected number of times the second is pulled in the first cycle of the $j^{th}$ batch. It is easy to see the time the first arm is selected in this cycle is an upper bound on the number of times the second arm is picked. This observation follows from the fact that if the first action is selected in the first round, then the second is selected only once in the current cycle, if not the time the first arm is selected becomes one more than the number of times the second arm is picked. As a result, conditioned on the past $\Hp_{T_{j-1}}$, the expected number of times the second arm is picked in a single cycle is upper bounded by $\frac{1}{\Prob(A_{T_{j-1}+1}=1|H_{T_{j-1}})}$. Also note that since there are only two actions, each cycle will contain both of them, and in the first $j$ batches, there will be $2^{j-1}$ many cycles by the construction of Algorithm \ref{algo}. This observation means that there are no more than $2^{j-1}$ many identically distributed such cycles in the $j^{th}$ batch, and we have the following bound for any $j$:
		\begin{equation}
			\Ex[N_{2,j}(T_j-T_{j-1})]\leq2^{j-1}\Ex[\frac{1}{\Prob(A_{T_{j-1}+1}=1|H_{T_{j-1}})}]\leq C2^{j-1}\label{ee4}
		\end{equation}
		where $C\geq 2$ is a constant independent of $j$. The last inequality in \eqref{ee4} follows from the following Lemma \ref{lem2} and the fact that in the first batch $\Prob(A_{T_{j-1}+1}=1|H_{T_{j-1}})=\Prob(A_1=1)=\frac{1}{2}$. 
		
		\begin{lemma}\label{lem2}
			If $K=2$, $\alpha=2$, and $\sigma^2=1$, then any $j\geq 2$:
			\begin{align}
				&\Ex\Big[\frac{1}{\Prob(A_{T_{j-1}+1}=1|H_{T_{j-1}})}\Big]\leq C\label{rr1},\\
				&\Prob(A_{T_{j-1}+1}=2)\leq \exp(-\frac{2^{j-4}}{3}\Delta_2^2)\label{rr2},
			\end{align}
		where $C\geq 2$ is a constant independent of $j$. 
		\end{lemma}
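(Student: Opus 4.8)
The plan is to reduce both inequalities to explicit one–dimensional Gaussian computations and then handle the resulting expectations by concentration. First I would record the structural fact for $K=2$: at every cycle end the two played arms are exactly the two cycle endpoints, so each completed cycle increments both $M_1$ and $M_2$ by exactly one. Consequently $M_1(t)=M_2(t)$ at every cycle end, the upper limits $U_{1,j}=U_{2,j}$ always agree, and batch $j$ terminates precisely when the cycle count reaches $\lceil\alpha\, 2^{j-2}\rceil=2^{j-1}$. Hence after batch $j-1$ the cycle count, and therefore $M_i(T_{j-1})$, equals the \emph{deterministic} value $n:=2^{j-2}$ for both arms. In particular $S_i(T_{j-1})$ is a sum of exactly $n$ i.i.d.\ rewards of arm $i$, so the posterior mean is $\hat\mu_i:=\frac{S_i(T_{j-1})}{1+n}=\frac{n}{1+n}\bar Y_i$ with $\bar Y_i$ an average of $n$ i.i.d.\ $[0,1]$ rewards, and both posteriors have common variance $\frac{1}{1+n}$. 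Writing $D:=\hat\mu_1-\hat\mu_2$ and $m:=\sqrt{(1+n)/2}$, the difference $\theta_1-\theta_2$ is Gaussian with mean $D$ and variance $2/(1+n)$, so $\Prob(A_{T_{j-1}+1}=1\mid H_{T_{j-1}})=\Phi(Dm)$ and $\Prob(A_{T_{j-1}+1}=2\mid H_{T_{j-1}})=Q(Dm)$, where $\Phi=1-Q$. I would also record the Hoeffding bound $\Prob(\bar Y_1-\bar Y_2\le\Delta_2-s)\le e^{-ns^2}$, which transfers to $\Prob(D\le-t)\le e^{-nt^2}$ via $\Ex[\bar Y_1-\bar Y_2]=\Delta_2\ge 0$ and $D=\frac{n}{1+n}(\bar Y_1-\bar Y_2)$.

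For \eqref{rr2} I would avoid conditioning and bound the marginal $\Prob(A_{T_{j-1}+1}=2)=\Prob(\theta_2-\theta_1>0)$ directly. The variable $\theta_2-\theta_1=\frac{n}{1+n}(\bar Y_2-\bar Y_1)+(\xi_2-\xi_1)$ is the sum of a bounded empirical part (sub-Gaussian with proxy $\frac{n}{2(1+n)^2}$, since $\bar Y_1-\bar Y_2$ has proxy $\frac{1}{2n}$) and the independent Gaussian sampling noise $\xi_2-\xi_1\sim\mathcal N(0,\frac{2}{1+n})$. Hence $\theta_2-\theta_1$ is sub-Gaussian with mean $-\frac{n}{1+n}\Delta_2$ and variance proxy $V=\frac{n}{2(1+n)^2}+\frac{2}{1+n}$, and the sub-Gaussian tail gives $\Prob(\theta_2-\theta_1>0)\le\exp\!\big(-\tfrac{(n\Delta_2/(1+n))^2}{2V}\big)=\exp\!\big(-\tfrac{n^2\Delta_2^2}{5n+4}\big)$. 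For $j\ge 2$ we have $n=2^{j-2}\ge 1$, so $5n+4\le 9n$ and the exponent is at least $\frac{n\Delta_2^2}{9}=\frac{2^{j-2}\Delta_2^2}{9}$. Since $\frac19>\frac1{12}$ and $\frac{2^{j-2}}{12}=\frac{2^{j-4}}{3}$, this yields exactly \eqref{rr2}.

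For \eqref{rr1} I would bound $\Ex[1/\Phi(Dm)]$ by splitting on the sign of $D$. On $\{D\ge 0\}$ we have $\Phi(Dm)\ge\Phi(0)=\frac12$, so $1/\Phi(Dm)\le 2$ and this part contributes at most $2$. On $\{D<0\}$ the reciprocal can be enormous, but the corresponding event is exponentially rare, and the point is to verify that the rarity wins. Using the Mills-ratio lower bound $\Phi(Dm)=Q(|D|m)\ge\frac{|D|m}{1+|D|^2m^2}\,\phi(|D|m)$ gives $\frac{1}{\Phi(Dm)}\le\sqrt{2\pi}\,\frac{1+|D|^2m^2}{|D|m}\,e^{|D|^2m^2/2}$, and since $m^2=\frac{1+n}{2}$ the blow-up rate is $e^{|D|^2(1+n)/4}$. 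Partitioning $D\in(-1,0)$ and summing, the layer near level $-t$ contributes at most $(\text{poly in }t)\cdot e^{t^2(1+n)/4}\,\Prob(D\le-t)\le (\text{poly})\cdot e^{t^2(1+n)/4}e^{-nt^2}=(\text{poly})\cdot e^{-t^2(3n-1)/4}$. Because $n\ge 1$ makes $3n-1>0$, the series (equivalently the integral $\int_0^\infty(\text{poly})\,e^{-t^2(3n-1)/4}\,dt$) converges to a bound uniform over $n\ge 1$, giving $\Ex[1/\Phi(Dm);\,D<0]\le C'$. Adding the two parts yields \eqref{rr1} with $C=2+C'$.

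The main obstacle is precisely this last integral: one must certify that the exponential growth of $1/\Phi$ as the sampled gap turns negative is strictly dominated by the Hoeffding decay of the probability that the empirical gap is that negative. The inequality $n>\frac{1+n}{4}$ (i.e.\ $3n>1$, valid for all $j\ge 2$) is exactly what breaks the competition in our favor and produces a constant $C$ independent of $j$, $n$, and $\Delta_2$; tracking the polynomial prefactor from the Gaussian lower bound and the shrinkage factor $\frac{n}{1+n}$ in $D$ is the only remaining bookkeeping.
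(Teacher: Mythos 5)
Your proposal is correct, and its skeleton matches the paper's: both arguments exploit the $K=2$ structure to conclude that each completed cycle increments both cycle counts, so $M_1(T_{j-1})=M_2(T_{j-1})=2^{j-2}$ is deterministic, reduce the conditional selection probability to a $Q$-function of $(S_1(T_{j-1})-S_2(T_{j-1}))/\sqrt{2+2^{j-1}}$, and then pit Hoeffding decay of the recorded-reward sums against a Gaussian blow-up. For \eqref{rr2} your computation is essentially identical to the paper's: the same decomposition into an empirical part plus independent sampling noise, the same variance proxy $\frac{n}{2(1+n)^2}+\frac{2}{1+n}$ with $n=2^{j-2}$, and the same exponent $n^2\Delta_2^2/(5n+4)$ (you close with $5n+4\le 9n$, the paper with the slightly looser $\le 12n$; both give \eqref{rr2}). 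For \eqref{rr1} you take a genuinely different execution of the same idea: the paper applies the layer-cake identity \eqref{noneg} to $1/\Prob(A_{T_{j-1}+1}=1\mid H_{T_{j-1}})$, bounds $\Prob(P\le 1/x)$ by a Chernoff bound at $\lambda=4Q^{-1}(1/x)$, and converts via the tail inversion \eqref{tail} $Q^{-1}(1/x)\ge\sqrt{\tfrac43\log x}$ into an integrable $x^{-8/3}$ tail; you instead split on the sign of the posterior-mean gap $D$ and invert $Q$ explicitly through the Mills ratio, integrating over the deviation level $t$ rather than the reciprocal level $x$. These are reparametrizations of the same competition: your margin $n>(1+n)/4$ (i.e.\ $3n-1>0$) is precisely the strict inequality that makes the paper's exponent $8/3$ exceed $1$, and your claim of uniformity in $n$ is right because the blow-up scale $m^2=(1+n)/2$ and the decay rate $(3n-1)/4$ both grow linearly in $n$. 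Your route avoids any asymptotics of $Q^{-1}$ at the cost of explicit Mills-ratio bookkeeping; the paper's route trades that for the abstract threshold $x_0$ in \eqref{tail}.

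Two points need care before your sketch is a complete proof, though neither threatens it. First, your one-line assertion that $S_i(T_{j-1})$ is a sum of exactly $n$ i.i.d.\ rewards with $S_1$ independent of $S_2$ is true but is exactly where the paper spends its induction leading to \eqref{eee7}: the times at which rewards are recorded are determined by the random actions, so one must argue that within each batch the actions depend only on the previous batches' history and fresh sampling noise (hence are independent of the yet-unobserved rewards) and that each batch records a deterministic number of rewards per arm, before Hoeffding can be invoked on the recorded sums. Second, near $D=0^-$ the Mills-ratio prefactor $\frac{1+u^2}{u}$ with $u=|D|m$ diverges as $u\downarrow 0$, so your ``poly in $t$'' bound is not valid on the layer $\{0<|D|m\le 1\}$; there you should use the constant bound $1/\Phi(Dm)\le 1/Q(1)$ instead, after which your peeling over $t$ goes through and yields a constant $C$ independent of $j$, $n$, and $\Delta_2$, as required.
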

		 In addition, we also have:
		\begin{equation}
			\Ex[N_{2,j}(T)]=T\Prob(A_{T_{j-1}+1}=2)\leq T\exp(-\frac{2^{j-4}}{3}\Delta_2^2)\label{ee5}
		\end{equation}
	 	for $j\geq 2$ by the same lemma. The overall analysis shows that for any positive integer $k$, we have:
		\begin{align}
			&\Ex[N_2(T)]\leq\Ex[\sum_{j=1}^{T}\min(N_{2,j}(T_j-T_{j-1}),N_{2,j}(T))]\leq\sum_{j=1}^{k}\Ex[N_{2,j}(T_j-T_{j-1})]+\sum_{j=k+1}^{T}\Ex[N_{2,j}(T)]
			\nonumber\\
			&\leq C(2^{k}-1)+T\sum_{j=k+1}^{T}\exp(-\frac{2^{j-4}}{3}\Delta_2^2)\label{ee6}
		\end{align}
		where the last step follows from \eqref{ee4} and \eqref{ee5}. Let $k$ be the smallest positive integer such that $\frac{2^k}{3}\geq 8\frac{\log(T)}{\Delta_2^2}$. Then we have
		\begin{align*}
			T\sum_{j=k+1}^{T}\exp(-\frac{2^{j-4}}{3}\Delta_2^2)\leq T\sum_{i=0}^{\infty}\exp(-2^i\frac{2^{k-3}}{3}\Delta_2^2)\leq T\sum_{i=0}^{\infty}\exp(-2^i\log(T))
			=\sum_{i=0}^{\infty}\frac{1}{T^{2^i-1}}\leq\sum_{i=0}^{\infty}\frac{1}{T^i}\leq 2
		\end{align*}
		where the last inequality follows from the assumption that $T\geq 2$. This analysis bounds the last summation term in \eqref{ee6}. To bound $C(2^{k}-1)$, note that $k$ is the smallest positive integer bigger than $\log_2(24\frac{\log(T)}{\Delta_2^2})$. Since $\Delta_2^2\leq 1$ and $T\geq 2$, we know $24\frac{\log(T)}{\Delta_2^2}\geq 1$. This analysis shows that $k\leq \log_2(24\frac{\log(T)}{\Delta_2^2})+1=\log_2(48\frac{\log(T)}{\Delta_2^2})$. As a result, $C(2^{k}-1)\leq 48C\frac{\log(T)}{\Delta_2^2}$. The overall analysis shows that $\Ex[R(T)]=\Delta_2\Ex[N_2(T)]\leq 48C\frac{\log(T)}{\Delta_2}+2\Delta_2$ by \eqref{ee6}. This finishes the proof of \eqref{r4}. Finally \eqref{r6} is proven by \eqref{r4} if $\Delta_2>\sqrt{\frac{\log(T)}{T}}$. If not, then $\Ex[R(T)]\leq T\Delta_2\leq \sqrt{T\log(T)}$, and this proves \eqref{r6}.

	\subsection{Proof of Theorem \ref{thm2}}\label{pthm2}

		Let us consider the case where the agent has already employed the Batched Thompson sampling, Algorithm \ref{algo}, for $T$ many steps and denote $i_j\in\{1,2,...,B(T)-1\}$ for $j\in\{1,2,...,k_i\}$ as the indices where $M_i(T_{i_j})=U_{i,i_j}$. Since each batch end point $T_j$ has to satisfy the condition $M_l(T_j)=U_{l,j}$ for some $l$, we have 
		\begin{equation}
			B(T)-1\leq \sum_{i=1}^K k_i.\label{d3}
		\end{equation}
		 By the definition of $U_{i,j}$, we know that $M_i(T_{i_1})=1$. In addition, note that there may be batches in between $i_{j-1}^{th}$ and $i_j^{th}$ ones and the agent may have picked the $i^{th}$ arm while the condition $M_i(T_j)=U_{i,j}$ is not satisfied. These observations lead to $\max\{\alpha M_i(T_{i_{j-1}}),M_i(T_{i_{j-1}})+1\}\leq M_i(T_{i_{j}})$ for $j\geq 2$. The overall analysis shows that if $k_i\geq 1$, then $\alpha^{k_i-1}\leq M_i(T_{i_{k_i}})\leq M_i(T_{B(T)-1})$ due to the fact that $T_{i_{k_i}}\leq T_{B(T)-1}$, which leads to $k_i\leq 1+\log_\alpha(1+M_i(T_{B(T)-1}))$ for any $k_i\geq 0$. In addition, we also have the following trivial bound $k_i\leq  M_i(T_{i_{k_i}})\leq M_i(T_{B(T)-1})$. As a result of these inequalities and \eqref{d3}, we have
		\begin{equation}
			B(T)\leq 1+K+\sum_{i=1}^K \log_\alpha(1+M_i(T_{B(T)-1}))\label{d1}
		\end{equation}
		and 
		\begin{equation}
			B(T)\leq 1+\sum_{i=1}^K M_i(T_{B(T)-1}).\label{d4}
		\end{equation}
		
		First of all, since $\log(\cdot)$ is a concave function, Jensen's inequality and \eqref{d1} lead to
		\begin{equation}
			B(T)\leq 1+K+K\log_\alpha(1+\frac{1}{K}\sum_{i=1}^K M_i(T_{B(T)-1})).\label{d2}
		\end{equation}
		Considering that each cycle has to contain at least two action steps, there can be no more than $\frac{T}{2}$ many cycles in the first $B(T)-1$ batches. In addition, each cycle can only be recorded once by two different actions. This analysis leads to $\sum_{i=1}^K M_i(T_{B(T)-1})\leq T$, which proves \eqref{r1} by \eqref{d2}. To prove \eqref{r2}, we first note that $M_1(T_{B(T)-1})\leq\sum_{i=2}^K M_i(T_{B(T)-1})$ because each cycle of the first arm has to be accompanied by another arm. Since $M_i(T_{B(T)-1})\leq N_i(T)$ for any $i$, we have $B(T)\leq 1+K+\log_\alpha(1+\sum_{i=2}^K N_i(T))+\sum_{i=2}^K\log_\alpha(1+N_i(T))$ by \eqref{d1}, which shows that
		\begin{align}
			\Ex[B(T)]&\leq \Ex[1+K+\log_\alpha(1+\sum_{i=2}^K N_i(T))+\sum_{i=2}^K\log_\alpha(1+N_i(T))]\nonumber\\
			&\leq 1+K+\log_\alpha(1+\sum_{i=2}^K \Ex[N_i(T)])+\sum_{i=2}^K\log_\alpha(1+\Ex[N_i(T)])\nonumber.
		\end{align}
		The last inequality follows from Jensen's inequality. This leads to \eqref{r2} by the fact that $\Ex[N_i(T)]\leq C\alpha\sigma^2\frac{\log(T)}{\Delta_i^2}$ from \eqref{r3}. Finally, the previous analysis also implies that $B(T)\leq 1+2\sum_{i=2}^K N_i(T)$	by \eqref{d4}. This inequality and \eqref{r3} prove \eqref{r5}.

	\bibliographystyle{plainnat}
	\bibliography{sample}

	\newpage
	\appendix
	
	\section{Outline}
	The appendix is organized as follows.
	\begin{enumerate}
		\item Section \ref{tools} states technical tools necessary for our proofs.
		\item Section \ref{PLEM} provides the proof of Lemma \ref{lem2}, which is stated in Section \ref{pthm1}.
		\item In Section \ref{relevant}, we present couple propositions and lemmas in preparation for the proof of Theorem \ref{thm1} in the general case of $K$, $\alpha$, and $\sigma^2$.
		\item Finally, the full proof of Theorem \ref{thm1} is given in Section \ref{realpthm1}.
	\end{enumerate}
	
	\section{Technical Tools}\label{tools}
	\subsection{Bounded Random Variable Moment-generating Function Bound}
	Let $X$ be a bounded random variable such that $a\leq X\leq b$ and $|a|,|b|<\infty$. \citet{hoeffding1994probability} showed that
	\begin{equation}
		\Ex[\exp(\lambda X)]\leq \exp(\lambda^2\frac{(b-a)^2}{8})\label{hoeff}
	\end{equation}
for any real number $\lambda$.

\subsection{Gaussian Tail Bounds}
 Proposition 2.1.2 of \cite{vershynin2019high} shows that
\begin{equation}
	\Big(\frac{1}{\delta}-\frac{1}{\delta^3}\Big)\frac{\exp(-\frac{\delta^2}{2})}{\sqrt{2\pi}}\leq Q(\delta)\leq \frac{1}{\delta}\frac{\exp(-\frac{\delta^2}{2})}{\sqrt{2\pi}},\label{realtail}
\end{equation}
if $\delta>0$. Since exponential functions decay faster than power functions, there exists $\delta_0$ such that if $\delta\geq \delta_0$, then
\begin{equation*}
	\exp(-3\delta^2/4)\leq Q(\delta)
\end{equation*} 
which leads to
\begin{equation}
	Q^{-1}(1/x)\geq\sqrt{\frac{4}{3}\log(x)}\label{tail}
\end{equation}
if $x\geq x_0$ for some $x_0\geq2$, where $Q^{-1}(\cdot)$ is the inverse function of $Q(\cdot)$. Note that the last inequality follows from setting $\delta=\sqrt{\frac{4}{3}\log(x)}$ and the fact that $Q(\cdot)$ is decreasing.
\subsection{Expectation of Non-negative Random Variables}
Let $X$ be a non-negative random variable, i.e. $X\geq 0$, then 
\begin{equation}
	\Ex[X]=\int_{0}^\infty \Prob(X> x)dx\label{noneg}
\end{equation}
by Lemma 2.2.13 of \cite{durrett2019probability}.
	\section{Proof of Lemma \ref{lem2}}\label{PLEM}

		First of all,
		\begin{align}
			\Ex\Big[\frac{1}{\Prob(A_{T_{j-1}+1}=1|H_{T_{j-1}})}\Big]&=\int_{0}^{\infty}\Prob\Big(\frac{1}{\Prob(A_{T_{j-1}+1}=1|H_{T_{j-1}})}> x\Big)dx\nonumber\\
			&\leq 2+\int_{2}^{\infty}\Prob(\Prob(A_{T_{j-1}+1}=1|H_{T_{j-1}})\leq\frac{1}{x})dx\label{eee1}
		\end{align}
		by \eqref{noneg}. Conditioned on $\Hp_{T_{j-1}}$, $\theta_i(T_{j-1}+1)$ is distributed as $\mathcal{N}(\frac{S_i(T_{j-1})}{1+M_i(T_{j-1})},\frac{1}{1+M_i(T_{j-1})})$. However when $K=2$ and $\alpha=2$, we know that $M_i(T_{j-1})=2^{j-2}$. This overall analysis leads to
		\begin{align*}
			\Prob(A_{T_{j-1}+1}=1|H_{T_{j-1}})&=\Prob\Big(\mathcal{N}\Big(\frac{S_1(T_{j-1})}{1+2^{j-2}},\frac{1}{1+2^{j-2}}\Big)\geq\mathcal{N}\Big(\frac{S_2(T_{j-1})}{1+2^{j-2}},\frac{1}{1+2^{j-2}}\Big)|H_{T_{j-1}}\Big)\\
			&=\Prob\Big(\mathcal{N}\Big(\frac{S_1(T_{j-1})-S_2(T_{j-1})}{1+2^{j-2}},\frac{2}{1+2^{j-2}}\Big)\geq 0|H_{T_{j-1}}\Big)\\
			&=1-\Prob\Big(\mathcal{N}\Big(0,\frac{2}{1+2^{j-2}}\Big)\geq \frac{S_1(T_{j-1})-S_2(T_{j-1})}{1+2^{j-2}}|H_{T_{j-1}}\Big)\\
			&=1-Q\Big(\frac{S_1(T_{j-1})-S_2(T_{j-1})}{\sqrt{2+2^{j-1}}}\Big)
		\end{align*} 
		where the last equality follows from the definition of the function $Q(\cdot)$. Combining the last analysis with \eqref{eee1} shows that
		\begin{align}
			\Ex\Big[\frac{1}{\Prob(A_{T_{j-1}+1}=1|H_{T_{j-1}})}\Big]&\leq2+\int_{2}^{\infty}\Prob\Big(1-Q\Big(\frac{S_1(T_{j-1})-S_2(T_{j-1})}{\sqrt{2+2^{j-1}}}\Big)\leq\frac{1}{x}\Big)dx \nonumber\\
			&=2+\int_{2}^{\infty}\Prob\Big(Q\Big(\frac{S_1(T_{j-1})-S_2(T_{j-1})}{\sqrt{2+2^{j-1}}}\Big)\geq1-\frac{1}{x}\Big)dx\nonumber\\
			&= 2+\int_{2}^{\infty}\Prob\Big(\frac{S_2(T_{j-1})-S_1(T_{j-1})}{\sqrt{2+2^{j-1}}}\geq -Q^{-1}\Big(1-\frac{1}{x}\Big)\Big)dx\label{eee2}\\
			&=2+\int_{2}^{\infty}\Prob\Big(\frac{S_2(T_{j-1})-S_1(T_{j-1})}{\sqrt{2+2^{j-1}}}\geq Q^{-1}\Big(\frac{1}{x}\Big)\Big)dx\label{eee3}
		\end{align}
		where \eqref{eee2} follows from the fact that $Q(\cdot)$ is a decreasing function, and \eqref{eee3} is the result of the symmetric nature of the normal distribution.
	
		Let $\lambda$ be any real number. Here we are going to use induction hypothesis. We know that $\Ex[\exp(\lambda(S_2(T_{1})-S_1(T_{1})+\Delta_2))]\leq \exp(\lambda^2/4)$ by \eqref{hoeff} and the fact that $S_1(T_1)$ and $S_2(T_1)$ are independent bounded random variables. Now assume that $\Ex[\exp(\lambda(S_2(T_{j})-S_1(T_{j})+2^{j-1}\Delta_2))]\leq \exp(2^{j-3}\lambda^2)$ for some $j\geq 1$. However we know that $\Hp_{T_{j}}$, $S_1(T_{j+1})-S_1(T_j)$, and $S_2(T_{j+1})-S_2(T_j)$	are mutually independent. This is because regardless of what the agent observes in the first $j$ batches, i.e. $\Hp_{T_{j}}$, she is going to record rewards from both arms only $2^{j-1}$ numbers times in the $j+1^{th}$ batch and it does not matter at which time indices these are recorded since all the future rewards from any arm are i.i.d. as well. This leads to
		\begin{align}
			&\Ex[\exp(\lambda(S_2(T_{j+1})-S_1(T_{j+1})+2^{j}\Delta_2))]\nonumber\\
			&=\Ex[\exp(\lambda(S_2(T_{j})-S_1(T_{j})+2^{j-1}\Delta_2))\nonumber\\
			&\qquad\qquad\qquad\qquad\Ex[\exp(\lambda(S_2(T_{j+1})-S_2(T_j)-S_1(T_j)+S_1(T_{j+1})+2^{j-1}\Delta_2))|\Hp_{T_{j}}]]\label{eee4}
		\end{align}    
		However from the earlier analysis and the fact that the $j+1^{th}$ batch contains $2^{j-1}$ recorded rewards from each arm, we have 
		\begin{align}
			&\Ex[\exp(\lambda(S_2(T_{j+1})-S_2(T_j)-S_1(T_j)+S_1(T_{j+1})+2^{j-1}\Delta_2))|\Hp_{T_{j}}]\nonumber\\
			&=\Ex[\exp(\lambda(S_2(T_{j+1})-S_2(T_j)-2^{j-1}\mu_2))]\Ex[\exp(-\lambda(S_1(T_{j+1})-S_1(T_j)-2^{j-1}\mu_1))]\nonumber\\
			&\leq \exp(2^{j-3}\lambda^2)\label{eee6}
		\end{align}
		where the last inequality follows from \eqref{hoeff} and the fact that the first $l$ rewards and $l+1^{th}$ reward from the same arm are independent since the rewards are i.i.d. Note that each arm will be picked infinitely often since probability selecting any arm in any batch will be almost surely positive due to using Gaussian distribution to select arms. Finally, \eqref{eee4} and \eqref{eee6}, along with the induction step, shows that
		\begin{equation}
			\Ex[\exp(\lambda(S_2(T_{j-1})-S_1(T_{j-1})+2^{j-2}\Delta_2))]\leq \exp(2^{j-4}\lambda^2)\label{eee7}
		\end{equation}
		for any $j\geq 2$. This result leads to the following bound for any $\lambda\geq 0$ and $x\geq2$:
		\begin{align*}
			\Prob\Big(\frac{S_2(T_{j-1})-S_1(T_{j-1})}{\sqrt{2+2^{j-1}}}\geq Q^{-1}\Big(\frac{1}{x}\Big)\Big)&\leq\Prob\Big(\frac{S_2(T_{j-1})-S_1(T_{j-1})+2^{j-2}\Delta_2}{\sqrt{2+2^{j-1}}}\geq Q^{-1}\Big(\frac{1}{x}\Big)\Big)\\
			&\leq \exp(\lambda^2/8-\lambda Q^{-1}(1/x))
		\end{align*}
		where the last inequality follows from the Chernoff bound. Since $Q^{-1}(1/x)\geq0$ when $x\geq 2$, setting $\lambda=4Q^{-1}(1/x)$ shows that
		\begin{equation*}
			\Prob\Big(\frac{S_2(T_{j-1})-S_1(T_{j-1})}{\sqrt{2+2^{j-1}}}\geq Q^{-1}\Big(\frac{1}{x}\Big)\Big)\leq \exp(-2(Q^{-1}(1/x))^2)
		\end{equation*}
		for any $x\geq 2$. Finally, by \eqref{tail}
		\begin{equation}
			\Prob\Big(\frac{S_2(T_{j-1})-S_1(T_{j-1})}{\sqrt{2+2^{j-1}}}\geq Q^{-1}\Big(\frac{1}{x}\Big)\Big)\leq \frac{1}{x^{8/3}}\label{eee5}
		\end{equation}
		if $x\geq x_0$. Putting \eqref{eee5} back into \eqref{eee3} leads to:
		\begin{equation*}
			\Ex\Big[\frac{1}{\Prob(A_{T_{j-1}+1}=1|H_{T_{j-1}})}\Big]\leq x_0+\int_{x_0}^{\infty}\frac{1}{x^{8/3}}dx\leq x_0+1,
		\end{equation*}
		which proves \eqref{rr1} since $x_0$ is independent of any system parameter. 
		
		 We now prove \eqref{rr2}. Similar to the earlier analysis, we can describe the probability of selecting the second arm as the sample from the second arm being bigger than the first arm's:
		\begin{align*}
			&\Prob(A_{T_{j-1}+1}=2)=\Prob\Big(\mathcal{N}\Big(\frac{S_2(T_{j-1})-S_1(T_{j-1})}{1+2^{j-2}},\frac{2}{1+2^{j-2}}\Big)\geq 0\Big)\\
			&=\Prob\Big(\mathcal{N}\Big(\frac{S_2(T_{j-1})-S_1(T_{j-1})}{1+2^{j-2}}+\frac{2^{j-2}}{1+2^{j-2}}\Delta_2,\frac{2}{1+2^{j-2}}\Big)\geq \frac{2^{j-2}}{1+2^{j-2}}\Delta_2\Big)
		\end{align*}
		In view of \eqref{eee7}, we know that $S_2(T_{j-1})-S_1(T_{j-1})+2^{j-2}\Delta_2$ is sub Gaussian with variance proxy $2^{j-3}$. As a result, $\mathcal{N}(\frac{S_2(T_{j-1})-S_1(T_{j-1})}{1+2^{j-2}}+\frac{2^{j-2}}{1+2^{j-2}}\Delta_2,\frac{2}{1+2^{j-2}})$ has the following variance proxy:
		\begin{equation*}
			\frac{2^{j-3}}{(1+2^{j-2})^2}+\frac{2}{1+2^{j-2}}=\frac{2^{j-3}+2(1+2^{j-2})}{(1+2^{j-2})^2}.
		\end{equation*} 
		This observation and Chernoff bound, which states that $\Prob(X\geq x)\leq \exp(-x^2/(2\sigma^2))$ if $x\geq 0$ and $X$ is sub Gaussian with variance proxy $\sigma^2$, lead to:
		\begin{align*}
			\Prob(A_{T_{j-1}+1}=2)&\leq\exp(-\frac{(1+2^{j-2})^2}{2^{j-2}+4(1+2^{j-2})}\frac{(2^{j-2})^2}{(1+2^{j-2})^2}\Delta_2^2)\\
			&=\exp(-\frac{2^{2j-4}}{2^{j-2}+4(1+2^{j-2})}\Delta_2^2)\\
			&\leq\exp(-\frac{2^{2j-4}}{3\times2^{j}}\Delta_2^2)\\
			&=\exp(-\frac{2^{j-4}}{3}\Delta_2^2)
		\end{align*}
		which finishes the proof of \eqref{rr2}.

	\section{Results Related to Theorem \ref{thm1}}\label{relevant}
	\subsection{Martingale Lemma}
	In this part, we present a key martingale lemma.
	\begin{lemma}\label{lem1}
		Let $\F_t=\{Y_{A_1,1},Y_{A_1,1},...,Y_{A_{T_b(t)},T_{b(t)}},A_1,A_2,...,A_t\}$, then  $X_t=\exp(\lambda(S_i(T_{b(t)})-\mu_iM_i(T_{b(t)}))-\frac{\lambda^2}{8}(1+M_i(T_{b(t)})))$ is a non-negative supermartingale adapted to $\F_t$ for any real $\lambda$ and $i\in\{1,2,..,K\}$. Finally for any $t$ we have 
		\begin{equation}
			\Ex[X_t]\leq 1,\label{e12}
		\end{equation}
		and in particular any stopping time $\tau\leq\infty$ for $\{\F_t\}$ satisfies the following inequality
		\begin{equation}
			\Ex[X_{\tau}]\leq 1,\label{e11}
		\end{equation}
		where $\lim_{t\rightarrow\infty}X_t=X_{\infty}$.
	\end{lemma}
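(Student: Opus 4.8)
The plan is to establish the three defining properties of a supermartingale — non-negativity, adaptedness, and the one-step inequality $\Ex[X_{t+1}\mid\F_t]\leq X_t$ — and then to deduce \eqref{e12} and \eqref{e11} from standard supermartingale theory. Non-negativity is immediate, since $X_t$ is an exponential. Adaptedness holds because $X_t$ is a deterministic function of $S_i(T_{b(t)})$ and $M_i(T_{b(t)})$: the count $M_i(T_{b(t)})$ is determined by the actions $A_1,\dots,A_{T_{b(t)}}$, and $S_i(T_{b(t)})$ by those actions together with the rewards up to time $T_{b(t)}$, all of which are contained in $\F_t$.

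The core of the argument is the one-step inequality. First I would note that $X_t$ can only change across batch boundaries: by the definition of $b(\cdot)$ one has $b(t+1)>b(t)$ exactly when $t$ is a batch endpoint $T_m$, and otherwise $T_{b(t+1)}=T_{b(t)}$, so $X_{t+1}=X_t$ is $\F_t$-measurable and the inequality is a trivial equality. In the remaining case $t=T_m$, writing $\Delta S=S_i(T_m)-S_i(T_{m-1})$ and $\Delta M=M_i(T_m)-M_i(T_{m-1})$, one has
\begin{equation*}
\frac{X_{t+1}}{X_t}=\exp\Big(\lambda(\Delta S-\mu_i\Delta M)-\frac{\lambda^2}{8}\Delta M\Big),
\end{equation*}
and since both $X_t$ and $\Delta M$ are $\F_t$-measurable it suffices to prove $\Ex[\exp(\lambda(\Delta S-\mu_i\Delta M))\mid\F_t]\leq\exp(\frac{\lambda^2}{8}\Delta M)$.

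The structural fact I would exploit is that, throughout batch $m$, the policy draws its posterior samples using only $\{S_i(T_{m-1}),M_i(T_{m-1})\}$ and external randomness, never the batch-$m$ rewards, which stay hidden until the batch closes. Consequently the entire cycle structure of batch $m$ — in particular the time indices at which arm $i$ is played at a cycle boundary, and their number $\Delta M$ — is a function of the actions alone, hence fixed once we condition on $\F_{T_m}$ (which reveals $A_1,\dots,A_{T_m}$ but only the rewards up to $T_{m-1}$). The corresponding rewards $Y_{i,l}$ at these $\Delta M$ steps are i.i.d., lie in $[0,1]$, have mean $\mu_i$, and are independent of $\F_{T_m}$. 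Thus $\Delta S-\mu_i\Delta M$ is, conditionally on $\F_t$, a sum of $\Delta M$ independent mean-zero terms each ranging over an interval of length one, and applying the Hoeffding moment bound \eqref{hoeff} factor by factor yields precisely $\Ex[\exp(\lambda(\Delta S-\mu_i\Delta M))\mid\F_t]\leq\exp(\frac{\lambda^2}{8}\Delta M)$; the supermartingale property follows.

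With the supermartingale property in hand, \eqref{e12} follows by iterating the inequality back to $X_1=\exp(-\lambda^2/8)$ (since $M_i(T_0)=S_i(T_0)=0$), giving $\Ex[X_t]\leq\Ex[X_1]=\exp(-\lambda^2/8)\leq1$. For \eqref{e11} I would invoke the martingale convergence theorem, so that the non-negative supermartingale converges a.s.\ to $X_\infty$, and apply optional stopping to the bounded stopping times $\tau\wedge t$ to get $\Ex[X_{\tau\wedge t}]\leq\Ex[X_1]\leq1$; letting $t\to\infty$ and invoking Fatou's lemma then yields $\Ex[X_\tau]\leq1$, covering both $\tau<\infty$ and $\tau=\infty$. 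I expect the main obstacle to be the careful justification of the conditional independence in the jump case: one must argue precisely that conditioning on $\F_{T_m}$ fixes the number $\Delta M$ and the locations of the cycle-boundary pulls of arm $i$ in batch $m$, while leaving the associated rewards as fresh i.i.d.\ draws — this is exactly what permits treating $\Delta M$ as a constant inside the conditional expectation and counting the Hoeffding factors.
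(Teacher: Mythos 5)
Your proposal is correct and follows essentially the same route as the paper: trivially $X_{t+1}=X_t$ off batch endpoints, and at a batch endpoint the observation that the action-determined cycle structure fixes $\Delta M$ and the locations of the counted pulls while the associated rewards remain fresh i.i.d.\ draws independent of $\F_t$, so Hoeffding's bound \eqref{hoeff} applies factor by factor --- precisely the content the paper formalizes in \eqref{e3}--\eqref{e6}. Your derivation of \eqref{e11} via optional stopping at $\tau\wedge t$ together with Fatou's lemma is just the standard proof of the Durrett theorem the paper cites directly, so there is no substantive difference.
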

	\begin{proof}
		First of all, it is clear that $X_t$'s are integrable, since the rewards $\{Y_{A_t,t}\}$ are bounded. That means the only thing we need to prove is that $\{X_t\}$ is a supermartingale sequence, i.e. the following inequality
		\begin{equation}
			\Ex[X_{t+1}|\F_t]\leq X_t\label{e10}
		\end{equation} 
		almost surely for any positive integer $t$. By the definition $b(t)$ we know that $T_{b(t)}$ and $T_{b(t+1)}$ are functions of $\{A_1,A_2,...,A_t\}$. Note that the batch end points are decided by the actions taken. In addition, on $\{b(t)=b(t+1)\}$ $X_t$ is equal to $X_{t+1}$. These observations lead to 
		\begin{align}
			&\Ex[X_{t+1}|\F_t]=\Ex[\exp(\lambda(S_i(T_{b(t+1)})-\mu_iM_i(T_{b(t+1)}))-\frac{\lambda^2}{8}(1+M_i(T_{b(t+1)})))|\F_t]\nonumber\\
			&=\1(b(t)=b(t+1))X_t\nonumber\\
			&+\Ex[\1(b(t)\neq b(t+1))\exp(\sum_{j=T_{b(t)}+1}^{T_{b(t+1)}}\1(A_j=i, j=C_{b,k} \text{ or } C_{e,k}\text{ for some }k)(\lambda Y_{i,j}-\lambda\mu_i-\frac{\lambda^2}{8}))|\F_t]X_t\label{e1}
		\end{align}
		where the last equality follows from the definitions of $S_i(T_{b(t)})$ and $M_i(T_{b(t)})$. Note that $b(t)\neq b(t+1)$ if and only if $t$ is a batch end point, which leads to
		\begin{equation}
			\1(b(t)\neq b(t+1))=\sum_{l=0}^{t-1}1(T_{b(t+1)}=t,T_{b(t)}=l).\label{e2}
		\end{equation}
		We first prove that $\1(T_{b(t+1)}=t,T_{b(t)}=l)$ and $\{Y_{i,j}\}_{j=l+1}^t$ are independent. To that end, it is enough to show that
		\begin{equation*}
			\Prob(T_{b(t+1)}=t,T_{b(t)}=l,(Y_{i,l+1},...,Y_{i,t})\in \mathcal{S})=\Prob(T_{b(t+1)}=t,T_{b(t)}=l)\Prob((Y_{i,l+1},...,Y_{i,t})\in \mathcal{S})
		\end{equation*} 
		for any Borel set $\mathcal{S}$ of $\mathbb{R}^{t-l}$. Note that $T_{b(t)}=l$ and $T_{b(t+1)}=t$ if and only if $l$ is a batch end point, call this event $E_1$, and the smallest batch end point strictly bigger than $l$ is $t$, call this event $E_2$. This means $\{T_{b(t)}=l, T_{b(t+1)}=t\}=E_1\cap E_2$. Then we know $\Prob((Y_{i,l+1},...,Y_{i,t})\in \mathcal{S}, E_1)=\Prob((Y_{i,l+1},...,Y_{i,t})\in \mathcal{S})\Prob(E_1)$ due to the fact that whether or not $l$ is a batch end point depends on the past actions $\{A_j\}_{j=1}^l$, which are independent of the future rewards from the $i^{th}$ arm $\{Y_{i,j}\}_{j=l+1}^t$. In addition, conditioned on the fact that there is a batch end point at $l$, i.e. the event $E_1$, different values for $\{Y_{i,j}\}_{j=l+1}^t$ won't change the probability of $E_2$ happening. This is because the agent can not use the information present in $\{Y_{i,j}\}_{j=l+1}^t$ unless the current that started at $l+1$  ends. As a result, we have $\Prob(E_2|E_1,(Y_{i,l+1},...,Y_{i,t})\in \mathcal{S})=\Prob(E_2|E_1)$. The overall analysis leads to
		\begin{align}
			&\Prob(T_{b(t+1)}=t, T_{b(t)}=l, (Y_{i,l+1},...,Y_{i,t})\in \mathcal{S})=\Prob(E_1, E_2, (Y_{i,l+1},...,Y_{i,t})\in \mathcal{S})\nonumber\\
			&=\Prob((Y_{i,l+1},...,Y_{i,t})\in \mathcal{S})\Prob(E_1|(Y_{i,l+1},...,Y_{i,t})\in \mathcal{S})\Prob(E_2|E_1,(Y_{i,l+1},...,Y_{i,t})\in \mathcal{S})\nonumber\\
			&=\Prob((Y_{i,l+1},...,Y_{i,t})\in \mathcal{S})\Prob(E_1)\Prob(E_2|E_1)\nonumber\\
			&=\Prob((Y_{i,l+1},...,Y_{i,t})\in \mathcal{S})\Prob(E_1, E_2)\nonumber\\
			&=\Prob((Y_{i,l+1},...,Y_{i,t})\in \mathcal{S})\Prob(T_{b(t+1)}=t,T_{b(t)}=l),\label{e3}
		\end{align}
		which finishes the proof of the fact that $\1(T_{b(t+1)}=t, T_{b(t)}=l)$ and $\{Y_{i,j}\}_{j=l+1}^t$ are independent. Similar to the earlier analysis, conditioned on $\{T_{b(t+1)}=t, T_{b(t)}=l\}$, $\{A_1, Y_{A_1,1}, A_2, Y_{A_2,2},..., A_{T_{b(t)}}, Y_{A_{T_b(t)},T_{b(t)}}\}$ and $\{Y_{i,j}\}_{j=l+1}^t$ are independent, because the future rewards from the $i^{th}$ arm can not affect the past observations, i.e. actions and rewards. However, conditioned on $\{T_{b(t+1)}=t, T_{b(t)}=l\}$, we know that the actions $\{A_j\}_{j=T_{b(t)}+1}^t$ are sampled according to the information present in $\{A_1, Y_{A_1,1}, A_2, Y_{A_2,2},..., A_{T_{b(t)}}, Y_{A_{T_b(t)},T_{b(t)}}\}$. As a result, conditioned on $\{T_{b(t+1)}=t, T_{b(t)}=l\}$, $\F_t$ and $\{Y_{i,j}\}_{j=l+1}^t$ are independent. This overall analysis shows that for any Borel set $\mathcal{S}$ of $\mathbb{R}^{t-l}$ and any element $\mathcal{G}$ of the sigma algebra generated by $\F_t$ we have
		\begin{align}
			&\Prob(T_{b(t+1)}=t,T_{b(t)}=l, (Y_{i,l+1},...,Y_{i,t})\in \mathcal{S}, \mathcal{G})\nonumber\\
			&=\Prob(T_{b(t+1)}=t,T_{b(t)}=l, (Y_{i,l+1},...,Y_{i,t})\in \mathcal{S})\Prob(\mathcal{G}|T_{b(t+1)}=t,T_{b(t)}=l, (Y_{i,l+1},...,Y_{i,t})\in \mathcal{S})\nonumber\\
			&=\Prob((Y_{i,l+1},...,Y_{i,t})\in \mathcal{S})\Prob(T_{b(t+1)}=t,T_{b(t)}=l)\Prob(\mathcal{G}|T_{b(t+1)}=t,T_{b(t)}=l)\label{e4}\\
			&=\Prob((Y_{i,l+1},...,Y_{i,t})\in \mathcal{S})\Prob(T_{b(t+1)}=t,T_{b(t)}=l,\mathcal{G})\label{e5},
		\end{align}
		where \eqref{e4} follows from \eqref{e3} and the fact that conditioned on $\{T_{b(t+1)}=t, T_{b(t)}=l\}$, $\F_t$ and $\{Y_{i,j}\}_{j=l+1}^t$ are independent. Since $\mathcal{S}$ is arbitrary, $\{Y_{i,j}\}_{j=l+1}^t$ and $\1(T_{b(t+1)}=t, T_{b(t)}=l, \mathcal{G})$ are independent.
		
		Now we go back to \eqref{e2}, and note that $\1(T_{b(t+1)}=t,T_{b(t)}=l)$ can be written as a sum of the terms of the following form
		\begin{equation}
			\1(T_{b(t+1)}=t,T_{b(t)}=l)\prod_{n=l+1}^{t}(\1(a_n=-1)+a_n\times \1(A_n=i, n=C_{b,k} \text{ or } C_{e,k}\text{ for some }k)),\label{e6}
		\end{equation} 
		where $a_n\in\{-1, 1\}$. Then we have for any element $\mathcal{\hat{G}}$ of the sigma algebra generated by $\F_t$ 
		\begin{align}
			&\Ex[\1(T_{b(t+1)}=t,T_{b(t)}=l)(\prod_{n=l+1}^{t}(\1(a_n=-1)+a_n\times \1(A_n=i, n=C_{b,k} \text{ or } C_{e,k}\text{ for some }k)))\1(\mathcal{\hat{G}})\nonumber\\
			&\quad\quad\quad\quad\quad\quad\quad\quad\quad\exp(\sum_{j=T_{b(t)}+1}^{T_{b(t+1)}}\1(A_j=i, j=C_{b,k} \text{ or } C_{e,k}\text{ for some }k)(\lambda Y_{i,j}-\lambda\mu_i-\frac{\lambda^2}{8}))]\nonumber\\
			&=\Ex[\1(T_{b(t+1)}=t,T_{b(t)}=l)(\prod_{n=l+1}^{t}(\1(a_n=-1)+a_n\times \1(A_n=i, n=C_{b,k} \text{ or } C_{e,k}\text{ for some }k)))\1(\mathcal{\hat{G}})\nonumber\\
			&\qquad\qquad\qquad\qquad\qquad\qquad\qquad\qquad\qquad\exp(\sum_{j=l+1}^{t}\1(a_j=1)(\lambda Y_{i,j}-\lambda\mu_i-\frac{\lambda^2}{8}))]\label{e7}			
		\end{align}
		Note that $a_n$'s are deterministic variables and by the earlier analysis, i.e. \eqref{e5}, we know that $\1(T_{b(t+1)}=t,T_{b(t)}=l)(\prod_{n=l+1}^{t}(\1(a_n=-1)+a_n\times \1(A_n=i, n=C_{b,k} \text{ or } C_{e,k}\text{ for some }k)))\1(\mathcal{\hat{G}})$ and $\{Y_{i,j}\}_{j=l+1}^t$ are independent. Then by the fact that $\Ex[\exp(\lambda Y_{i,j}-\lambda\mu_i-\frac{\lambda^2}{8})]\leq 1$ due to \eqref{hoeff}, we have 
		\begin{align}
			&\Ex[\1(T_{b(t+1)}=t,T_{b(t)}=l)(\prod_{n=l+1}^{t}(\1(a_n=-1)+a_n\times \1(A_n=i, n=C_{b,k} \text{ or } C_{e,k}\text{ for some }k)))\1(\mathcal{\hat{G}})\nonumber\\
			&\quad\quad\quad\quad\quad\quad\quad\quad\quad\exp(\sum_{j=T_{b(t)}+1}^{T_{b(t+1)}}\1(A_j=i, j=C_{b,k} \text{ or } C_{e,k}\text{ for some }k)(\lambda Y_{i,j}-\lambda\mu_i-\frac{\lambda^2}{8}))]\nonumber\\
			&\leq \Ex[\1(T_{b(t+1)}=t,T_{b(t)}=l)(\prod_{n=l+1}^{t}(\1(a_n=-1)+a_n\times \1(A_n=i, n=C_{b,k} \text{ or } C_{e,k}\text{ for some }k)))\1(\mathcal{\hat{G}})]\nonumber.
		\end{align}
		Since $\mathcal{\hat{G}}$ is arbitrary, the previous inequality shows that
		\begin{align*}
			&\Ex[\1(T_{b(t+1)}=t,T_{b(t)}=l)(\prod_{n=l+1}^{t}(\1(a_n=-1)+a_n\times \1(A_n=i, n=C_{b,k} \text{ or } C_{e,k}\text{ for some }k)))\\
			&\quad\quad\quad\quad\quad\quad\quad\exp(\sum_{j=T_{b(t)}+1}^{T_{b(t+1)}}\1(A_j=i, j=C_{b,k} \text{ or } C_{e,k}\text{ for some }k)(\lambda Y_{i,j}-\lambda\mu_i-\frac{\lambda^2}{8}))|\F_t]\\
			&\leq \1(T_{b(t+1)}=t,T_{b(t)}=l)(\prod_{n=l+1}^{t}(\1(a_n=-1)+a_n\times \1(A_n=i, n=C_{b,k} \text{ or } C_{e,k}\text{ for some }k)))
		\end{align*}
		almost surely, which leads to
		\begin{align}
				&\Ex[\1(T_{b(t+1)}=t,T_{b(t)}=l)\exp(\sum_{j=T_{b(t)}+1}^{T_{b(t+1)}}\1(A_j=i, j=C_{b,k} \text{ or } C_{e,k}\text{ for some }k)(\lambda Y_{i,j}-\lambda\mu_i-\frac{\lambda^2}{8}))|\F_t]\nonumber\\
			&\leq \1(T_{b(t+1)}=t,T_{b(t)}=l)\label{e8}
		\end{align}
		almost surely by the observation in \eqref{e6}. Finally combining \eqref{e2} and \eqref{e8} proves that
		\begin{align}
			&\Ex[\1(b(t)\neq b(t+1))\exp(\sum_{j=T_{b(t)}+1}^{T_{b(t+1)}}\1(A_j=i, j=C_{b,k} \text{ or } C_{e,k}\text{ for some }k)(\lambda Y_{i,j}-\lambda\mu_i-\frac{\lambda^2}{8}))|\F_t]\nonumber\\
			&\leq \1(b(t)\neq b(t+1))\label{e9}
		\end{align}
		almost surely. This inequality and \eqref{e1} lead to \eqref{e10}. We have showed that $X_t$ is a supermartingale sequence. 
		
		Finally, we prove \eqref{e12} and \eqref{e11}. Firstly, note that $T_{b(1)}=T_0=0$, $S_i(0)=0$, and $M_i(0)=0$, which lead to $\Ex[X_t]\leq\Ex[X_1]=\exp(-\lambda^2/8)\leq 1$ for any $t$ by the properties of supermartingales. Coupling this fact with the following theorem finishes the proof:
		\begin{named}[\textbf{Theorem 4.8.4 of \cite{durrett2019probability}}]\label{thm3}
			If $X_t$ is a non-negative supermartingale and $\tau\leq\infty$ is a stopping time, then $\Ex[X_\tau]\leq \Ex[X_1]$ where $\lim_{t\rightarrow\infty}X_t$ exists and $X_\infty=\lim_{t\rightarrow\infty}X_t$.
		\end{named}
		
	\end{proof}
	\subsection{Estimation Error Bound}
	In this section, we provide a proposition stating that if a certain arms is selected in sufficiently many cycles, then sample $\theta_i$ corresponding to that arm has to be close to the true mean with high probability.
	\begin{proposition}\label{prop1}
		Let $T\geq 2$, then for any positive integer $t$ and $i\in\{1,2,...,K\}$, we have
		\begin{equation}
			\Prob\Big(\theta_1(t)\leq\frac{\mu_1+\mu_i}{2}, M_1(T_{b(t)})\geq 32\sigma^2\frac{\log(T)}{\Delta_i^2}\Big)\leq\frac{2}{T}\label{r11}
		\end{equation} 
		and
		\begin{equation}
			\Prob\Big(\theta_i(t)>\frac{\mu_1+\mu_i}{2}, M_i(T_{b(t)})\geq 32\sigma^2\frac{\log(T)}{\Delta_i^2}\Big)\leq\frac{2}{T}.\label{r22}
		\end{equation} 
	\end{proposition}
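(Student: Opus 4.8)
The plan is to prove \eqref{r11} by splitting the event according to whether the posterior mean of arm $1$ is accurate, and then to obtain \eqref{r22} by the symmetric argument with the roles of over- and under-estimation reversed. Writing the midpoint as $\frac{\mu_1+\mu_i}{2}=\mu_1-\frac{\Delta_i}{2}$ and abbreviating $m=M_1(T_{b(t)})$, $\hat\mu_1=\frac{S_1(T_{b(t)})}{1+m}$, and $m_0=32\sigma^2\frac{\log(T)}{\Delta_i^2}$, I would decompose
\[
\Big\{\theta_1(t)\le\mu_1-\tfrac{\Delta_i}{2},\ m\ge m_0\Big\}\subseteq E_{\mathrm{est}}\cup E_{\mathrm{samp}},
\]
where $E_{\mathrm{est}}=\{\hat\mu_1\le\mu_1-\frac{\Delta_i}{4},\ m\ge m_0\}$ captures the event that the empirical estimate is far from the truth, and $E_{\mathrm{samp}}=\{\theta_1(t)\le\mu_1-\frac{\Delta_i}{2},\ \hat\mu_1>\mu_1-\frac{\Delta_i}{4},\ m\ge m_0\}$ captures the event that the Gaussian draw deviates substantially from an already accurate posterior mean. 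It then suffices to bound each of $\Prob(E_{\mathrm{est}})$ and $\Prob(E_{\mathrm{samp}})$ by $\frac{1}{T}$.

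For $E_{\mathrm{samp}}$, which is the easier half, I would condition on $\Hp_{T_{b(t)}}$, under which $\theta_1(t)\sim\mathcal{N}(\hat\mu_1,\frac{\sigma^2}{1+m})$. On this event the draw must fall at least $\frac{\Delta_i}{4}$ below its mean $\hat\mu_1$, i.e.\ at least $\frac{\Delta_i\sqrt{1+m}}{4\sigma}\ge\frac{\Delta_i\sqrt{m_0}}{4\sigma}=\sqrt{2\log(T)}$ standard deviations below it; this identity is exactly where the constant $32$ is calibrated. The Gaussian tail bound \eqref{realtail} then gives $Q(\sqrt{2\log(T)})\le\frac{1}{\sqrt{2\pi}\sqrt{2\log(T)}}\cdot\frac{1}{T}\le\frac{1}{T}$ for $T\ge2$, and taking expectations over the conditioning yields $\Prob(E_{\mathrm{samp}})\le\frac{1}{T}$.

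The main work, and the step I expect to be the obstacle, is bounding $\Prob(E_{\mathrm{est}})$, because $m=M_1(T_{b(t)})$ is itself random and correlated with $S_1(T_{b(t)})$, so a fixed-$\lambda$ Chernoff bound on the centered sum $S_1(T_{b(t)})-\mu_1 m$ cannot be applied directly. This is precisely what Lemma \ref{lem1} is built to circumvent: the process $X_t=\exp(\lambda(S_1(T_{b(t)})-\mu_1 M_1(T_{b(t)}))-\frac{\lambda^2}{8}(1+M_1(T_{b(t)})))$ satisfies $\Ex[X_t]\le1$ by \eqref{e12}, with both $S_1$ and $M_1$ carried inside the exponent so that the bound is uniform over the random value of $m$. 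Taking $\lambda=-\Delta_i$ and using that $\hat\mu_1\le\mu_1-\frac{\Delta_i}{4}$ is equivalent to $S_1(T_{b(t)})-\mu_1 m\le\mu_1-\frac{\Delta_i}{4}(1+m)$, I would show that on $E_{\mathrm{est}}$ the variable $X_t$ is deterministically at least $e^{-\Delta_i\mu_1}\exp(\frac{\Delta_i^2}{8}(1+m))\ge e^{-1}T^{4\sigma^2}$ once $m\ge m_0$ (using $\Delta_i\mu_1\le1$); Markov's inequality applied to $\Ex[X_t]\le1$ then gives $\Prob(E_{\mathrm{est}})\le e\,T^{-4\sigma^2}\le\frac{1}{T}$ under the standing assumption on $\sigma$. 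Combining the two bounds proves \eqref{r11}, and running the identical decomposition on arm $i$ — splitting on whether $\hat\mu_i\le\mu_i+\frac{\Delta_i}{4}$, and using the upper-tail supermartingale with $\lambda=+\Delta_i$ together with the same Gaussian tail estimate — proves \eqref{r22}.
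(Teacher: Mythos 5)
Your proposal is correct and follows essentially the same route as the paper's own proof: the same two-event decomposition at the quarter-gap threshold $\frac{3\mu_1+\mu_i}{4}=\mu_1-\frac{\Delta_i}{4}$, the same conditional Gaussian tail bound $Q(\sqrt{2\log(T)})\leq\frac{1}{T}$ for the sampling deviation (with the constant $32$ calibrated identically), and the same use of Lemma \ref{lem1} plus a Markov argument to handle the correlated, random pair $(S_1(T_{b(t)}),M_1(T_{b(t)}))$ in the estimation event. The only differences are cosmetic parameter choices within the same argument — the paper takes $\lambda=\mp\frac{\Delta_i}{2}$ and absorbs the bias term $\frac{\mu_1}{1+M_1(T_{b(t)})}$ into a $-\frac{\Delta_i}{8}$ deviation threshold (yielding a $T^{-\sigma^2}$ bound) where you take $\lambda=\mp\Delta_i$ and pay a factor $e^{\Delta_i\mu_1}\leq e$ (yielding $e\,T^{-4\sigma^2}$), and the paper conditions on $\Hp_{t-1}$ rather than $\Hp_{T_{b(t)}}$, which is the technically cleaner choice since $S_1(T_{b(t)})$ and $M_1(T_{b(t)})$ are $\Hp_{t-1}$-measurable.
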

\begin{proof}
	We first prove \eqref{r11}. Here we have
	\begin{align}
		&\Prob\Big(\theta_1(t)\leq\frac{\mu_1+\mu_i}{2}, M_1(T_{b(t)})\geq 32\sigma^2\frac{\log(T)}{\Delta_i^2}\Big)\nonumber\\
		&\leq \Prob\Big(\theta_1(t)\leq\frac{\mu_1+\mu_i}{2}, \frac{S_1(T_{b(t)})}{1+M_1(T_{b(t)})}\geq\frac{3\mu_1+\mu_i}{4} ,M_1(T_{b(t)})\geq 32\sigma^2\frac{\log(T)}{\Delta_i^2}\Big)\nonumber\\
		&+\Prob\Big(\frac{S_1(T_{b(t)})}{1+M_1(T_{b(t)})}<\frac{3\mu_1+\mu_i}{4} ,M_1(T_{b(t)})\geq 32\sigma^2\frac{\log(T)}{\Delta_i^2}\Big)\label{e15}
	\end{align} 
	We know that conditioned on $\Hp_{t-1}$, $\theta_1(t)$ is distributed as $\mathcal{N}(\frac{S_1(T_{b(t)})}{1+M_1(T_{b(t)})},\frac{\sigma^2}{1+M_1(T_{b(t)})})$. This fact leads to
	\begin{equation*}
		\Prob(\theta_1(t)\leq\frac{\mu_1+\mu_i}{2}|\Hp_{t-1})=Q\Big(\frac{\sqrt{1+M_1(T_{b(t)})}}{\sigma}\Big(\frac{S_1(T_{b(t)})}{1+M_1(T_{b(t)})}-\frac{\mu_1+\mu_i}{2}\Big)\Big).
	\end{equation*}
 	Then we have
 	\begin{align}
 		&\Prob\Big(\theta_1(t)\leq\frac{\mu_1+\mu_i}{2}, \frac{S_1(T_{b(t)})}{1+M_1(T_{b(t)})}\geq\frac{3\mu_1+\mu_i}{4} ,M_1(T_{b(t)})\geq 32\sigma^2\frac{\log(T)}{\Delta_i^2}\Big)\nonumber\\
 		&=\Ex[	\Prob(\theta_1(t)\leq\frac{\mu_1+\mu_i}{2}|\Hp_{t-1})\1\Big(\frac{S_1(T_{b(t)})}{1+M_1(T_{b(t)})}\geq\frac{3\mu_1+\mu_i}{4} ,M_1(T_{b(t)})\geq 32\sigma^2\frac{\log(T)}{\Delta_i^2}\Big)]\nonumber\\
 		&=\Ex[	Q\Big(\frac{\sqrt{1+M_1(T_{b(t)})}}{\sigma}\Big(\frac{S_1(T_{b(t)})}{1+M_1(T_{b(t)})}-\frac{\mu_1+\mu_i}{2}\Big)\Big)\nonumber\\
 		&\qquad\qquad\qquad\qquad\1\Big(\frac{S_1(T_{b(t)})}{1+M_1(T_{b(t)})}\geq\frac{3\mu_1+\mu_i}{4} ,M_1(T_{b(t)})\geq 32\sigma^2\frac{\log(T)}{\Delta_i^2}\Big)]\label{e13}
 	\end{align}
 	where the first equality follows from the fact that $S_1(T_{b(t)})$ and $M_1(T_{b(t)})$ are measurable with respect to $\Hp_{t-1}$. Since 
 	\begin{equation*}
 		\frac{\sqrt{1+M_1(T_{b(t)})}}{\sigma}\Big(\frac{S_1(T_{b(t)})}{1+M_1(T_{b(t)})}-\frac{\mu_1+\mu_i}{2}\Big)\geq \sqrt{2\log(T)}
 	\end{equation*}
    on $\{\frac{S_1(T_{b(t)})}{1+M_1(T_{b(t)})}\geq\frac{3\mu_1+\mu_i}{4} ,M_1(T_{b(t)})\geq 32\sigma^2\frac{\log(T)}{\Delta_i^2}\}$, \eqref{e13} leads to
    \begin{equation*}
    	\Prob\Big(\theta_1(t)\leq\frac{\mu_1+\mu_i}{2}, \frac{S_1(T_{b(t)})}{1+M_1(T_{b(t)})}\geq\frac{3\mu_1+\mu_i}{4} ,M_1(T_{b(t)})\geq 32\sigma^2\frac{\log(T)}{\Delta_i^2}\Big)\leq Q(\sqrt{2\log(T)}).
    \end{equation*} 
	However, we know that $Q(x)\leq \exp(-x^2/2)$ for $x\geq 1$ by \eqref{realtail}, which results in
	\begin{equation}
		\Prob\Big(\theta_1(t)\leq\frac{\mu_1+\mu_i}{2}, \frac{S_1(T_{b(t)})}{1+M_1(T_{b(t)})}\geq\frac{3\mu_1+\mu_i}{4} ,M_1(T_{b(t)})\geq 32\sigma^2\frac{\log(T)}{\Delta_i^2}\Big)\leq\frac{1}{T}\label{e14}
	\end{equation}
	since $T\geq 2$. We now bound the second term on the right-hand side of \eqref{e15}. Note that
	\begin{align}
		&\Prob\Big(\frac{S_1(T_{b(t)})}{1+M_1(T_{b(t)})}<\frac{3\mu_1+\mu_i}{4} ,M_1(T_{b(t)})\geq 32\sigma^2\frac{\log(T)}{\Delta_i^2}\Big)\nonumber\\
		&=\Prob\Big(\frac{S_1(T_{b(t)})-\mu_1 M_1(T_{b(t)})}{1+M_1(T_{b(t)})}<\frac{3\mu_1+\mu_i}{4}-\mu_1\frac{M_1(T_{b(t)})}{1+M_1(T_{b(t)})} ,M_1(T_{b(t)})\geq 32\sigma^2\frac{\log(T)}{\Delta_i^2}\Big)\nonumber\\
		&=\Prob\Big(\frac{S_1(T_{b(t)})-\mu_1 M_1(T_{b(t)})}{1+M_1(T_{b(t)})}<-\frac{\Delta_i}{4}+\mu_1\frac{1}{1+M_1(T_{b(t)})} ,M_1(T_{b(t)})\geq 32\sigma^2\frac{\log(T)}{\Delta_i^2}\Big)\label{e16}.
	\end{align}
	We know that $\mu_1\leq 1$ and 
	\begin{align*}
		\frac{1}{1+M_1(T_{b(t)})}\leq \frac{1}{M_1(T_{b(t)})}\leq \frac{\Delta_i^2}{32\sigma^2\log(T)}\leq \frac{\Delta_i}{8} 
	\end{align*}
if $M_1(T_{b(t)})\geq 32\sigma^2\frac{\log(T)}{\Delta_i^2}$. Note that the last inequality follows from the fact that $0\leq\Delta_i\leq 1$ and $32\sigma^2\log(T)\geq 8$ for $T\geq 2$ and $\sigma^2\geq 1$. This analysis and \eqref{e16} indicate that
\begin{align}
	&\Prob\Big(\frac{S_1(T_{b(t)})}{1+M_1(T_{b(t)})}<\frac{3\mu_1+\mu_i}{4} ,M_1(T_{b(t)})\geq 32\sigma^2\frac{\log(T)}{\Delta_i^2}\Big)\nonumber\\
	&\leq\Prob\Big(\frac{S_1(T_{b(t)})-\mu_1 M_1(T_{b(t)})}{1+M_1(T_{b(t)})}<-\frac{\Delta_i}{8}, M_1(T_{b(t)})\geq 32\sigma^2\frac{\log(T)}{\Delta_i^2}\Big)\label{e17}.
\end{align}
Then by Lemma \ref{lem1} with $\lambda=-\frac{\Delta_i}{2}$
\begin{align}
	&1\geq\Ex[\exp(-\frac{\Delta_i}{2}(S_1(T_{b(t)})-\mu_1M_1(T_{b(t)}))-\frac{\Delta_i^2}{32}(1+M_1(T_{b(t)})))\nonumber\\
	&\qquad\qquad\qquad\1(\frac{S_1(T_{b(t)})-\mu_1 M_1(T_{b(t)})}{1+M_1(T_{b(t)})}<-\frac{\Delta_i}{8}, M_1(T_{b(t)})\geq 32\sigma^2\frac{\log(T)}{\Delta_i^2})]\nonumber\\
	&\geq \Ex[\exp(\frac{\Delta_i^2}{32}(1+M_1(T_{b(t)})))\1(\frac{S_1(T_{b(t)})-\mu_1 M_1(T_{b(t)})}{1+M_1(T_{b(t)})}<-\frac{\Delta_i}{8}, M_1(T_{b(t)})\label{e18}\geq 32\sigma^2\frac{\log(T)}{\Delta_i^2})]\\
	&\geq T\Prob\Big(\frac{S_1(T_{b(t)})-\mu_1 M_1(T_{b(t)})}{1+M_1(T_{b(t)})}<-\frac{\Delta_i}{8}, M_1(T_{b(t)})\geq 32\sigma^2\frac{\log(T)}{\Delta_i^2}\Big)\label{e19}
\end{align}
where \eqref{e18} from the condition set inside the indicator function, i.e. $\frac{S_1(T_{b(t)})-\mu_1 M_1(T_{b(t)})}{1+M_1(T_{b(t)})}<-\frac{\Delta_i}{8}$. Similarly, the condition $M_1(T_{b(t)})\geq 32\sigma^2\frac{\log(T)}{\Delta_i^2}$ and $\sigma^2\geq 1$ lead to \eqref{e19}. Then combining \eqref{e17} and \eqref{e19} results in
\begin{equation*}
	\Prob\Big(\frac{S_1(T_{b(t)})}{1+M_1(T_{b(t)})}<\frac{3\mu_1+\mu_i}{4} ,M_1(T_{b(t)})\geq 32\sigma^2\frac{\log(T)}{\Delta_i^2}\Big)\leq \frac{1}{T}
\end{equation*} 
which, along with \eqref{e15} and \eqref{e14}, proves \eqref{r11}. We now prove \eqref{r22}. However its proof is almost the same as the proof of \eqref{r11}. Similarly we have
\begin{align}
	&\Prob\Big(\theta_i(t)>\frac{\mu_1+\mu_i}{2}, M_i(T_{b(t)})\geq 32\sigma^2\frac{\log(T)}{\Delta_i^2}\Big)\nonumber\\
	&\leq \Prob\Big(\theta_i(t)>\frac{\mu_1+\mu_i}{2}, \frac{S_i(T_{b(t)})}{1+M_i(T_{b(t)})}\leq\frac{\mu_1+3\mu_i}{4} ,M_i(T_{b(t)})\geq 32\sigma^2\frac{\log(T)}{\Delta_i^2}\Big)\nonumber\\
	&+\Prob\Big(\frac{S_i(T_{b(t)})}{1+M_i(T_{b(t)})}>\frac{\mu_1+3\mu_i}{4} ,M_i(T_{b(t)})\geq 32\sigma^2\frac{\log(T)}{\Delta_i^2}\Big)\label{e20}
\end{align} 
Here conditioned on $\Hp_{t-1}$, $\theta_i(t)$ is distributed as $\mathcal{N}(\frac{S_i(T_{b(t)})}{1+M_i(T_{b(t)})},\frac{\sigma^2}{1+M_i(T_{b(t)})})$. Then it is easy to see that
\begin{equation*}
	\Prob(\theta_i(t)>\frac{\mu_1+\mu_i}{2}|\Hp_{t-1})=Q\Big(\frac{\sqrt{1+M_i(T_{b(t)})}}{\sigma}\Big(\frac{\mu_1+\mu_i}{2}-\frac{S_i(T_{b(t)})}{1+M_i(T_{b(t)})}\Big)\Big),
\end{equation*}
which will lead to 
\begin{equation}
	 \Prob\Big(\theta_i(t)>\frac{\mu_1+\mu_i}{2}, \frac{S_i(T_{b(t)})}{1+M_i(T_{b(t)})}\leq\frac{\mu_1+3\mu_i}{4} ,M_i(T_{b(t)})\geq 32\sigma^2\frac{\log(T)}{\Delta_i^2}\Big)\leq \frac{1}{T}\label{e21}
\end{equation}
by an analysis that is almost the same as the one prior to \eqref{e14}. On the other, the last summand in \eqref{e20} satisfies
\begin{align*}
	&\Prob\Big(\frac{S_i(T_{b(t)})}{1+M_i(T_{b(t)})}>\frac{\mu_1+3\mu_i}{4} ,M_i(T_{b(t)})\geq 32\sigma^2\frac{\log(T)}{\Delta_i^2}\Big)\\
	&=\Prob\Big(\frac{S_i(T_{b(t)})-\mu_iM_i(T_{b(t)})}{1+M_i(T_{b(t)})}>\frac{\mu_1+3\mu_i}{4}-\mu_i\frac{M_i(T_{b(t)})}{1+M_i(T_{b(t)})} ,M_i(T_{b(t)})\geq 32\sigma^2\frac{\log(T)}{\Delta_i^2}\Big)\\
	&\leq \Prob\Big(\frac{S_i(T_{b(t)})-\mu_iM_i(T_{b(t)})}{1+M_i(T_{b(t)})}>\frac{\Delta_i}{8} ,M_i(T_{b(t)})\geq 32\sigma^2\frac{\log(T)}{\Delta_i^2}\Big).
\end{align*}
Similar to the analysis in \eqref{e19}, setting $\lambda=\frac{\Delta_i}{2}$ in Lemma \ref{lem1} leads to
\begin{equation}
	\Prob\Big(\frac{S_i(T_{b(t)})}{1+M_i(T_{b(t)})}>\frac{\mu_1+3\mu_i}{4} ,M_i(T_{b(t)})\geq 32\sigma^2\frac{\log(T)}{\Delta_i^2}\Big)\leq\frac{1}{T}.\label{e22}
\end{equation}
In view of \eqref{e20}, combining \eqref{e21} and \eqref{e22} finishes the proof of \eqref{r22}.
\end{proof}

\subsection{Bounds on Functions of $Q$}
This section provides bounds on various functions of $Q$ function. Before we state our results, we introduce for any $i$
\begin{equation*}
	\hat{\tau}_{i,j} =
	\left\{
	\begin{array}{ll}
		\min\{t\in\mathbb{Z}^+| A_t=i, t=C_{b,k} \text{ or } C_{e,k}\text{ for some }k\}  & \mbox{if } j=1 \\
		\min\{t\in\mathbb{Z}^+| A_t=i, t=C_{b,k} \text{ or } C_{e,k}\text{ for some }k, t>\hat{\tau}_{i,j-1}\} & \mbox{if } j>1.
	\end{array}
	\right.
\end{equation*}
Note that if a set is empty, then $\hat{\tau}_{i,j}$ is set to be infinity. Here $\hat{\tau}_{i,j}$ denotes the time index where we choose the $i^{th}$ arm for $j^{th}$ time at the beginning or at the end of a cycle. It is clear that $\hat{\tau}_{i,j}$ is a stopping time for $\{\F_t\}$ specified in Lemma \ref{lem1}, so \eqref{e11} in Lemma \ref{lem1} remains true if we set $\tau$ to be $\hat{\tau}_{i,j}$:
\begin{equation}
	\Ex[\lambda(S_i(T_{b(\hat{\tau}_{i,j})})-\mu_iM_i(T_{b(\hat{\tau}_{i,j})}))-\frac{\lambda^2}{8}(1+M_i(T_{b(\hat{\tau}_{i,j})})))]\leq 1\label{ee1}
\end{equation}
for any $i\in\{1,2,...,K\}$ and real $\lambda$.
\begin{lemma}\label{lem3}
	For any $j>1$ and $i\in\{1,2,...,K\}$, if $x\geq 0$:
	\begin{equation}
		\Prob\Big(\hat{\tau}_{i,j}<\infty,\frac{S_i(T_{b(\hat{\tau}_{i,j})})-\mu_iM_i(T_{b(\hat{\tau}_{i,j})})}{\sqrt{1+M_i(T_{b(\hat{\tau}_{i,j})})}}
		>x\Big)\leq\exp(-2x^2/\alpha),\label{ee2}
	\end{equation}
	and if $x\leq 0$:
	\begin{equation}
		\Prob\Big(\hat{\tau}_{i,j}<\infty,\frac{S_i(T_{b(\hat{\tau}_{i,j})})-\mu_iM_i(T_{b(\hat{\tau}_{i,j})})}{\sqrt{1+M_i(T_{b(\hat{\tau}_{i,j})})}}
		<x\Big)\leq\exp(-2x^2/\alpha).\label{ee3}
	\end{equation}
\end{lemma}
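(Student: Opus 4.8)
The plan is to prove Lemma \ref{lem3} by applying the supermartingale bound \eqref{ee1} at the stopping time $\hat{\tau}_{i,j}$ together with a Chernoff-type argument, exactly in the spirit of the estimation error bound in Proposition \ref{prop1}. The key observation is that on the event $\{\hat{\tau}_{i,j}<\infty\}$, the quantity $M_i(T_{b(\hat{\tau}_{i,j})})$ is well-defined and at least $1$ (indeed at least $j-1$, since $\hat{\tau}_{i,j}$ is the $j$-th time arm $i$ is recorded), so the denominator $\sqrt{1+M_i(T_{b(\hat{\tau}_{i,j})})}$ is a genuine random normalization. I would handle \eqref{ee2} first and then obtain \eqref{ee3} by the symmetric substitution $\lambda\mapsto-\lambda$ (equivalently, by replacing the centered sum with its negation).

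\textbf{Main steps.} First I would fix $\lambda\geq0$ and multiply the event inside the probability in \eqref{ee2} by a favorable exponential weight. Concretely, on the event
$\{\hat{\tau}_{i,j}<\infty,\ \frac{S_i(T_{b(\hat{\tau}_{i,j})})-\mu_iM_i(T_{b(\hat{\tau}_{i,j})})}{\sqrt{1+M_i(T_{b(\hat{\tau}_{i,j})})}}>x\}$ we have
$S_i(T_{b(\hat{\tau}_{i,j})})-\mu_iM_i(T_{b(\hat{\tau}_{i,j})})>x\sqrt{1+M_i(T_{b(\hat{\tau}_{i,j})})}$, so
\begin{equation*}
\lambda\big(S_i(T_{b(\hat{\tau}_{i,j})})-\mu_iM_i(T_{b(\hat{\tau}_{i,j})})\big)-\frac{\lambda^2}{8}\big(1+M_i(T_{b(\hat{\tau}_{i,j})})\big)>\lambda x\sqrt{1+M}-\frac{\lambda^2}{8}(1+M),
\end{equation*}
where I abbreviate $M=M_i(T_{b(\hat{\tau}_{i,j})})$. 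The right-hand side is a quadratic in $\sqrt{1+M}$ that is minimized over its range; the cleanest route is instead to bound the stopped supermartingale value directly. Taking the expectation of the exponentiated left-hand side and invoking \eqref{ee1} gives $\Ex[X_{\hat\tau_{i,j}}]\le 1$, and restricting this expectation to the target event shows that the probability is at most
$\exp\!\big(-\lambda x\sqrt{1+M}+\frac{\lambda^2}{8}(1+M)\big)$ integrated against the distribution of $M$. Because this must hold for \emph{every} $\lambda\ge0$, I would instead carry the free parameter $\lambda$ through the indicator and choose it to eliminate the $M$-dependence: scaling $\lambda$ proportionally to $1/\sqrt{1+M}$ is not legal (since $\lambda$ must be deterministic), so the correct move is the substitution $\lambda=4x/\sqrt{1+M}$ performed \emph{after} noting that the bound factorizes. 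The honest way to make this rigorous is to choose $\lambda=c$ constant, bound $x\sqrt{1+M}\ge x\cdot 1$ using $M\ge0$ when convenient, or to work with the self-normalized form; optimizing the quadratic $\lambda x\sqrt{1+M}-\tfrac{\lambda^2}{8}(1+M)$ at $\lambda=4x/\sqrt{1+M}$ yields the clean exponent $-2x^2$, but since $\alpha$ enters through the batch growth (the recorded counts grow like powers of $\alpha$), the normalization constant degrades to $-2x^2/\alpha$.

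\textbf{The main obstacle} is precisely the interchange between the random normalization $\sqrt{1+M}$ and the deterministic optimizing $\lambda$. The quantity $Q^{-1}$ appears in the applications, and the factor $\alpha$ in the exponent $-2x^2/\alpha$ must come from a careful accounting of how much the denominator $1+M_i(T_{b(\hat\tau_{i,j})})$ can differ from the count of increments actually summed in $S_i$; because a batch may overshoot the upper limit $U_{i,j}$ by the growth factor $\alpha$, the variance proxy accumulated in $S_i(T_{b(\hat\tau_{i,j})})$ can exceed the naive $\tfrac14(1+M)$ by a factor of roughly $\alpha$. I would track this by writing $S_i(T_{b(\hat\tau_{i,j})})-\mu_iM_i(T_{b(\hat\tau_{i,j})})$ as a sum of at most $\alpha M$ (up to the ceiling) centered bounded increments, applying \eqref{hoeff} increment-by-increment via the supermartingale property \eqref{ee1}, and then performing the Chernoff optimization. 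The symmetric bound \eqref{ee3} follows verbatim by replacing $\lambda$ with $-\lambda$ (i.e. using $\lambda\le0$ in \eqref{ee1}), which is legal since Lemma \ref{lem1} holds for \emph{all} real $\lambda$.
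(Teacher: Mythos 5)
Your overall strategy---apply the optional-stopping bound \eqref{ee1} at $\hat{\tau}_{i,j}$, run a Chernoff argument, and get \eqref{ee3} by the substitution $\lambda\mapsto-\lambda$---is exactly the paper's strategy, and you correctly identify the crux: $\lambda$ must be deterministic while the normalization $\sqrt{1+M_i(T_{b(\hat{\tau}_{i,j})})}$ is random. But you never resolve this crux, and the workarounds you sketch would fail. Choosing $\lambda=c$ constant and using only $M\geq 0$ is useless, because the penalty term $\frac{\lambda^2}{8}(1+M_i(T_{b(\hat{\tau}_{i,j})}))$ in the exponent is then uncontrolled from above; the substitution $\lambda=4x/\sqrt{1+M}$ is, as you yourself note, illegal; and ``working with the self-normalized form'' is not an argument. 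The missing idea is that the index $j$ pins down the random count from \emph{both} sides: $M_i(T_{b(\hat{\tau}_{i,j})})\leq j-1$, because $T_{b(\hat{\tau}_{i,j})}$ is a batch end point strictly before the $j$-th recording; and $\alpha(1+M_i(T_{b(\hat{\tau}_{i,j})}))\geq j$, because the cycle-count cap of the current batch is $\max\{1,\lceil\alpha M_i(T_{b(\hat{\tau}_{i,j})})\rceil\}$ and must be at least $j$ for the $j$-th recording to occur inside that batch (with $j>1$ forcing $M_i(T_{b(\hat{\tau}_{i,j})})\geq 1$, and $\alpha>1$ converting $\lceil\alpha M\rceil\geq j$ into $\alpha(1+M)\geq j$). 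With the pinch $j/\alpha\leq 1+M\leq j$ in hand, the \emph{deterministic} choice $\lambda=4x/\sqrt{\alpha j}$ closes the argument: on the target event the gain term satisfies $\lambda(S_i-\mu_iM_i)>\lambda x\sqrt{1+M}\geq\frac{4x^2}{\alpha}$, the penalty satisfies $\frac{\lambda^2}{8}(1+M)=\frac{2x^2}{\alpha j}(1+M)\leq\frac{2x^2}{\alpha}$, and \eqref{ee1} yields $1\geq e^{2x^2/\alpha}\,\Prob(\cdot)$. Note also that your parenthetical claim that $M_i(T_{b(\hat{\tau}_{i,j})})$ is ``at least $j-1$'' is backwards: it is at \emph{most} $j-1$, and may lag $j$ by a factor of $\alpha$; that lag is the entire difficulty.

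Your accounting of where $\alpha$ enters is also incorrect. You attribute it to a variance-proxy inflation, claiming $S_i(T_{b(\hat{\tau}_{i,j})})-\mu_iM_i(T_{b(\hat{\tau}_{i,j})})$ is ``a sum of at most $\alpha M$ centered bounded increments'' because a batch may overshoot its cap. It is not: by definition $S_i(T_{b(\hat{\tau}_{i,j})})$ consists of exactly $M_i(T_{b(\hat{\tau}_{i,j})})$ recorded rewards, and the supermartingale of Lemma \ref{lem1} already carries the matching Hoeffding penalty $\frac{\lambda^2}{8}(1+M_i(T_{b(\hat{\tau}_{i,j})}))$ with no $\alpha$ anywhere. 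The factor $\alpha$ in the exponent $-2x^2/\alpha$ is purely the price of tuning the deterministic $\lambda$ to the index $j$ rather than to the random $1+M$, quantified exactly by the algorithmic lower bound $\alpha(1+M)\geq j$ above. So the proposal has a genuine gap at its central step, compounded by a misdiagnosis of the role of $\alpha$; the two-sided pinch is the repair. (Your treatment of \eqref{ee3} via $\lambda\mapsto-\lambda$ is fine and matches the paper.)
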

\begin{proof}
	We start with the proof of \eqref{ee2}. Assume $x\geq 0$. By setting $\lambda=\frac{4x}{\sqrt{\alpha j}}$ in \eqref{ee1}, we have
	\begin{align}
		&1\geq\Ex[\exp(\frac{4x}{\sqrt{\alpha j}}(S_i(T_{b(\hat{\tau}_{i,j})})-\mu_iM_i(T_{b(\hat{\tau}_{i,j})}))-\frac{2x^2}{\alpha j}(1+M_i(T_{b(\hat{\tau}_{i,j})})))]\nonumber\\
		&\geq\Ex[\exp(\frac{4x}{\sqrt{\alpha j}}(S_i(T_{b(\hat{\tau}_{i,j})})-\mu_iM_i(T_{b(\hat{\tau}_{i,j})}))-\frac{2x^2}{\alpha j}(1+M_i(T_{b(\hat{\tau}_{i,j})})))\nonumber\\
		&\qquad\qquad\qquad\qquad\qquad\qquad\qquad\1\Big(\hat{\tau}_{i,j}<\infty,\frac{S_i(T_{b(\hat{\tau}_{i,j})})-\mu_1M_i(T_{b(\hat{\tau}_{i,j})})}{\sqrt{1+M_i(T_{b(\hat{\tau}_{i,j})})}}
		>x\Big)]\nonumber\\
		&\geq \Ex[\exp(\frac{4x}{\sqrt{\alpha j}}(S_i(T_{b(\hat{\tau}_{i,j})})-\mu_iM_i(T_{b(\hat{\tau}_{i,j})}))-\frac{2x^2}{\alpha})\1(\hat{\tau}_{i,j}<\infty,\frac{S_i(T_{b(\hat{\tau}_{i,j})})-\mu_1M_i(T_{b(\hat{\tau}_{i,j})})}{\sqrt{1+M_i(T_{b(\hat{\tau}_{i,j})})}}
		>x)]\label{ee7}	
	\end{align}
	Note that the last inequality follows from $M_i(T_{b(\hat{\tau}_{i,j})})\leq j-1$ since $T_{b(\hat{\tau}_{i,j})}$ is the last batch end point, which is strictly smaller than $\hat{\tau}_{i,j}$. Also by the construction of Algorithm \ref{algo}, $\max\{1,\lceil\alpha M_i(T_{b(\hat{\tau}_{i,j})})\rceil\}\geq j$. Given that $j>1$, $M_i(T_{b(\hat{\tau}_{i,j})})\geq 1$, which results in $\alpha(1+M_i(T_{b(\hat{\tau}_{i,j})}))\geq j$ by $\alpha>1$. Then on $\{\hat{\tau}_{i,j}<\infty,\frac{S_i(T_{b(\hat{\tau}_{i,j})})-\mu_iM_i(T_{b(\hat{\tau}_{i,j})})}{\sqrt{1+M_i(T_{b(\hat{\tau}_{i,j})})}}
	>x\}$, we have
	\begin{equation*}
		\frac{4x}{\sqrt{\alpha j}}(S_i(T_{b(\hat{\tau}_{i,j})})-\mu_iM_i(T_{b(\hat{\tau}_{i,j})}))\geq \frac{4x^2}{\alpha}.
	\end{equation*}
	Using this inequality inside \eqref{ee7} proves \eqref{ee2}. Finally, the proof of \eqref{ee3} will follow the same steps with a single exception: here we set $\lambda=-\frac{4x}{\sqrt{\alpha j}}$ in \eqref{ee1}. This finishes the proof.
	
\end{proof}

\begin{proposition}\label{prop2}
Assume $5\sigma^2/4\geq\alpha$. Then for any positive integer $j$ and $i\geq 2$, we have
	\begin{equation}
		\Ex\Bigg[\1(\hat{\tau}_{1,j}<\infty)\frac{1}{Q^2\Bigg(\frac{\sqrt{1+M_1(T_{b(\hat{\tau}_{1,j})})}}{\sigma}\Big(\frac{\mu_1+\mu_i}{2}-\frac{S_1(T_{b(\hat{\tau}_{1,j})})}{1+M_1(T_{b(\hat{\tau}_{1,j})})}\Big)
			\Bigg)}\Bigg]\leq C\label{r111}
	\end{equation}
	and
	\begin{equation}
		\Ex\Bigg[\1(\hat{\tau}_{i,j}<\infty)\frac{1}{Q^2\Bigg(\frac{\sqrt{1+M_i(T_{b(\hat{\tau}_{i,j})})}}{\sigma}\Big(\frac{S_i(T_{b(\hat{\tau}_{i,j})})}{1+M_i(T_{b(\hat{\tau}_{i,j})})}-\frac{\mu_1+\mu_i}{2}\Big)
			\Bigg)}\Bigg]\leq C\label{r222}
	\end{equation}
where $C$ is an absolute constant independent of the system parameters.
\end{proposition}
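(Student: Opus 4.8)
The plan is to prove \eqref{r111} and \eqref{r222} by the same recipe: convert the expectation of the reciprocal of $Q^2$ into an integral of tail probabilities via \eqref{noneg}, use a purely algebraic bound to replace the argument of $Q$ by a normalized empirical deviation of the form controlled by Lemma \ref{lem3}, and then verify that the resulting integral converges precisely because of the hypothesis $5\sigma^2/4\geq\alpha$. Write $Z$ for the argument of $Q$ in \eqref{r111}, namely $Z=\frac{\sqrt{1+M_1(T_{b(\hat{\tau}_{1,j})})}}{\sigma}\big(\frac{\mu_1+\mu_i}{2}-\frac{S_1(T_{b(\hat{\tau}_{1,j})})}{1+M_1(T_{b(\hat{\tau}_{1,j})})}\big)$. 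Since $\1(\hat{\tau}_{1,j}<\infty)/Q^2(Z)$ is non-negative, \eqref{noneg} gives $\Ex[\1(\hat{\tau}_{1,j}<\infty)/Q^2(Z)]=\int_0^\infty\Prob(\hat{\tau}_{1,j}<\infty,1/Q^2(Z)>x)\,dx$. As $Q$ is a strictly decreasing bijection onto $(0,1)$, for $x>1$ the event $\{1/Q^2(Z)>x\}$ equals $\{Z>Q^{-1}(x^{-1/2})\}$; splitting the integral at a large absolute constant $a$ and bounding $\int_0^a$ by $a$ (the integrand is a probability), it remains to control $\int_a^\infty\Prob(\hat{\tau}_{1,j}<\infty,Z>Q^{-1}(x^{-1/2}))\,dx$.

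The key deterministic step comes next. Suppressing the argument $T_{b(\hat{\tau}_{1,j})}$ and writing $W=\frac{S_1-\mu_1 M_1}{\sqrt{1+M_1}}$ (the quantity whose tails Lemma \ref{lem3} controls), and using $\frac{\mu_1+\mu_i}{2}=\mu_1-\frac{\Delta_i}{2}$, a short computation gives $Z=\frac{1}{\sigma}\big(\frac{\mu_1}{\sqrt{1+M_1}}-W\big)-\frac{\Delta_i\sqrt{1+M_1}}{2\sigma}\leq\frac{1-W}{\sigma}$, where the last step drops the non-positive gap term and bounds $\mu_1/\sqrt{1+M_1}\leq 1$. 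Hence $\{Z>s\}\subseteq\{W<1-\sigma s\}$, and taking $s=Q^{-1}(x^{-1/2})$ (positive and exceeding $1/\sigma$ for $x\geq a$), \eqref{ee3} of Lemma \ref{lem3} yields $\Prob(\hat{\tau}_{1,j}<\infty,Z>Q^{-1}(x^{-1/2}))\leq\exp(-\frac{2}{\alpha}(\sigma Q^{-1}(x^{-1/2})-1)^2)$. For $j=1$ one has $M_1(T_{b(\hat{\tau}_{1,1})})=0$, since no earlier cycle-selection of arm $1$ could have been recorded, so $Z$ reduces to the constant $\frac{\mu_1+\mu_i}{2\sigma}\le 1/\sigma$ and the claim is immediate; the above handles $j>1$. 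The analogue for \eqref{r222} is cleaner: there $Z_i=\frac{1}{\sigma}(W_i-\frac{\mu_i}{\sqrt{1+M_i}})-\frac{\Delta_i\sqrt{1+M_i}}{2\sigma}\leq W_i/\sigma$ with $W_i=\frac{S_i-\mu_iM_i}{\sqrt{1+M_i}}$, so \eqref{ee2} gives a tail $\leq\exp(-\frac{2\sigma^2}{\alpha}(Q^{-1}(x^{-1/2}))^2)$ with no additive constant.

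Finally I would establish integrability using the tail estimate \eqref{tail}, $Q^{-1}(x^{-1/2})\geq\sqrt{\tfrac23\log x}$ for $x\geq a$. For \eqref{r222} this bounds the integrand by $x^{-4\sigma^2/(3\alpha)}$, and the hypothesis $5\sigma^2/4\geq\alpha$ forces $\frac{4\sigma^2}{3\alpha}\geq\frac{16}{15}>1$, so $\int_a^\infty$ converges to an absolute constant. For \eqref{r111} the same substitution bounds the integrand by $x^{-4\sigma^2/(3\alpha)}\exp\!\big(\tfrac{4\sigma}{\alpha}\sqrt{\tfrac23\log x}\big)$; the extra factor is subpolynomial, i.e. $o(x^\epsilon)$ for every $\epsilon>0$, so with exponent $\geq\frac{16}{15}$ the integral still converges.

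The main obstacle is exactly this last point for arm $1$: the crude bound $Z\leq(1-W)/\sigma$ carries an additive $+1$ (from $\mu_1/\sqrt{1+M_1}$ when $M_1$ may be $0$), and after the loss inherent in the $\sqrt{4/3}$ factor of \eqref{tail} convergence is not automatic but requires $\frac{4\sigma^2}{3\alpha}>1$. The constant $5/4$ in the hypothesis is precisely what supplies this margin (giving exponent $\tfrac{16}{15}$) while ensuring that the limiting integral, and hence $C$, is independent of the gaps $\{\Delta_i\}$ and of the indices $i,j$.
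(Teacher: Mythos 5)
Your proof is correct and follows essentially the same route as the paper's: rewrite the expectation via \eqref{noneg} and invert the decreasing function $Q$, reduce the argument of $Q$ to the centered statistic of Lemma \ref{lem3} using $\mu_1\le 1$ (resp.\ $\mu_i\ge 0$) after dropping the non-positive $\Delta_i$ term, treat $j=1$ separately since $M=S=0$ there, and combine \eqref{tail} with the margin $5\sigma^2/4\ge\alpha$ to get an integrable tail with exponent $\tfrac{4\sigma^2}{3\alpha}\ge\tfrac{16}{15}>1$. The only real difference is bookkeeping of the additive $-1$ in the arm-$1$ case: the paper absorbs it by weakening the tail bound to $Q^{-1}(1/\sqrt{x})-1\ge\sqrt{\tfrac{16}{25}\log x}$, yielding the pure power $x^{-128/125}$, whereas you expand the square and carry the cross term $\exp\big(\tfrac{4\sigma}{\alpha}\sqrt{\tfrac{2}{3}\log x}\big)$ — which is fine, but for $C$ to be an \emph{absolute} constant you should add one line noting $\sigma/\alpha\le\sqrt{\sigma^2/\alpha}$ (since $\alpha>1$), so that completing the square leaves a power $x^{-\delta}$ with $\delta>1$ absolute beyond an absolute threshold, uniformly over all admissible $(\sigma,\alpha)$.
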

\begin{proof}
	We start with the proof of \eqref{r111} and the proof of \eqref{r222} will follow similarly. First note that if $j=1$, then $M_1$ and $S_1$ terms will be zeros, and as a result, the denominator inside the expectation is lower bounded as follows:
	\begin{equation*}
		Q(\frac{\mu_1+\mu_i}{2\sigma})\geq Q(0.5)
	\end{equation*}
	since $\mu_i, \mu_1\leq 1$ and $\sigma^2\geq 1$. So for $j=1$, we can choose $C$ to be $1/Q^2(0.5)$. 
	
	For $j>1$ by \eqref{noneg}
	\begin{align}
		&\Ex\Bigg[\1(\hat{\tau}_{1,j}<\infty)\frac{1}{Q^2\Bigg(\frac{\sqrt{1+M_1(T_{b(\hat{\tau}_{1,j})})}}{\sigma}\Big(\frac{\mu_1+\mu_i}{2}-\frac{S_1(T_{b(\hat{\tau}_{1,j})})}{1+M_1(T_{b(\hat{\tau}_{1,j})})}\Big)
			\Bigg)}\Bigg]\nonumber\\
		&=\int_{x=0}^{\infty}\Prob\Bigg(1(\hat{\tau}_{1,j}<\infty)\frac{1}{Q^2\Bigg(\frac{\sqrt{1+M_1(T_{b(\hat{\tau}_{1,j})})}}{\sigma}\Big(\frac{\mu_1+\mu_i}{2}-\frac{S_1(T_{b(\hat{\tau}_{1,j})})}{1+M_1(T_{b(\hat{\tau}_{1,j})})}\Big)
			\Bigg)}>x\Bigg)dx\nonumber\\
		&\leq 4+\int_{x=4}^\infty \Prob\Bigg(\hat{\tau}_{1,j}<\infty,Q\Bigg(\frac{\sqrt{1+M_1(T_{b(\hat{\tau}_{1,j})})}}{\sigma}\Big(\frac{\mu_1+\mu_i}{2}-\frac{S_1(T_{b(\hat{\tau}_{1,j})})}{1+M_1(T_{b(\hat{\tau}_{1,j})})}\Big)
		\Bigg)<\frac{1}{\sqrt{x}}\Bigg)dx\nonumber\\
		&\leq 4+\int_{x=4}^\infty \Prob\Bigg(\hat{\tau}_{1,j}<\infty,\frac{\sqrt{1+M_1(T_{b(\hat{\tau}_{1,j})})}}{\sigma}\Big(\frac{\mu_1+\mu_i}{2}-\frac{S_1(T_{b(\hat{\tau}_{1,j})})}{1+M_1(T_{b(\hat{\tau}_{1,j})})}\Big)
		>Q^{-1}(1/\sqrt{x})\Bigg)dx\label{ee8},
	\end{align}
	where the last inequality follows from the fact that $Q(\cdot)$ is a decreasing function. Now note that
	\begin{align*}
		&\frac{\mu_1+\mu_i}{2}-\frac{S_1(T_{b(\hat{\tau}_{1,j})})}{1+M_1(T_{b(\hat{\tau}_{1,j})})}=\frac{\mu_1+\mu_i}{2}-\mu_1+\mu_1-\frac{\mu_1M_1(T_{b(\hat{\tau}_{1,j})})}{1+M_1(T_{b(\hat{\tau}_{1,j})})}-\frac{S_1(T_{b(\hat{\tau}_{1,j})})-\mu_1M_1(T_{b(\hat{\tau}_{1,j})})}{1+M_1(T_{b(\hat{\tau}_{1,j})})}\\
		&\leq\frac{1}{1+M_1(T_{b(\hat{\tau}_{1,j})})}-\frac{S_1(T_{b(\hat{\tau}_{1,j})})-\mu_1M_1(T_{b(\hat{\tau}_{1,j})})}{1+M_1(T_{b(\hat{\tau}_{1,j})})}.
	\end{align*}
	The last inequality follows from $\mu_1\leq 1$. This analysis shows that if $\sqrt{1+M_1(T_{b(\hat{\tau}_{1,j})})}(\frac{\mu_1+\mu_i}{2}-\frac{S_1(T_{b(\hat{\tau}_{1,j})})}{1+M_1(T_{b(\hat{\tau}_{1,j})})})>\sigma Q^{-1}(1/\sqrt{x})$, then $-\frac{S_1(T_{b(\hat{\tau}_{1,j})})-\mu_1M_1(T_{b(\hat{\tau}_{1,j})})}{\sqrt{1+M_1(T_{b(\hat{\tau}_{1,j})})}}>\sigma Q^{-1}(1/\sqrt{x})-\frac{1}{\sqrt{1+M_1(T_{b(\hat{\tau}_{1,j})})}}$. Since $M_1(T_{b(\hat{\tau}_{1,j})})\geq 0$ and $\sigma^2\geq 1$, the last inequality leads to

	\begin{align}
		&\Ex\Bigg[\1(\hat{\tau}_{1,j}<\infty)\frac{1}{Q^2\Bigg(\frac{\sqrt{1+M_1(T_{b(\hat{\tau}_{1,j})})}}{\sigma}\Big(\frac{\mu_1+\mu_i}{2}-\frac{S_1(T_{b(\hat{\tau}_{1,j})})}{1+M_1(T_{b(\hat{\tau}_{1,j})})}\Big)
			\Bigg)}\Bigg]\nonumber\\
		&\leq 4+\int_{x=4}^\infty \Prob\Big(\hat{\tau}_{1,j}<\infty,-\frac{S_1(T_{b(\hat{\tau}_{1,j})})-\mu_1M_1(T_{b(\hat{\tau}_{1,j})})}{\sqrt{1+M_1(T_{b(\hat{\tau}_{1,j})})}}
		>\sigma(Q^{-1}(1/\sqrt{x})-1)\Big)dx\label{ee9}
	\end{align}
	by \eqref{ee8}. However, \eqref{tail} indicates that there exists $x_0$ such that if $x\geq x_0$, then $Q^{-1}(\frac{1}{\sqrt{x}})-1\geq\sqrt{\frac{16}{25}\log(x)}$, which leads to 
	\begin{equation*}
		\Prob\Big(\hat{\tau}_{1,j}<\infty,-\frac{S_1(T_{b(\hat{\tau}_{1,j})})-\mu_1M_1(T_{b(\hat{\tau}_{1,j})})}{\sqrt{1+M_1(T_{b(\hat{\tau}_{1,j})})}}
		>\sigma(Q^{-1}(1/\sqrt{x})-1)\Big)\leq\exp(-\frac{32\sigma^2}{25\alpha}\log(x))
	\end{equation*}
	by Lemma \ref{lem3} for $x\geq x_0$. Since $\alpha\leq\frac{5\sigma^2}{4}$, the last inequality can be refined to 
	\begin{equation}
		\Prob\Big(\hat{\tau}_{1,j}<\infty,-\frac{S_1(T_{b(\hat{\tau}_{1,j})})-\mu_1M_1(T_{b(\hat{\tau}_{1,j})})}{\sqrt{1+M_1(T_{b(\hat{\tau}_{1,j})})}}
		>\sigma(Q^{-1}(1/\sqrt{x})-1)\Big)\leq\frac{1}{x^{128/125}}\label{ee10}
	\end{equation}
for any $x\geq x_0$. Finally, \eqref{ee9} and \eqref{ee10} shows that
	\begin{align*}
		\Ex\Bigg[\1(\hat{\tau}_{1,j}<\infty)\frac{1}{Q^2\Bigg(\frac{\sqrt{1+M_1(T_{b(\hat{\tau}_{1,j})})}}{\sigma}\Big(\frac{\mu_1+\mu_i}{2}-\frac{S_1(T_{b(\hat{\tau}_{1,j})})}{1+M_1(T_{b(\hat{\tau}_{1,j})})}\Big)
			\Bigg)}\Bigg]\leq x_0+\int_{x=x_0}^\infty\frac{1}{x^{128/125}}dx
	\end{align*}
	This result proves \eqref{r111} since $x_0$ is an absolute constant independent of the system parameters and the final integral is finite. Note that we already upper bounded the $j=1$ term earlier by an absolute constant so we can just take the maximum of the two. 
	
	The proof of \eqref{r222} will similarly follow. First of all, for $j=1$ we can bound the $Q$ terms inside the expectation as follows
	\begin{equation*}
		Q(-\frac{\mu_1+\mu_i}{2\sigma})\geq Q(0)=0.5
	\end{equation*}
	where these follow from the fact that expected means are non-negative. So for $j=1$, we can choose $C$ to be 4. If $j>1$, we have 
	\begin{align}
		&\Ex\Bigg[\1(\hat{\tau}_{i,j}<\infty)\frac{1}{Q^2\Bigg(\frac{\sqrt{1+M_i(T_{b(\hat{\tau}_{i,j})})}}{\sigma}\Big(\frac{S_i(T_{b(\hat{\tau}_{i,j})})}{1+M_i(T_{b(\hat{\tau}_{i,j})})}-\frac{\mu_1+\mu_i}{2}\Big)
			\Bigg)}\Bigg]\nonumber\\
		&\leq 4+\int_{x=4}^\infty \Prob\Bigg(\hat{\tau}_{i,j}<\infty,\frac{\sqrt{1+M_i(T_{b(\hat{\tau}_{i,j})})}}{\sigma}\Big(\frac{S_i(T_{b(\hat{\tau}_{i,j})})}{1+M_i(T_{b(\hat{\tau}_{i,j})})}-\frac{\mu_1+\mu_i}{2}\Big)
		>Q^{-1}(1/\sqrt{x})\Bigg)dx\label{ee11}.
	\end{align}
	As for the terms inside:
	\begin{align*}
		&\frac{S_i(T_{b(\hat{\tau}_{i,j})})}{1+M_i(T_{b(\hat{\tau}_{i,j})})}-\frac{\mu_1+\mu_i}{2}=\frac{S_i(T_{b(\hat{\tau}_{i,j})})-\mu_iM_i(T_{b(\hat{\tau}_{i,j})})}{1+M_i(T_{b(\hat{\tau}_{i,j})})}+\frac{\mu_iM_i(T_{b(\hat{\tau}_{i,j})})}{1+M_i(T_{b(\hat{\tau}_{i,j})})}-\mu_i+\mu_i-\frac{\mu_1+\mu_i}{2}\\
		&\leq \frac{S_i(T_{b(\hat{\tau}_{i,j})})-\mu_iM_i(T_{b(\hat{\tau}_{i,j})})}{1+M_i(T_{b(\hat{\tau}_{i,j})})},
	\end{align*} 
	where the last step follows from $\mu_i\geq 0$. This analysis and \eqref{ee11} lead to
	\begin{align*}
		&\Ex\Bigg[\1(\hat{\tau}_{i,j}<\infty)\frac{1}{Q^2\Bigg(\frac{\sqrt{1+M_i(T_{b(\hat{\tau}_{i,j})})}}{\sigma}\Big(\frac{S_i(T_{b(\hat{\tau}_{i,j})})}{1+M_i(T_{b(\hat{\tau}_{i,j})})}-\frac{\mu_1+\mu_i}{2}\Big)
			\Bigg)}\Bigg]\nonumber\\
		&\leq 4+\int_{x=4}^\infty \Prob\Bigg(\hat{\tau}_{i,j}<\infty,\frac{S_i(T_{b(\hat{\tau}_{i,j})})-\mu_iM_i(T_{b(\hat{\tau}_{i,j})})}{\sqrt{1+M_i(T_{b(\hat{\tau}_{i,j})})}}
		>\sigma Q^{-1}(1/\sqrt{x})\Bigg)dx.
	\end{align*}
	The rest of the proof follows exactly the same way it did after \eqref{ee9} in the proof of \eqref{r111}. 
\end{proof}

\begin{proposition}\label{prop3}
Assume $5\sigma^2/4\geq\alpha$, $T\geq 2$, and $j>1024\alpha\sigma^2\frac{\log(T)}{\Delta_i^2}$, then
	\begin{equation*}
		\Ex\Bigg[\1(\hat{\tau}_{1,j}<\infty)\Bigg(\frac{1}{Q\Bigg(\frac{\sqrt{1+M_1(T_{b(\hat{\tau}_{1,j})})}}{\sigma}\Big(\frac{\mu_1+\mu_i}{2}-\frac{S_1(T_{b(\hat{\tau}_{1,j})})}{1+M_1(T_{b(\hat{\tau}_{1,j})})}\Big)
			\Bigg)}-1\Bigg)^2\Bigg]\leq \frac{C}{T}
	\end{equation*}
	where $C$ is an absolute constant independent of the system parameters.
\end{proposition}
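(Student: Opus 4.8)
The plan is to mirror the proof of Proposition \ref{prop2}: represent the expectation through \eqref{noneg} and estimate the resulting tail integral with Lemma \ref{lem3}. The one genuinely new element is that the hypothesis $j>1024\alpha\sigma^2\log(T)/\Delta_i^2$ injects a large negative drift into the argument of $Q$, and it is this drift that upgrades the $O(1)$ bound of Proposition \ref{prop2} to the $C/T$ bound claimed here. First I would extract the deterministic consequence of the hypothesis. Exactly as in the proof of Lemma \ref{lem3}, on $\{\hat{\tau}_{1,j}<\infty\}$ the construction in Algorithm \ref{algo} forces $\alpha(1+M_1(T_{b(\hat{\tau}_{1,j})}))\ge j$, whence $1+M_1(T_{b(\hat{\tau}_{1,j})})\ge j/\alpha>1024\sigma^2\log(T)/\Delta_i^2$ and therefore $\tfrac12\sqrt{1+M_1(T_{b(\hat{\tau}_{1,j})})}\,\Delta_i\ge 16\sigma\sqrt{\log(T)}$.

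Writing $\delta$ for the argument of $Q$ and $W=-\big(S_1(T_{b(\hat{\tau}_{1,j})})-\mu_1 M_1(T_{b(\hat{\tau}_{1,j})})\big)/\sqrt{1+M_1(T_{b(\hat{\tau}_{1,j})})}$ for the centered fluctuation, \eqref{noneg} gives
\begin{equation*}
\Ex\big[\1(\hat{\tau}_{1,j}<\infty)(1/Q(\delta)-1)^2\big]=\int_0^\infty \Prob\big(\hat{\tau}_{1,j}<\infty,\,(1/Q(\delta)-1)^2>x\big)\,dx .
\end{equation*}
Since $1/Q(\delta)-1\ge0$, the event inside equals $\{\delta>Q^{-1}(1/(1+\sqrt{x}))\}$. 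Expanding $\frac{\mu_1+\mu_i}{2}-\frac{S_1}{1+M_1}$ as in the proof of Proposition \ref{prop2} but now retaining the favorable term $-\Delta_i/2$ (which Proposition \ref{prop2} discarded), and using $\mu_1\le1,\ \sigma^2\ge1$ together with the drift bound above, I would establish the inclusion
\begin{equation*}
\{\delta>Q^{-1}(1/(1+\sqrt{x})),\ \hat{\tau}_{1,j}<\infty\}\subseteq\{W>\tilde z(x),\ \hat{\tau}_{1,j}<\infty\},\qquad \tilde z(x):=\sigma Q^{-1}(1/(1+\sqrt{x}))+16\sigma\sqrt{\log(T)}-1 .
\end{equation*}

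I would then split the integral at $x_1:=4T^{-64}$. On $[0,x_1)$ the probability is bounded by $1$, contributing at most $x_1\le C/T$. For $x\ge x_1$ one checks $Q^{-1}(1/(1+\sqrt{x_1}))\ge-8\sqrt{\log(T)}$ — since $1/(1+2T^{-32})\le1-T^{-32}\le Q(-8\sqrt{\log(T)})$ by \eqref{realtail} — so $\tilde z(x)\ge\tilde z(x_1)\ge8\sigma\sqrt{\log(T)}-1\ge0$, and Lemma \ref{lem3} in the form \eqref{ee3} (for arm $1$, with $y=-\tilde z(x)\le0$) yields $\Prob(\hat{\tau}_{1,j}<\infty,(1/Q(\delta)-1)^2>x)\le\exp(-2\tilde z(x)^2/\alpha)$. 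For $x$ beyond the constant $x_2$ where \eqref{tail} applies, $Q^{-1}(1/(1+\sqrt{x}))\ge\sqrt{\tfrac23\log x}$, and the inequality $(a+b)^2\ge a^2+b^2$ for $a,b\ge0$ factorizes $\exp(-2\tilde z(x)^2/\alpha)\le x^{-16/15}\,T^{-102}$; both exponents use $\alpha\le 5\sigma^2/4$, i.e. $\sigma^2/\alpha\ge4/5$, to make $\tfrac{4\sigma^2}{3\alpha}\ge\tfrac{16}{15}>1$ and the $T$-factor a true power. The first factor integrates to a constant and the second is a pure $T^{-\Omega(1)}$ gain, while the residual constant-length range $[x_1,x_2]$ contributes at most $x_2\exp(-2\tilde z(x_1)^2/\alpha)\le x_2 T^{-25}$. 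Summing the three pieces gives $C/T$.

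The main obstacle is the low-$x$ region, where $Q^{-1}(1/(1+\sqrt{x}))\to-\infty$ makes $\tilde z(x)$ negative and Lemma \ref{lem3} inapplicable. The subtraction ``$-1$'' in $(1/Q-1)^2$ is exactly what confines this region to $x<x_1=O(T^{-64})$: on the complementary high-probability event the concentration $S_1/(1+M_1)\approx\mu_1$ together with $1+M_1\ge1024\sigma^2\log(T)/\Delta_i^2$ forces $\delta\le-8\sqrt{\log(T)}$, hence $Q(\delta)\ge1-T^{-32}$ and $(1/Q(\delta)-1)^2=O(T^{-64})$ pointwise, so the trivial bound there already costs only $O(1/T)$. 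The remaining delicacy is purely bookkeeping: arranging the two competing exponents so that the integrand simultaneously carries an integrable $x$-power and an honest $T^{-\Omega(1)}$ factor uniformly in $\sigma$. This is where $\alpha\le5\sigma^2/4$ is essential — it keeps $\tfrac{4\sigma^2}{3\alpha}>1$ while the $\sigma^2$ coming from the squared drift cancels the $\alpha\asymp\sigma^2$ variance proxy in Lemma \ref{lem3}, leaving a $\log(T)$ rate independent of $\sigma$.
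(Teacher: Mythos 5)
Your proposal is correct and follows essentially the same route as the paper's proof: both convert the expectation to a tail integral via \eqref{noneg}, extract the drift $\sqrt{1+M_1(T_{b(\hat{\tau}_{1,j})})}\,\Delta_i \gtrsim \sigma\sqrt{\log T}$ from the cycle-count lower bound forced by Algorithm \ref{algo}, split the integral into three regions (a negligible low-$x$ region bounded trivially, a middle region where Lemma \ref{lem3} gives a $T^{-\Omega(1)}$ probability, and a large-$x$ region where \eqref{tail} and $(a+b)^2\geq a^2+b^2$ factor the bound into an integrable $x$-power times a $T$-power), with $\alpha\leq 5\sigma^2/4$ making both exponents work. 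Your only deviations are cosmetic bookkeeping — splitting at $4T^{-64}$ instead of $1/T$, and handling the $\mu_1/(1+M_1(T_{b(\hat{\tau}_{1,j})}))$ term by a crude ``$-1$'' in $\tilde z(x)$ rather than absorbing it as $\Delta_i/4$ into the drift as the paper does.
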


\begin{proof}
	Since $j>1$, we know that $j\leq \lceil\alpha\times M_1(T_{b(\hat{\tau}_{1,j})})\rceil$ by the construction of Algorithm \ref{algo}, and as a result, 
	\begin{equation*}
		1024\alpha\sigma^2\frac{\log(T)}{\Delta_i^2}<j\leq\lceil\alpha\times M_1(T_{b(\hat{\tau}_{1,j})})\rceil\leq2\alpha M_1(T_{b(\hat{\tau}_{1,j})}),
	\end{equation*} 
	which leads to $M_1(T_{b(\hat{\tau}_{1,j})})>512\sigma^2\frac{\log(T)}{\Delta_i^2}$. Note that $M_1(T_{b(\hat{\tau}_{1,j})})\geq 1$ if $j>1$. Then by \eqref{noneg} we have
	\begin{align}
		&\Ex\Bigg[\1(\hat{\tau}_{1,j}<\infty)\Bigg(\frac{1}{Q\Bigg(\frac{\sqrt{1+M_1(T_{b(\hat{\tau}_{1,j})})}}{\sigma}\Big(\frac{\mu_1+\mu_i}{2}-\frac{S_1(T_{b(\hat{\tau}_{1,j})})}{1+M_1(T_{b(\hat{\tau}_{1,j})})}\Big)
			\Bigg)}-1\Bigg)^2\Bigg]\nonumber\\
		&=\int_{x=0}^\infty \Prob\Bigg(\1(\hat{\tau}_{1,j}<\infty)\Bigg(\frac{1}{Q\Bigg(\frac{\sqrt{1+M_1(T_{b(\hat{\tau}_{1,j})})}}{\sigma}\Big(\frac{\mu_1+\mu_i}{2}-\frac{S_1(T_{b(\hat{\tau}_{1,j})})}{1+M_1(T_{b(\hat{\tau}_{1,j})})}\Big)
			\Bigg)}-1\Bigg)^2>x\Bigg)dx\nonumber\\
		&=\int_{x=0}^\infty \Prob\Bigg(\hat{\tau}_{1,j}<\infty,Q\Bigg(\frac{\sqrt{1+M_1(T_{b(\hat{\tau}_{1,j})})}}{\sigma}\Big(\frac{\mu_1+\mu_i}{2}-\frac{S_1(T_{b(\hat{\tau}_{1,j})})}{1+M_1(T_{b(\hat{\tau}_{1,j})})}\Big)
		\Bigg)<\frac{1}{1+\sqrt{x}}\Bigg)dx\nonumber\\
		&=\int_{x=0}^\infty \Prob\Bigg(\hat{\tau}_{1,j}<\infty,\frac{\sqrt{1+M_1(T_{b(\hat{\tau}_{1,j})})}}{\sigma}\Big(\frac{\mu_1+\mu_i}{2}-\frac{S_1(T_{b(\hat{\tau}_{1,j})})}{1+M_1(T_{b(\hat{\tau}_{1,j})})}\Big)>Q^{-1}(\frac{1}{1+\sqrt{x}})\Bigg)dx\label{ee12}.
	\end{align}
	These steps follow from simple algebra and the decreasing nature of $Q(\cdot)$. 
	
	Now the goal here is to divide the integral in \eqref{ee12} into three regions, where the the contribution from each is in the order of $1/T$. As we will show next, the first region $[0,1/T]$ will be upper bounded by $1/T$. In the second region $[1/T,x_0]$ for some big $x_0$, the probability term will be of $O(1/T)$. Finally, the integrand in the third region $[x_0,\infty)$ will decay faster than $\frac{1}{Tx^\delta}$ for some $\delta>1$, and this will result in a contribution of order $O(1/T)$. The fact that $x_0$ is an absolute constant will finish the proof. We will start the analysis with the second region. First of all, the symmetry of the normal distribution leads to
	\begin{equation}
		Q^{-1}(\frac{1}{1+\sqrt{x}})=-Q^{-1}(\frac{\sqrt{x}}{1+\sqrt{x}})\label{ee13}
	\end{equation}
	for $x\geq0$. Given that $Q(y)\leq \exp(-y^2/2)$ for $y\geq 0$ by the Chernoff bound, we also know $Q^{-1}(\exp(-y^2/2))\leq y$ for $y\geq 0$. Clearly $\sqrt{2\log(1+\sqrt{T})}\geq 0$ and letting $y=\sqrt{2\log(1+\sqrt{T})}$ shows that 
	\begin{equation*}
		Q^{-1}(\frac{\sqrt{x}}{1+\sqrt{x}})\leq \sqrt{2\log(1+\sqrt{T})}
	\end{equation*}
	if $x=1/T$. Since $Q^{-1}(\frac{1}{1+\sqrt{x}})$ is an increasing function of $x$, for $x\geq 1/T$, we have
	\begin{equation*}
		Q^{-1}(\frac{1}{1+\sqrt{x}})\geq -\sqrt{2\log(1+\sqrt{T})}.
	\end{equation*}
	by \eqref{ee13}. This analysis proves that
	\begin{equation}
		Q^{-1}(\frac{1}{1+\sqrt{x}})+2\sqrt{2\log(T)}\geq 0\label{ee20}
	\end{equation} 
	 for $x\geq 1/T$ since $T\geq 2$. In addition, by simple algebra
	\begin{align}
		&\frac{\sqrt{1+M_1(T_{b(\hat{\tau}_{1,j})})}}{\sigma}(\frac{\mu_1+\mu_i}{2}-\frac{S_1(T_{b(\hat{\tau}_{1,j})})}{1+M_1(T_{b(\hat{\tau}_{1,j})})})\nonumber\\
		&=\frac{\sqrt{1+M_1(T_{b(\hat{\tau}_{1,j})})}}{\sigma}(\frac{\mu_1+\mu_i}{2}-\mu_1+\mu_1-\frac{\mu_1M_1(T_{b(\hat{\tau}_{1,j})})}{1+M_1(T_{b(\hat{\tau}_{1,j})})}-\frac{S_1(T_{b(\hat{\tau}_{1,j})})-\mu_1M_1(T_{b(\hat{\tau}_{1,j})})}{1+M_1(T_{b(\hat{\tau}_{1,j})})})\nonumber\\
		&=\frac{\mu_1M_1(T_{b(\hat{\tau}_{1,j})})-S_1(T_{b(\hat{\tau}_{1,j})})}{\sigma\sqrt{1+M_1(T_{b(\hat{\tau}_{1,j})})}}+\frac{\sqrt{1+M_1(T_{b(\hat{\tau}_{1,j})})}}{\sigma}(-\frac{\Delta_i}{2}+\frac{\mu_1}{1+M_1(T_{b(\hat{\tau}_{1,j})})}).\label{ee14}
	\end{align}
	Note here that $\frac{\mu_1}{1+M_1(T_{b(\hat{\tau}_{1,j})})}\leq\frac{\Delta_i^2}{512\sigma^2\log(T)}\leq\frac{\Delta_i}{4}$ since $M_1(T_{b(\hat{\tau}_{1,j})})> 512\sigma^2\frac{\log(T)}{\Delta_i^2}$. Then \eqref{ee14} leads to
	\begin{align}
		&\frac{\sqrt{1+M_1(T_{b(\hat{\tau}_{1,j})})}}{\sigma}(\frac{\mu_1+\mu_i}{2}-\frac{S_1(T_{b(\hat{\tau}_{1,j})})}{1+M_1(T_{b(\hat{\tau}_{1,j})})})\nonumber\\
		&\leq\frac{\mu_1M_1(T_{b(\hat{\tau}_{1,j})})-S_1(T_{b(\hat{\tau}_{1,j})})}{\sigma\sqrt{1+M_1(T_{b(\hat{\tau}_{1,j})})}}-\frac{\Delta_i\sqrt{1+M_1(T_{b(\hat{\tau}_{1,j})})}}{4\sigma}\nonumber\\
		&\leq\frac{\mu_1M_1(T_{b(\hat{\tau}_{1,j})})-S_1(T_{b(\hat{\tau}_{1,j})})}{\sigma\sqrt{1+M_1(T_{b(\hat{\tau}_{1,j})})}}-4\sqrt{2\log(T)}. \label{ee15}
	\end{align}
	where the last inequality is the result of $M_1(T_{b(\hat{\tau}_{1,j})})> 512\sigma^2\frac{\log(T)}{\Delta_i^2}$. The overall analysis shows that if $x\geq 1/T$:
	\begin{align}
		&\Prob\Bigg(\hat{\tau}_{1,j}<\infty,\frac{\sqrt{1+M_1(T_{b(\hat{\tau}_{1,j})})}}{\sigma}\Big(\frac{\mu_1+\mu_i}{2}-\frac{S_1(T_{b(\hat{\tau}_{1,j})})}{1+M_1(T_{b(\hat{\tau}_{1,j})})}\Big)>Q^{-1}(\frac{1}{1+\sqrt{x}})\Bigg)\nonumber\\
		&\leq \Prob\Bigg(\hat{\tau}_{1,j}<\infty,\frac{\mu_1M_1(T_{b(\hat{\tau}_{1,j})})-S_1(T_{b(\hat{\tau}_{1,j})})}{\sqrt{1+M_1(T_{b(\hat{\tau}_{1,j})})}}>4\sigma\sqrt{2\log(T)}+\sigma Q^{-1}(\frac{1}{1+\sqrt{x}})\Bigg)\label{ee16}\\
		&\leq \exp\Big(-\frac{2\sigma^2(4\sqrt{2\log(T)}+Q^{-1}(\frac{1}{1+\sqrt{x}}))^2}{\alpha}\Big)\label{ee17}\\
		&\leq \exp\Big(-\frac{16\sigma^2\log(T)}{\alpha}\Big)\exp\Big(-\frac{2\sigma^2(2\sqrt{2\log(T)}+Q^{-1}(\frac{1}{1+\sqrt{x}}))^2}{\alpha}\Big)\label{ee18}\\
		&\leq \frac{1}{T} \label{ee19}
	\end{align}
	\eqref{ee16} follows from \eqref{ee15}. Lemma \ref{lem3} and \eqref{ee20} lead to \eqref{ee17}. Similarly \eqref{ee18} is due to \eqref{ee20} and the fact that $(a+b)^2\geq a^2+b^2$ for any non-negative $a$ and $b$. Finally, \eqref{ee19} follows from the fact that $\alpha\leq\frac{5\sigma^2}{4}$. However, for big $x$ values, we can provide a tighter upper bound. Firstly, $Q^{-1}(\frac{1}{1+\sqrt{x}})\geq \sqrt{\frac{2}{3}\log(x)}$ if $x\geq x_0$ for some $x_0\geq 4$ by \eqref{noneg}. So for $x\geq x_0$
	\begin{equation*}
		\exp\Big(-\frac{2\sigma^2(2\sqrt{2\log(T)}+Q^{-1}(\frac{1}{1+\sqrt{x}}))^2}{\alpha}\Big)\leq\exp(-\frac{4\sigma^2\log(x)}{3\alpha})\leq \frac{1}{x^{16/15}}
	\end{equation*}
	where we used the fact that $\alpha\leq\frac{5\sigma^2}{4}$. This inequality shows that
	\begin{equation}
		\Prob\Bigg(\hat{\tau}_{1,j}<\infty,\frac{\sqrt{1+M_1(T_{b(\hat{\tau}_{1,j})})}}{\sigma}\Big(\frac{\mu_1+\mu_i}{2}-\frac{S_1(T_{b(\hat{\tau}_{1,j})})}{1+M_1(T_{b(\hat{\tau}_{1,j})})}\Big)>Q^{-1}(\frac{1}{1+\sqrt{x}})\Bigg)\leq\frac{1}{Tx^{16/15}}\label{ee21}
	\end{equation}
	if $x\geq x_0$ by \eqref{ee18}. Overall, if we plug \eqref{ee19} and \eqref{ee21} back into \eqref{ee12}, we have
	\begin{equation*}
		\Ex\Bigg[\1(\hat{\tau}_{1,j}<\infty)\Bigg(\frac{1}{Q\Bigg(\frac{\sqrt{1+M_1(T_{b(\hat{\tau}_{1,j})})}}{\sigma}\Big(\frac{\mu_1+\mu_i}{2}-\frac{S_1(T_{b(\hat{\tau}_{1,j})})}{1+M_1(T_{b(\hat{\tau}_{1,j})})}\Big)
			\Bigg)}-1\Bigg)^2\Bigg]\leq \frac{1+x_0}{T}+\int_{x=x_0}^{\infty}\frac{1}{Tx^{16/15}},
	\end{equation*}
	which finishes the proof since $x_0$ is an absolute constant.
\end{proof}

\section{Proof of Theorem \ref{thm1}}\label{realpthm1}

First of all, \eqref{r4} is the immediate result of \eqref{r9} and \eqref{r3}. To prove \eqref{r6}, note that the regret contribution from the arms with $\Delta_i\leq \sigma\sqrt{\frac{\alpha K\log(T)}{T}}$ in the first $T$ rounds can not exceed $\sigma\sqrt{\alpha KT\log(T)}$. As for any arm with $\Delta_i> \sigma\sqrt{\frac{\alpha K\log(T)}{T}}$, by \eqref{r3} $\Delta_i\Ex[N_i(T)]\leq \frac{C_1\sigma\sqrt{\alpha T\log(T)}}{\sqrt{K}}$, which leads to a maximum regret contribution of $C_1\sigma\sqrt{\alpha KT\log(T)}$ from these arms. As a result, $\Ex[R(T)]\leq (1+C_1)\sigma\sqrt{\alpha KT\log(T)}$ and this proves \eqref{r6}.

We will now prove \eqref{r3}. Lets pick any $i\geq2$. We start the proof by first dividing $N_i(T)$ into smaller terms as follows:
	\begin{align}
		&\Ex[N_i(T)]=\Ex[\sum_{t=1}^{T}\1(A_t=i,\theta_i(t)>\frac{\mu_1+\mu_i}{2})]+\Ex[\sum_{t=1}^{T}\1(A_t=i,\theta_i(t)\leq\frac{\mu_1+\mu_i}{2})]\nonumber\\
		&\leq\Ex[\sum_{t=1}^{T}\1(A_t=i, \theta_i(t)>\frac{\mu_1+\mu_i}{2}, M_i(T_{b(t)})\geq 32\sigma^2\frac{\log(T)}{\Delta_i^2})]\label{p1}\\
		&+\Ex[\sum_{t=1}^{T}\1(A_t=i, t=C_{e,k}\text{ for some }k,\theta_i(t)>\frac{\mu_1+\mu_i}{2}, M_i(T_{b(t)})< 32\sigma^2\frac{\log(T)}{\Delta_i^2})]\label{p2}\\
		&+\Ex[\sum_{t=1}^{T}\1(A_t=i, t\neq C_{e,k}\text{ for all }k,\theta_i(t)>\frac{\mu_1+\mu_i}{2}, M_i(T_{b(t)})< 32\sigma^2\frac{\log(T)}{\Delta_i^2})]\label{p3}\\
		&+1+\Ex[\sum_{t=2}^{T}\1(A_t=i,\theta_i(t)\leq\frac{\mu_1+\mu_i}{2}, M_1(T_{b(t)})\geq 32\sigma^2\frac{\log(T)}{\Delta_i^2})]\label{p4}\\
		&+\Ex[\sum_{t=2}^{T}\1(A_t=i, A_{t-1}=1, \theta_i(t)\leq\frac{\mu_1+\mu_i}{2}, M_1(T_{b(t)})< 32\sigma^2\frac{\log(T)}{\Delta_i^2})]\label{p5}\\
		&+\Ex[\sum_{t=2}^{T}\1(A_t=i, A_{t-1}\neq1, \theta_i(t)\leq\frac{\mu_1+\mu_i}{2}, M_1(T_{b(t)})< 32\sigma^2\frac{\log(T)}{\Delta_i^2})]\label{p6}
	\end{align}
	The division of these terms rely on the following conditions: $\theta_i(t)$ is above or below a certain level, $M_i(T_{b(t)})$ and $M_1(T_{b(t)})$ are above or below a certain level, $t$ is a cycle end point or not, and finally, whether the first action is chosen at $t-1$ or not. Here we will bound each expectation term individually. 
	
	First of all, the terms with $M_i(T_{b(t)})$ or $M_1(T_{b(t)})$ being bigger than a certain level account for the estimation error and will be bounded by  constants. We know that by \eqref{r22} of Proposition \ref{prop1} that 
	\begin{equation}
		\Ex[\sum_{t=1}^{T}\1(A_t=i, \theta_i(t)>\frac{\mu_1+\mu_i}{2}, M_i(T_{b(t)})\geq 32\sigma^2\frac{\log(T)}{\Delta_i^2})]\leq 2.\label{e23}
	\end{equation}
	Now note that if $A_t=i$ and $\theta_i(t)\leq\frac{\mu_1+\mu_i}{2}$, then $\theta_1(t)\leq\frac{\mu_1+\mu_i}{2}$. That means
	\begin{align}
		&\Ex[\sum_{t=2}^{T}\1(A_t=i,\theta_i(t)\leq\frac{\mu_1+\mu_i}{2}, M_1(T_{b(t)})\geq 32\sigma^2\frac{\log(T)}{\Delta_i^2})]\nonumber\\
		&\leq \Ex[\sum_{t=2}^{T}\1(\theta_1(t)\leq\frac{\mu_1+\mu_i}{2}, M_1(T_{b(t)})\geq 32\sigma^2\frac{\log(T)}{\Delta_i^2})]\leq 2\label{e24}
	\end{align}
	where the last inequality follows from \eqref{r11} of Proposition \ref{prop1}.
	
	Now we bound the remaining terms. We first note that the condition $\{A_t=i, t=C_{e,k}\text{ for some }k\}$ signifies a cycle where the $i^{th}$ arm has been played at the end. Considering that the cycle count from the last batch can not increase more than its $\alpha$ multiple plus one by Algorithm \ref{algo}, we know 
	\begin{equation}
		\Ex[\sum_{t=1}^{T}\1(A_t=i, t=C_{e,k}\text{ for some }k,\theta_i(t)>\frac{\mu_1+\mu_i}{2}, M_i(T_{b(t)})< 32\sigma^2\frac{\log(T)}{\Delta_i^2})]\leq 32\alpha\sigma^2\frac{\log(T)}{\Delta_i^2}+1\label{e25}
	\end{equation} 
	due to the $\{M_i(T_{b(t)})< 32\sigma^2\frac{\log(T)}{\Delta_i^2}\}$ condition restricting the number of times we can count a unique cycle. Similarly the condition $\{A_t=i, A_{t-1}=1\}$ means that $t-1$ is either a cycle beginning or end point, and again by $M_1(T_{b(t)})< 32\sigma^2\frac{\log(T)}{\Delta_i^2}$ limiting the number of times we can count a unique cycle with the first action we have
	\begin{equation}
		\Ex[\sum_{t=2}^{T}\1(A_t=i, A_{t-1}=1, \theta_i(t)\leq\frac{\mu_1+\mu_i}{2}, M_1(T_{b(t)})< 32\sigma^2\frac{\log(T)}{\Delta_i^2})]\leq 32\alpha\sigma^2\frac{\log(T)}{\Delta_i^2}+1.\label{e26}
	\end{equation} 
	
	Finally the only summands we did not bound are in \eqref{p3} and \eqref{p6}. We will start with the harder \eqref{p3} one and use the analysis there to bound \eqref{p6} at the end. Let $\mathcal{\hat{H}}_t=\{\Hp_t,\theta_{A_1}(1),\theta_{A_2}(2),...,\theta_{A_t}(t)\}$ and define the following stopping times for $\{\mathcal{\hat{H}}_t\}$
	\begin{equation*}
		\tau_{b,k} =
		\left\{
		\begin{array}{ll}
			\min\{t\in\mathbb{Z}^+| A_t=i, t\neq C_{e,k}\text{ for all }k,\theta_i(t)>\frac{\mu_1+\mu_i}{2}\}  & \mbox{if } k=1 \\
			\min\{t\in\mathbb{Z}^+|A_t=i, t\neq C_{e,k}\text{ for all }k,\theta_i(t)>\frac{\mu_1+\mu_i}{2}, t>\tau_{e,k-1}\} & \mbox{if } k>1
		\end{array}
		\right.
	\end{equation*}
	and
	\begin{equation*}
		\tau_{e,k}=\min\{t\in\mathbb{Z}^+| A_t\neq i\text{ or }\theta_i(t)\leq \frac{\mu_1+\mu_i}{2}\text{ such that } t>\tau_{b,k}\}.
	\end{equation*}
	Note that if any of these $\min$ operators are over an empty set, then the random variable is set to infinity. By the definitions of $\tau_{b,k}$ and $\tau_{e,k}$, it is easy to see that $\1(A_t=i, t\neq C_{e,k}\text{ for all }k,\theta_i(t)>\frac{\mu_1+\mu_i}{2}, M_i(T_{b(t)})< 32\sigma^2\frac{\log(T)}{\Delta_i^2})=1$ only if $\tau_{b,k}\leq t<\tau_{e,k}$ for some $k$. This observations suggests that it is enough to only consider the intervals of $[\tau_{b,k},\tau_{e,k}-1]$ while summing over the elements of $1(A_t=i, t\neq C_{e,k}\text{ for all }k,\theta_i(t)>\frac{\mu_1+\mu_i}{2}, M_i(T_{b(t)})< 32\sigma^2\frac{\log(T)}{\Delta_i^2})$ in \eqref{p3}. However, we can only consider an interval of $[\tau_{b,k},\tau_{e,k}-1]$ if $\tau_{b,k}<\infty$. That means we have the following bound
	
	\begin{align}
		&\sum_{t=1}^{T}\1(A_t=i, t\neq C_{e,k}\text{ for all }k,\theta_i(t)>\frac{\mu_1+\mu_i}{2}, M_i(T_{b(t)})< 32\sigma^2\frac{\log(T)}{\Delta_i^2})\nonumber\\
		&=\sum_{t=1}^{T}\1(A_t=i, t\neq C_{e,k}\text{ for all }k,\theta_i(t)>\frac{\mu_1+\mu_i}{2}, M_i(T_{b(t)})< 32\sigma^2\frac{\log(T)}{\Delta_i^2}, M_1(t)\leq T)\label{e27}\\
		&\leq\sum_{k=1}^{\infty}\1(\tau_{b,k}<\infty)\nonumber\\
		&\qquad\sum_{t=\tau_{b,k}}^{\tau_{e,k}-1}\1(A_t=i, t\neq C_{e,k}\text{ for all }k,\theta_i(t)>\frac{\mu_1+\mu_i}{2}, M_i(T_{b(t)})< 32\sigma^2\frac{\log(T)}{\Delta_i^2}, M_1(t)\leq T)\label{e28}\\
		&\leq\sum_{k=1}^{\infty}\1(\tau_{b,k}<\infty, M_i(T_{b(\tau_{b,k})})< 32\sigma^2\frac{\log(T)}{\Delta_i^2}, M_1(\tau_{b,k})\leq T)(\tau_{e,k}-\tau_{b,k}),\label{e29}
	\end{align}
	where \eqref{e27} follows from the fact that $M_1(t)\leq T$ is satisfied for any $t\leq T$. Earlier discussion leads to \eqref{e28}. Finally, note that the time interval $[\tau_{b,k},\tau_{e,k}-1]$, where only the $i^{th}$ action is played, is inside a single cycle, so $M_i(T_{b(t)})$ and $M_1(t)$ stay the same for any $t\in[\tau_{b,k},\tau_{e,k}-1]$. This observation and ignoring the condition $\{A_t=i, t\neq C_{e,k}\text{ for all }k,\theta_i(t)>\frac{\mu_1+\mu_i}{2}\}$ inside the indicator functions lead to \eqref{e29}. As the for expectation of the summands in \eqref{e29}: 
	\begin{align}
		&\Ex[\1(\tau_{b,k}<\infty, M_i(T_{b(\tau_{b,k})})< 32\sigma^2\frac{\log(T)}{\Delta_i^2}, M_1(\tau_{b,k})\leq T)(\tau_{e,k}-\tau_{b,k})]\nonumber\\
		&=\Ex[\Ex[\1(\tau_{b,k}<\infty, M_i(T_{b(\tau_{b,k})})< 32\sigma^2\frac{\log(T)}{\Delta_i^2}, M_1(\tau_{b,k})\leq T)(\tau_{e,k}-\tau_{b,k})|\mathcal{\hat{H}}_{\tau_{b,k}}]]\nonumber\\
		&=\Ex[\1(\tau_{b,k}<\infty, M_i(T_{b(\tau_{b,k})})< 32\sigma^2\frac{\log(T)}{\Delta_i^2}, M_1(\tau_{b,k})\leq T)\Ex[\tau_{e,k}-\tau_{b,k}|\mathcal{\hat{H}}_{\tau_{b,k}}]]\label{e30},
	\end{align}
	The last inequality is the result of the indicator function inside the expectation being measurable with respect to $\mathcal{\hat{H}}_{\tau_{b,k}}$. If $\tau_{b,k}<\infty$, then conditioned on $\mathcal{\hat{H}}_{\tau_{b,k}}$ we know that $\tau_{e,k}-\tau_{b,k}-1$ is a geometric random variable for failures, where the the success probability is $1-\Prob(A_{\tau_{b,k}+1}=i, \theta_i(\tau_{b,k}+1)>\frac{\mu_1+\mu_i}{2}|\mathcal{\hat{H}}_{\tau_{b,k}})$. The reason is that since $\tau_{b,k}<\infty$, we know that $\tau_{e,k}-\tau_{b,k}\geq 1$ and by the definitions of $\tau_{b,k}$ and $\tau_{e,k}$ $[\tau_{b,k},\tau_{e,k}]$ defines a time interval in a single cycle like we mentioned earlier. These observations show that the sampling process remains the same throughout $[\tau_{b,k},\tau_{e,k}]$, and conditioned on $\mathcal{\hat{H}}_{\tau_{b,k}}$ $[\tau_{b,k}+1,\tau_{e,k}-1]$ is a period of failures if we were to define success as $A_i(t)\neq i$ or $\theta_i(t)\leq \frac{\mu_1+\mu_i}{2}$. The overall analysis proves the following set of inequalities
	\begin{align}
		&\1(\tau_{b,k}<\infty)\Ex[\tau_{e,k}-\tau_{b,k}|\mathcal{\hat{H}}_{\tau_{b,k}}]=\1(\tau_{b,k}<\infty)\frac{1}{1-\Prob(A_{\tau_{b,k}+1}=i, \theta_i(\tau_{b,k}+1)>\frac{\mu_1+\mu_i}{2}|\mathcal{\hat{H}}_{\tau_{b,k}})}\nonumber\\
		&\leq\1(\tau_{b,k}<\infty)\frac{1}{\Prob( \theta_i(\tau_{b,k}+1)\leq\frac{\mu_1+\mu_i}{2}|\mathcal{\hat{H}}_{\tau_{b,k}})}\nonumber
	\end{align}
	almost surely. Putting this inequality inside \eqref{e30} and summing the elements like in \eqref{e29} shows that
	\begin{align}
		&\Ex[\sum_{t=1}^{T}\1(A_t=i, t\neq C_{e,k}\text{ for all }k,\theta_i(t)>\frac{\mu_1+\mu_i}{2}, M_i(T_{b(t)})< 32\sigma^2\frac{\log(T)}{\Delta_i^2})]\nonumber\\
		&\leq\Ex[\sum_{k=1}^{\infty}\1(\tau_{b,k}<\infty, M_i(T_{b(\tau_{b,k})})< 32\sigma^2\frac{\log(T)}{\Delta_i^2}, M_1(\tau_{b,k})\leq T)\frac{1}{\Prob( \theta_i(\tau_{b,k}+1)\leq\frac{\mu_1+\mu_i}{2}|\mathcal{\hat{H}}_{\tau_{b,k}})}]\nonumber\\
		&=\Ex[\sum_{k=1}^{\infty}\1(\tau_{b,k}<\infty, M_i(T_{b(\tau_{b,k})})< 32\sigma^2\frac{\log(T)}{\Delta_i^2}, M_1(\tau_{b,k})\leq T, A_{\tau_{e,k}}=i)\frac{1}{\Prob( \theta_i(\tau_{b,k}+1)\leq\frac{\mu_1+\mu_i}{2}|\mathcal{\hat{H}}_{\tau_{b,k}})}]\label{e31}\\
		&+\Ex[\sum_{k=1}^{\infty}\1(\tau_{b,k}<\infty, M_i(T_{b(\tau_{b,k})})< 32\sigma^2\frac{\log(T)}{\Delta_i^2}, M_1(\tau_{b,k})\leq T, A_{\tau_{e,k}}\neq i)\frac{1}{\Prob( \theta_i(\tau_{b,k}+1)\leq\frac{\mu_1+\mu_i}{2}|\mathcal{\hat{H}}_{\tau_{b,k}})}]\label{e35}
	\end{align}
	Note by the earlier analysis we know that on $\{\tau_{b,k}<\infty\}$ $\tau_{e,k}$ is almost surely finite. Then the last equality follows from dividing the terms according to $A_{\tau_{e,k}}=i$ or not. We will first bound the summand in \eqref{e31}. To that end, we now analyze $\1(\tau_{b,k}<\infty)\Prob(A_{\tau_{e,k}}=1|\mathcal{\hat{H}}_{\tau_{b,k}})$. Note that by the earlier analysis we know the sampling distributions remains the same throughout $[\tau_{b,k},\tau_{e,k}]$, which leads to
	\begin{align}
		&\Prob(A_{\tau_{e,k}}=1|\mathcal{\hat{H}}_{\tau_{b,k}})=\frac{\Prob(A_{\tau_{b,k}+1}=1|\mathcal{\hat{H}}_{\tau_{b,k}})}{1-\Prob(A_{\tau_{b,k}+1}=i, \theta_i(\tau_{b,k}+1)>\frac{\mu_1+\mu_i}{2}|\mathcal{\hat{H}}_{\tau_{b,k}})}\label{e32}\\
		&\geq\frac{\Prob(\theta_1(\tau_{b,k}+1)>\frac{\mu_1+\mu_i}{2}, \theta_j(\tau_{b,k}+1)\leq\frac{\mu_1+\mu_i}{2}\text{ for all } j\neq 1|\mathcal{\hat{H}}_{\tau_{b,k}})}{1-\Prob(A_{\tau_{b,k}+1}=i, \theta_i(\tau_{b,k}+1)>\frac{\mu_1+\mu_i}{2}|\mathcal{\hat{H}}_{\tau_{b,k}})}\label{e33}\\
		&=\frac{\Prob(\theta_1(\tau_{b,k}+1)>\frac{\mu_1+\mu_i}{2}|\mathcal{\hat{H}}_{\tau_{b,k}})\Prob( \theta_j(\tau_{b,k}+1)\leq\frac{\mu_1+\mu_i}{2}\text{ for all } j\neq 1|\mathcal{\hat{H}}_{\tau_{b,k}})}{1-\Prob(A_{\tau_{b,k}+1}=i, \theta_i(\tau_{b,k}+1)>\frac{\mu_1+\mu_i}{2}|\mathcal{\hat{H}}_{\tau_{b,k}})}\label{e34}
	\end{align}
	on $\{\tau_{b,k}<\infty\}$. Note that since $\tau_{b,k}<\infty$, all the conditional probabilities stated here are almost surely positive. Here \eqref{e32} trivially follows from the definition of success and failure of the geometric random variable we have defined earlier, i.e. $\tau_{e,k}-\tau_{b,k}-1$. \eqref{e33} is the result of the action selection process where we know that for the first action to be chosen the sample $\theta_1$ has to be at least as big as the other samples. Finally, conditioned on $\mathcal{\hat{H}}_{\tau_{b,k}}$, $\{\theta_j(\tau_{b,k}+1)\}_{j=1}^K$ are independent, which results in \eqref{e34}. On the other hand
	\begin{align}
		&\Prob(A_{\tau_{e,k}}=i|\mathcal{\hat{H}}_{\tau_{b,k}})=\frac{\Prob(A_{\tau_{b,k}+1}=i,\theta_i(\tau_{b,k}+1)\leq\frac{\mu_1+\mu_i}{2}|\mathcal{\hat{H}}_{\tau_{b,k}})}{1-\Prob(A_{\tau_{b,k}+1}=i, \theta_i(\tau_{b,k}+1)>\frac{\mu_1+\mu_i}{2}|\mathcal{\hat{H}}_{\tau_{b,k}})}\label{e36}\\
		&\leq\frac{\Prob(\theta_j(\tau_{b,k}+1)\leq\frac{\mu_1+\mu_i}{2}\text{ for all }j|\mathcal{\hat{H}}_{\tau_{b,k}})}{1-\Prob(A_{\tau_{b,k}+1}=i, \theta_i(\tau_{b,k}+1)>\frac{\mu_1+\mu_i}{2}|\mathcal{\hat{H}}_{\tau_{b,k}})}\label{e37}\\
		&=\frac{\Prob(\theta_1(\tau_{b,k}+1)\leq\frac{\mu_1+\mu_i}{2}|\mathcal{\hat{H}}_{\tau_{b,k}})\Prob( \theta_j(\tau_{b,k}+1)\leq\frac{\mu_1+\mu_i}{2}\text{ for all } j\neq 1|\mathcal{\hat{H}}_{\tau_{b,k}})}{1-\Prob(A_{\tau_{b,k}+1}=i, \theta_i(\tau_{b,k}+1)>\frac{\mu_1+\mu_i}{2}|\mathcal{\hat{H}}_{\tau_{b,k}})}\label{e38}
	\end{align}
	on $\{\tau_{b,k}<\infty\}$. Note that \eqref{e36} follows from the fact that $A_{\tau_{e,k}}=i$ only if $\theta_i(\tau_{e,k})\leq\frac{\mu_1+\mu_i}{2}$ by the definition of $\tau_{e,k}$. Considering that $\{A_{\tau_{b,k}+1}=i,\theta_i(\tau_{b,k}+1)\leq\frac{\mu_1+\mu_i}{2}\}$ means $\theta_j(\tau_{b,k}+1)\leq\frac{\mu_1+\mu_i}{2}\text{ for all }j$, we have \eqref{e37}. \eqref{e38} is the result of the conditional independence. Combining \eqref{e34} and \eqref{e38} leads to
	\begin{equation}
		\Prob(A_{\tau_{e,k}}=i|\mathcal{\hat{H}}_{\tau_{b,k}})\leq\frac{\Prob(\theta_1(\tau_{b,k}+1)\leq\frac{\mu_1+\mu_i}{2}|\mathcal{\hat{H}}_{\tau_{b,k}})}{\Prob(\theta_1(\tau_{b,k}+1)>\frac{\mu_1+\mu_i}{2}|\mathcal{\hat{H}}_{\tau_{b,k}})}\Prob(A_{\tau_{e,k}}=1|\mathcal{\hat{H}}_{\tau_{b,k}}).\label{e43}
	\end{equation}
	on $\{\tau_{b,k}<\infty\}$. As a result
	\begin{align}
		&\Ex[\1(\tau_{b,k}<\infty, M_i(T_{b(\tau_{b,k})})< 32\sigma^2\frac{\log(T)}{\Delta_i^2}, M_1(\tau_{b,k})\leq T, A_{\tau_{e,k}}=i)\frac{1}{\Prob( \theta_i(\tau_{b,k}+1)\leq\frac{\mu_1+\mu_i}{2}|\mathcal{\hat{H}}_{\tau_{b,k}})}]\nonumber\\
		&=\Ex[\Ex[\1(\tau_{b,k}<\infty, M_i(T_{b(\tau_{b,k})})< 32\sigma^2\frac{\log(T)}{\Delta_i^2}, M_1(\tau_{b,k})\leq T)\frac{1}{\Prob( \theta_i(\tau_{b,k}+1)\leq\frac{\mu_1+\mu_i}{2}|\mathcal{\hat{H}}_{\tau_{b,k}})}\nonumber\\
		&\qquad\qquad\qquad\qquad\qquad\qquad\qquad\qquad\qquad\qquad\qquad\1(A_{\tau_{e,k}}=i)|\mathcal{\hat{H}}_{\tau_{b,k}}]]\nonumber\\
		&=\Ex[\1(\tau_{b,k}<\infty, M_i(T_{b(\tau_{b,k})})< 32\sigma^2\frac{\log(T)}{\Delta_i^2}, M_1(\tau_{b,k})\leq T)\frac{1}{\Prob( \theta_i(\tau_{b,k}+1)\leq\frac{\mu_1+\mu_i}{2}|\mathcal{\hat{H}}_{\tau_{b,k}})}\nonumber\\
		&\qquad\qquad\qquad\qquad\qquad\qquad\qquad\qquad\qquad\qquad\qquad\Prob(A_{\tau_{e,k}}=i|\mathcal{\hat{H}}_{\tau_{b,k}})]\label{e39}\\
		&\leq\Ex[\1(\tau_{b,k}<\infty, M_i(T_{b(\tau_{b,k})})< 32\sigma^2\frac{\log(T)}{\Delta_i^2}, M_1(\tau_{b,k})\leq T)\frac{1}{\Prob( \theta_i(\tau_{b,k}+1)\leq\frac{\mu_1+\mu_i}{2}|\mathcal{\hat{H}}_{\tau_{b,k}})}\nonumber\\
		&\qquad\qquad\qquad\qquad\frac{\Prob(\theta_1(\tau_{b,k}+1)\leq\frac{\mu_1+\mu_i}{2}|\mathcal{\hat{H}}_{\tau_{b,k}})}{\Prob(\theta_1(\tau_{b,k}+1)>\frac{\mu_1+\mu_i}{2}|\mathcal{\hat{H}}_{\tau_{b,k}})}\Prob(A_{\tau_{e,k}}=1|\mathcal{\hat{H}}_{\tau_{b,k}})]\label{e40}\\
		&=\Ex[\1(\tau_{b,k}<\infty, M_i(T_{b(\tau_{b,k})})< 32\sigma^2\frac{\log(T)}{\Delta_i^2}, M_1(\tau_{b,k})\leq T)\frac{1}{\Prob( \theta_i(\tau_{b,k}+1)\leq\frac{\mu_1+\mu_i}{2}|\mathcal{\hat{H}}_{\tau_{b,k}})}\nonumber\\
		&\qquad\qquad\qquad\qquad\frac{\Prob(\theta_1(\tau_{b,k}+1)\leq\frac{\mu_1+\mu_i}{2}|\mathcal{\hat{H}}_{\tau_{b,k}})}{\Prob(\theta_1(\tau_{b,k}+1)>\frac{\mu_1+\mu_i}{2}|\mathcal{\hat{H}}_{\tau_{b,k}})}\1(A_{\tau_{e,k}}=1)]\nonumber\\
		&\leq \Ex[\1(\tau_{b,k}<\infty, A_{\tau_{e,k}}=1,M_i(T_{b(\tau_{b,k})})< 32\sigma^2\frac{\log(T)}{\Delta_i^2}, M_1(\tau_{b,k})\leq T)\frac{1}{\Prob^2( \theta_i(\tau_{b,k}+1)\leq\frac{\mu_1+\mu_i}{2}|\mathcal{\hat{H}}_{\tau_{b,k}})}]\nonumber\\
		&+\Ex[\1(\tau_{b,k}<\infty, A_{\tau_{e,k}}=1,M_i(T_{b(\tau_{b,k})})< 32\sigma^2\frac{\log(T)}{\Delta_i^2}, M_1(\tau_{b,k})\leq T)\frac{\Prob^2(\theta_1(\tau_{b,k}+1)\leq\frac{\mu_1+\mu_i}{2}|\mathcal{\hat{H}}_{\tau_{b,k}})}{\Prob^2(\theta_1(\tau_{b,k}+1)>\frac{\mu_1+\mu_i}{2}|\mathcal{\hat{H}}_{\tau_{b,k}})}]\label{e41}.
	\end{align}
	\eqref{e39} follows from the measurability of the indicator function  and the inverse probability term with respect to $\mathcal{\hat{H}}_{\tau_{b,k}}$. \eqref{e43} leads to \eqref{e40}. Finally, the last inequality follows from the fact that $2\sqrt{a\times b}\leq a+b$ for any non-negative $a$ and $b$. Here \eqref{e41} shows that
	\begin{align}
		&\Ex[\sum_{k=1}^{\infty}\1(\tau_{b,k}<\infty, M_i(T_{b(\tau_{b,k})})< 32\sigma^2\frac{\log(T)}{\Delta_i^2}, M_1(\tau_{b,k})\leq T, A_{\tau_{e,k}}=i)\frac{1}{\Prob( \theta_i(\tau_{b,k}+1)\leq\frac{\mu_1+\mu_i}{2}|\mathcal{\hat{H}}_{\tau_{b,k}})}]\nonumber\\
		&\leq\Ex[\sum_{k=1}^{\infty}\1(\tau_{b,k}<\infty, A_{\tau_{e,k}}=1,M_i(T_{b(\tau_{b,k})})< 32\sigma^2\frac{\log(T)}{\Delta_i^2})\frac{1}{\Prob^2( \theta_i(\tau_{b,k}+1)\leq\frac{\mu_1+\mu_i}{2}|\mathcal{\hat{H}}_{\tau_{b,k}})}]\nonumber\\
		&+\Ex[\sum_{k=1}^{\infty}\1(\tau_{b,k}<\infty, A_{\tau_{e,k}}=1, M_1(\tau_{b,k})\leq T)\frac{\Prob^2(\theta_1(\tau_{b,k}+1)\leq\frac{\mu_1+\mu_i}{2}|\mathcal{\hat{H}}_{\tau_{b,k}})}{\Prob^2(\theta_1(\tau_{b,k}+1)>\frac{\mu_1+\mu_i}{2}|\mathcal{\hat{H}}_{\tau_{b,k}})}]\label{e46}
	\end{align}
	 Here we eliminated one condition from each indicator function in the last inequality. However, we know by the action selection process of Thompson sampling and $b(\tau_{b,k}+1)=b(\tau_{b,k})$ equality due to $\tau_{b,k}$ and $\tau_{b,k}+1$ being in the same cycle that
	\begin{equation}
		\Prob(\theta_1(\tau_{b,k}+1)>\frac{\mu_1+\mu_i}{2}|\mathcal{\hat{H}}_{\tau_{b,k}})=Q\Bigg(\frac{\sqrt{1+M_1(T_{b(\tau_{b,k})})}}{\sigma}\Big(\frac{\mu_1+\mu_i}{2}-\frac{S_1(T_{b(\tau_{b,k})})}{1+M_1(T_{b(\tau_{b,k})})}\Big)
		\Bigg)\label{e44}
	\end{equation}
	and
	\begin{equation}
		\Prob(\theta_i(\tau_{b,k}+1)\leq\frac{\mu_1+\mu_i}{2}|\mathcal{\hat{H}}_{\tau_{b,k}})=Q\Bigg(\frac{\sqrt{1+M_i(T_{b(\tau_{b,k})})}}{\sigma}\Big(\frac{S_i(T_{b(\tau_{b,k})})}{1+M_i(T_{b(\tau_{b,k})})}-\frac{\mu_1+\mu_i}{2}\Big)
		\Bigg)\label{e45}
	\end{equation}
	on $\{\tau_{b,k}<\infty\}$. Considering \eqref{e44} and \eqref{e45}, we see can view \eqref{e46} as  
	\begin{align}
		&\Ex[\sum_{k=1}^{\infty}\1(\tau_{b,k}<\infty, M_i(T_{b(\tau_{b,k})})< 32\sigma^2\frac{\log(T)}{\Delta_i^2}, M_1(\tau_{b,k})\leq T, A_{\tau_{e,k}}=i)\frac{1}{\Prob( \theta_i(\tau_{b,k}+1)\leq\frac{\mu_1+\mu_i}{2}|\mathcal{\hat{H}}_{\tau_{b,k}})}]\nonumber\\
		&\leq\Ex[\sum_{k=1}^{\infty}\1(\tau_{b,k}<\infty, A_{\tau_{e,k}}=1,M_i(T_{b(\tau_{b,k})})< 32\sigma^2\frac{\log(T)}{\Delta_i^2})f_1(M_i(T_{b(\tau_{b,k})}),S_i(T_{b(\tau_{b,k})}))]\nonumber\\
		&+\Ex[\sum_{k=1}^{\infty}\1(\tau_{b,k}<\infty, A_{\tau_{e,k}}=1, M_1(\tau_{b,k})\leq T)f_2(M_1(T_{b(\tau_{b,k})}),S_1(T_{b(\tau_{b,k})}))]\nonumber
	\end{align}
	where $f_1$ and $f_2$ are some functions with domain $\mathbb{R}_{\geq0}\times\mathbb{R}$. If we let
	\begin{equation*}
		\hat{\tau}_{i,j} =
		\left\{
		\begin{array}{ll}
			\min\{t\in\mathbb{Z}^+| A_t=i, t=C_{b,k} \text{ or } C_{e,k}\text{ for some }k\}  & \mbox{if } j=1 \\
			\min\{t\in\mathbb{Z}^+| A_t=i, t=C_{b,k} \text{ or } C_{e,k}\text{ for some }k, t>\hat{\tau}_{i,j-1}\} & \mbox{if } j>1,
		\end{array}
		\right.
	\end{equation*}
	which denotes the time indices where we choose the $i^{th}$ arm at the beginning or at the end of a cycle, we notice that $M_i(T_{b(\tau_{b,k})})=M_i(T_{b(\hat{\tau}_{i,j})})$ and $S_i(T_{b(\tau_{b,k})})=S_i(T_{b(\hat{\tau}_{i,j})})$ for some $j$ on $\{\tau_{b,k}<\infty\}$ since $\tau_{b,k}<\infty$ means that the agent has played the $i^{th}$ action at the beginning of the cycle containing $\tau_{b,k}$. However, when we look $\1(\tau_{b,k}<\infty, A_{\tau_{e,k}}=1,M_i(T_{b(\tau_{b,k})})< 32\sigma^2\frac{\log(T)}{\Delta_i^2})f_1(M_i(T_{b(\tau_{b,k})}),S_i(T_{b(\tau_{b,k})}))$ terms, we realize that each $[\tau_{b,k},\tau_{e,k}-1]$ will belong to a different cycle if this indicator function is non-zero due to $A_{\tau_{e,k}}=1$, and the condition $\{M_i(T_{b(\tau_{b,k})})< 32\sigma^2\frac{\log(T)}{\Delta_i^2}\}$ implies that the indicator function can be non-zero only if $[\tau_{b,k},\tau_{e,k}]$ is inside the one of the first $\lceil32\alpha\sigma^2\frac{\log(T)}{\Delta_i^2}\rceil$ cycles of the $i^{th}$ arm. The overall discussion leads to the following bound:
	\begin{align}
		&\sum_{k=1}^{\infty}\1(\tau_{b,k}<\infty, A_{\tau_{e,k}}=1,M_i(T_{b(\tau_{b,k})})< 32\sigma^2\frac{\log(T)}{\Delta_i^2})f_1(M_i(T_{b(\tau_{b,k})}),S_i(T_{b(\tau_{b,k})}))\nonumber\\
		&\leq \sum_{j=1}^{\lceil32\alpha\sigma^2\frac{\log(T)}{\Delta_i^2}\rceil}\1(\hat{\tau}_{i,j}<\infty)f_1(M_i(T_{b(\tau_{i,j})}),S_i(T_{b(\tau_{i,j})}))\label{e47}
	\end{align} 
	On the other hand, on $\{\tau_{b,k}<\infty, A_{\tau_{e,k}}=1\}$, $M_1(T_{b(\tau_{b,k})})=M_1(T_{b(\hat{\tau}_{1,j})})$ and $S_1(T_{b(\tau_{b,k})})=S_1(T_{b(\hat{\tau}_{1,j})})$ for some $j$ since $\tau_{e,k}$ here is the cycle end point. Similar to the earlier analysis, for each $k$ that satisfies the $\{\tau_{b,k}<\infty, A_{\tau_{e,k}}=1, M_1(\tau_{b,k})\leq T\}$ condition, $[\tau_{b,k},\tau_{e,k}]$ will be in a distinct cycle from the first $T+1$ ones containing the first arm. This observation shows that
	\begin{align}
		&\sum_{k=1}^{\infty}\1(\tau_{b,k}<\infty, A_{\tau_{e,k}}=1, M_1(\tau_{b,k})\leq T)f_2(M_1(T_{b(\tau_{b,k})}),S_1(T_{b(\tau_{b,k})}))\nonumber\\
		&\leq\sum_{j=1}^{T+1}\1(\hat{\tau}_{1,j}<\infty)f_2(M_1(T_{b(\tau_{1,j})}),S_1(T_{b(\tau_{1,j})}))\label{e48}
	\end{align}
	In view of \eqref{e46}, \eqref{e47} and \eqref{e48} result in the following bound
	\begin{align}
		&\Ex[\sum_{k=1}^{\infty}\1(\tau_{b,k}<\infty, M_i(T_{b(\tau_{b,k})})< 32\sigma^2\frac{\log(T)}{\Delta_i^2}, M_1(\tau_{b,k})\leq T, A_{\tau_{e,k}}=i)\frac{1}{\Prob( \theta_i(\tau_{b,k}+1)\leq\frac{\mu_1+\mu_i}{2}|\mathcal{\hat{H}}_{\tau_{b,k}})}]\nonumber\\
		&\leq\Ex\Bigg[\sum_{j=1}^{\lceil32\alpha\sigma^2\frac{\log(T)}{\Delta_i^2}\rceil}\1(\hat{\tau}_{i,j}<\infty)\frac{1}{Q^2\Bigg(\frac{\sqrt{1+M_i(T_{b(\hat{\tau}_{i,j})})}}{\sigma}\Big(\frac{S_i(T_{b(\hat{\tau}_{i,j})})}{1+M_i(T_{b(\hat{\tau}_{i,j})})}-\frac{\mu_1+\mu_i}{2}\Big)
			\Bigg)}\Bigg]\nonumber\\
		&+\Ex\Bigg[\sum_{j=1}^{T+1}\1(\hat{\tau}_{1,j}<\infty)\Bigg(\frac{1}{Q\Bigg(\frac{\sqrt{1+M_1(T_{b(\hat{\tau}_{1,j})})}}{\sigma}\Big(\frac{\mu_1+\mu_i}{2}-\frac{S_1(T_{b(\hat{\tau}_{1,j})})}{1+M_1(T_{b(\hat{\tau}_{1,j})})}\Big)
			\Bigg)}-1\Bigg)^2\Bigg]\label{e49}
	\end{align}
	where we replaced $f_1$ and $f_2$ with their exact forms. Note that although we did not define $f_1$ and $f_2$ explicitly, it is easy understand their exact formulation from the earlier discussion, i.e. from the conditional probability functions in \eqref{e46} and the equalities stated in \eqref{e44} and \eqref{e45}. Here we know that the first expectation to the right-side of the inequality is upper bounded by $C(1+32\alpha\sigma^2\frac{\log(T)}{\Delta_i^2})$ by Proposition \ref{prop2}, where $C$ is an absolute constant. On the other hand, we have
	\begin{multline*}
		\Ex\Bigg[\1(\hat{\tau}_{1,j}<\infty)\Bigg(\frac{1}{Q\Bigg(\frac{\sqrt{1+M_1(T_{b(\hat{\tau}_{1,j})})}}{\sigma}\Big(\frac{\mu_1+\mu_i}{2}-\frac{S_1(T_{b(\hat{\tau}_{1,j})})}{1+M_1(T_{b(\hat{\tau}_{1,j})})}\Big)
			\Bigg)}-1\Bigg)^2\Bigg] \\
		\leq
		\left\{
		\begin{array}{ll}
			C  & \mbox{if } j\leq 1024\alpha\sigma^2\frac{\log(T)}{\Delta_i^2} \\
			C/T & \mbox{if } j>1024\alpha\sigma^2\frac{\log(T)}{\Delta_i^2},
		\end{array}
		\right.
	\end{multline*}   
	where $C$ is an absolute constant independent of the system variables. Note that the constant bound follows from Proposition \ref{prop2}, while $O(1/T)$ bound is the result of Proposition \ref{prop3}. This overall analysis and \eqref{e49} lead to
	\begin{align}
		&\Ex[\sum_{k=1}^{\infty}\1(\tau_{b,k}<\infty, M_i(T_{b(\tau_{b,k})})< 32\sigma^2\frac{\log(T)}{\Delta_i^2}, M_1(\tau_{b,k})\leq T, A_{\tau_{e,k}}=i)\frac{1}{\Prob( \theta_i(\tau_{b,k}+1)\leq\frac{\mu_1+\mu_i}{2}|\mathcal{\hat{H}}_{\tau_{b,k}})}]\nonumber\\
		&\leq C(2+1056\alpha\sigma^2\frac{\log(T)}{\Delta_i^2})\label{e50}
	\end{align}
	for an absolute constant $C$. This proof bounds the term in \eqref{e31}. However, with the analysis we have done so far, bounding the term in \eqref{e35} is almost immediate. First note that, similar to the earlier analysis, $M_i(T_{b(\tau_{b,k})})=M_i(T_{b(\hat{\tau}_{i,j})})$ and $S_i(T_{b(\tau_{b,k})})=S_i(T_{b(\hat{\tau}_{i,j})})$ for some $j$ on $\{\tau_{b,k}<\infty\}$ since $\tau_{b,k}<\infty$ means that the agent has played the $i^{th}$ action at the beginning of the cycle containing $\tau_{b,k}$. Then by \eqref{e45}
	\begin{equation*}
		\Prob(\theta_i(\tau_{b,k}+1)\leq\frac{\mu_1+\mu_i}{2}|\mathcal{\hat{H}}_{\tau_{b,k}})=Q\Bigg(\frac{\sqrt{1+M_i(T_{b(\hat{\tau}_{i,j})})}}{\sigma}\Big(\frac{S_i(T_{b(\hat{\tau}_{i,j})})}{1+M_i(T_{b(\hat{\tau}_{i,j})})}-\frac{\mu_1+\mu_i}{2}\Big)
		\Bigg)
	\end{equation*}
	on $\{\tau_{b,k}<\infty\}$ for some $j$.  However, for each $k$ that satisfies the condition $\{\tau_{b,k}<\infty, M_i(T_{b(\tau_{b,k})})< 32\sigma^2\frac{\log(T)}{\Delta_i^2}, M_1(\tau_{b,k})\leq T, A_{\tau_{e,k}}\neq i\}$, $[\tau_{b,k},\tau_{e,k}]$ has to be in a distinct cycle from the first $\lceil32\alpha\sigma^2\frac{\log(T)}{\Delta_i^2}\rceil$ cycles of the $i^{th}$ arm. Note that the distinctiveness follows from the fact that $A_{\tau_{e,k}}\neq i$ condition ends the cycle, while the upper bound on the number of cycles is the result of $\{M_i(T_{b(\tau_{b,k})})< 32\sigma^2\frac{\log(T)}{\Delta_i^2}\}$ and the way Algorithm \ref{algo} is implemented. These arguments naturally lead to
	\begin{align}
		&\Ex[\sum_{k=1}^{\infty}\1(\tau_{b,k}<\infty, M_i(T_{b(\tau_{b,k})})< 32\sigma^2\frac{\log(T)}{\Delta_i^2}, M_1(\tau_{b,k})\leq T, A_{\tau_{e,k}}\neq i)\frac{1}{\Prob( \theta_i(\tau_{b,k}+1)\leq\frac{\mu_1+\mu_i}{2}|\mathcal{\hat{H}}_{\tau_{b,k}})}]\nonumber\\
		&\leq \Ex\Bigg[\sum_{j=1}^{\lceil32\alpha\sigma^2\frac{\log(T)}{\Delta_i^2}\rceil}\1(\hat{\tau}_{i,j}<\infty)\frac{1}{Q\Bigg(\frac{\sqrt{1+M_i(T_{b(\hat{\tau}_{i,j})})}}{\sigma}\Big(\frac{S_i(T_{b(\hat{\tau}_{i,j})})}{1+M_i(T_{b(\hat{\tau}_{i,j})})}-\frac{\mu_1+\mu_i}{2}\Big)
			\Bigg)}\Bigg]\nonumber\\
		&\leq C(1+32\alpha\sigma^2\frac{\log(T)}{\Delta_i^2})\label{e51}
	\end{align}
	where the last inequality follows from Proposition \ref{prop2} and the range of $Q$ being from zero to one. In view of \eqref{e31} and \eqref{e35}, combining \eqref{e50} and \eqref{e51} shows that
	\begin{align}
		&\Ex[\sum_{t=1}^{T}\1(A_t=i, t\neq C_{e,k}\text{ for all }k,\theta_i(t)>\frac{\mu_1+\mu_i}{2}, M_i(T_{b(t)})< 32\sigma^2\frac{\log(T)}{\Delta_i^2})]\nonumber\\
		&\leq C(3+1088\alpha\sigma^2\frac{\log(T)}{\Delta_i^2})\label{e52}
	\end{align}
	where $C$ is an absolute constant. This finishes the analysis of the summand in \eqref{p3}.
	
	Finally, we will bound the summand in \eqref{p6}. However, most of the proof ideas will follow from earlier analysis. First note that if $\theta_1(t)>\frac{\mu_1+\mu_i}{2}$, while $\theta_j(t)\leq\frac{\mu_1+\mu_i}{2}$ for $j\geq 2$, then $A_t=1$:
	\begin{equation}
		\Prob(A_t=1|\Hp_{t-1})\geq\Prob(\theta_1(t)>\frac{\mu_1+\mu_i}{2}|\Hp_{t-1})\Prob(\theta_j(t)\leq\frac{\mu_1+\mu_i}{2}\text{ for all } j\neq 1|\Hp_{t-1})\label{e53}
	\end{equation} 
	where we also used the conditional independence of $\theta_j(t)$s given $\Hp_{t-1}$. On the other hand, if $A_t=i$ and $\theta_i(t)\leq \frac{\mu_1+\mu_i}{2}$, then $\theta_j(t)\leq \frac{\mu_1+\mu_i}{2}$ for all $j\geq 2$:
	\begin{equation}
		\Prob(A_t=i, \theta_i(t)\leq \frac{\mu_1+\mu_i}{2}|\Hp_{t-1})\leq\Prob(\theta_j(t)\leq\frac{\mu_1+\mu_i}{2}\text{ for all } j\neq 1|\Hp_{t-1})\label{e54}.
	\end{equation}
	The combination of \eqref{e53} and \eqref{e54} lead to
	\begin{equation}
		\Prob(A_t=i, \theta_i(t)\leq \frac{\mu_1+\mu_i}{2}|\Hp_{t-1})\leq\frac{\Prob(A_t=1|\Hp_{t-1})}{\Prob(\theta_1(t)>\frac{\mu_1+\mu_i}{2}|\Hp_{t-1})}.\label{e55}
	\end{equation}
	Note that considering the action selection process of Algorithm \ref{algo} where conditioned on the past observations $\theta_1$ has a Gaussian distribution, $\Prob(\theta_1(t)>\frac{\mu_1+\mu_i}{2}|\Hp_{t-1})$ will almost surely be non-zero.
	Then we have
	\begin{align}
		&\Ex[\1(A_t=i, A_{t-1}\neq1, \theta_i(t)\leq\frac{\mu_1+\mu_i}{2}, M_1(T_{b(t)})< 32\sigma^2\frac{\log(T)}{\Delta_i^2})]\nonumber\\
		&=\Ex[\Ex[\1(A_t=i, A_{t-1}\neq1, \theta_i(t)\leq\frac{\mu_1+\mu_i}{2}, M_1(T_{b(t)})< 32\sigma^2\frac{\log(T)}{\Delta_i^2})|\Hp_{t-1}]]\nonumber\\
		&=\Ex[\Prob(A_t=i, \theta_i(t)\leq \frac{\mu_1+\mu_i}{2}|\Hp_{t-1})\1(A_{t-1}\neq1, M_1(T_{b(t)})< 32\sigma^2\frac{\log(T)}{\Delta_i^2})]\label{e56}\\
		&\leq \Ex[\frac{\Prob(A_t=1|\Hp_{t-1})}{\Prob(\theta_1(t)>\frac{\mu_1+\mu_i}{2}|\Hp_{t-1})}\1(A_{t-1}\neq1, M_1(T_{b(t)})< 32\sigma^2\frac{\log(T)}{\Delta_i^2})]\label{e57}\\
		&=\Ex[\frac{1}{\Prob(\theta_1(t)>\frac{\mu_1+\mu_i}{2}|\Hp_{t-1})}\1(A_t=1,A_{t-1}\neq1, M_1(T_{b(t)})< 32\sigma^2\frac{\log(T)}{\Delta_i^2})]\label{e58},
	\end{align}
	where \eqref{e56} follows from moving terms measurable with respect to $\Hp_{t-1}$ out of the conditional expectation. The bound in \eqref{e55} leads to \eqref{e57}. Finally, using \eqref{e58} in \eqref{p6} shows  that
	\begin{align}
		&\Ex[\sum_{t=2}^{T}\1(A_t=i, A_{t-1}\neq1, \theta_i(t)\leq\frac{\mu_1+\mu_i}{2}, M_1(T_{b(t)})< 32\sigma^2\frac{\log(T)}{\Delta_i^2})]\nonumber\\
		&\leq \Ex[\sum_{t=2}^{T}\frac{1}{\Prob(\theta_1(t)>\frac{\mu_1+\mu_i}{2}|\Hp_{t-1})}\1(A_t=1,A_{t-1}\neq1, M_1(T_{b(t)})< 32\sigma^2\frac{\log(T)}{\Delta_i^2})]\label{e59}.
	\end{align}
	Note here that if $A_t=1$ and $A_{t-1}\neq 1$, then $t$ is either a cycle beginning or a cycle end point, which means that $t=\hat{\tau}_{1,j}$ for some $j$:
		\begin{align}
		\Prob(\theta_1(t)>\frac{\mu_1+\mu_i}{2}|\Hp_{t-1})&=Q\Bigg(\frac{\sqrt{1+M_1(T_{b(t)})}}{\sigma}\Big(\frac{\mu_1+\mu_i}{2}-\frac{S_1(T_{b(t)})}{1+M_1(T_{b(t)})}\Big)
		\Bigg)\label{e60}\\
		&=Q\Bigg(\frac{\sqrt{1+M_1(T_{b(\hat{\tau}_{1,j})})}}{\sigma}\Big(\frac{\mu_1+\mu_i}{2}-\frac{S_1(T_{b(\hat{\tau}_{1,j})})}{1+M_1(T_{b(\hat{\tau}_{1,j})})}\Big)
		\Bigg)\label{e61}
	\end{align}
	where the fact that conditioned on $\Hp_{t-1}$, $\theta_1(t)\sim\mathcal{N}(\frac{S_1(T_{b(t)})}{1+M_1(T_{b(t)})},\frac{\sigma^2}{1+M_1(T_{b(t)})})$ leads to \eqref{e60}. In addition, each $t$ that satisfies $\{A_t=1,A_{t-1}\neq1, M_1(T_{b(t)})< 32\sigma^2\frac{\log(T)}{\Delta_i^2}\}$ condition has to belong to a different cycle and the index $j$ can not be bigger than $\lceil32\alpha\sigma^2\frac{\log(T)}{\Delta_i^2}\rceil$. So, in view of \eqref{e59}, \eqref{e61} leads to
	\begin{align}
		&\Ex[\sum_{t=2}^{T}\1(A_t=i, A_{t-1}\neq1, \theta_i(t)\leq\frac{\mu_1+\mu_i}{2}, M_1(T_{b(t)})< 32\sigma^2\frac{\log(T)}{\Delta_i^2})]\nonumber\\
		&\leq\Ex\Bigg[\sum_{j=1}^{\lceil32\alpha\sigma^2\frac{\log(T)}{\Delta_i^2}\rceil}\1(\hat{\tau}_{1,j}<\infty)\frac{1}{Q\Bigg(\frac{\sqrt{1+M_1(T_{b(\hat{\tau}_{1,j})})}}{\sigma}\Big(\frac{\mu_1+\mu_i}{2}-\frac{S_1(T_{b(\hat{\tau}_{1,j})})}{1+M_1(T_{b(\hat{\tau}_{1,j})})}\Big)
			\Bigg)}\Bigg]\nonumber\\
		&\leq C(1+32\alpha\sigma^2\frac{\log(T)}{\Delta_i^2})\label{e62}.
	\end{align}
	Here the last inequality is the application of Proposition \ref{prop2}. However, this result finishes the proof of \eqref{r3} since the collection of bounds, \eqref{e23}, \eqref{e24}, \eqref{e25}, \eqref{e26}, \eqref{e52}, \eqref{e62}, prove that
	\begin{equation*}
		\Ex[N_i(T)]\leq 6+64\alpha\sigma^2\frac{\log(T)}{\Delta_i^2}+C(4+1120\alpha\sigma^2\frac{\log(T)}{\Delta_i^2}).
	\end{equation*}

\end{document}